\documentclass{article} % For LaTeX2e
\usepackage{iclr2022_conference,times}

% Optional math commands from https://github.com/goodfeli/dlbook_notation.
%%%%% NEW MATH DEFINITIONS %%%%%

\usepackage{amsmath,amsfonts,bm}

% Mark sections of captions for referring to divisions of figures

% Highlight a newly defined term

% Figure reference, lower-case.

% Figure reference, capital. For start of sentence

% Section reference, lower-case.

% Section reference, capital.

% Reference to two sections.

% Reference to three sections.

% Reference to an equation, lower-case.
\def\eqref#1{equation~\ref{#1}}
% Reference to an equation, upper case

% A raw reference to an equation---avoid using if possible

% Reference to a chapter, lower-case.

% Reference to an equation, upper case.

% Reference to a range of chapters

% Reference to an algorithm, lower-case.

% Reference to an algorithm, upper case.

% Reference to a part, lower case

% Reference to a part, upper case

\def\1{\bm{1}}

% Random variables

% rm is already a command, just don't name any random variables m

% Random vectors

% Elements of random vectors

% Random matrices

% Elements of random matrices

% Vectors

% Elements of vectors

% Matrix

% Tensor
\DeclareMathAlphabet{\mathsfit}{\encodingdefault}{\sfdefault}{m}{sl}
\SetMathAlphabet{\mathsfit}{bold}{\encodingdefault}{\sfdefault}{bx}{n}

% Graph

% Sets

% Don't use a set called E, because this would be the same as our symbol
% for expectation.

% Entries of a matrix

% entries of a tensor
% Same font as tensor, without \bm wrapper

% The true underlying data generating distribution

% The empirical distribution defined by the training set

% The model distribution

% Stochastic autoencoder distributions

 % Laplace distribution

\newcommand{\softmax}{\mathrm{softmax}}

% Wolfram Mathworld says $L^2$ is for function spaces and $\ell^2$ is for vectors
% But then they seem to use $L^2$ for vectors throughout the site, and so does
% wikipedia.

 % See usage in notation.tex. Chosen to match Daphne's book.

\usepackage{hyperref}
\usepackage{url}

\usepackage[utf8]{inputenc} % allow utf-8 input
\usepackage[T1]{fontenc}    % use 8-bit T1 fonts
\usepackage{booktabs}       % professional-quality tables
\usepackage{amsfonts}       % blackboard math symbols
\usepackage{nicefrac}       % compact symbols for 1/2, etc.
\usepackage{microtype}      % microtypography
\usepackage{colortbl,booktabs}
\usepackage{environ}
\usepackage{multicol}
\usepackage{multirow}
\usepackage{amsthm}
\usepackage{graphicx} 
\usepackage{tikz}
\usepackage{amsmath}
\usepackage{wrapfig}
\usepackage{makecell}
\usepackage{diagbox}
\usepackage{pifont}
\usepackage{diagbox} 
\usepackage{makecell}

\usepackage{caption}
\usepackage{subcaption}

% \usepackage{chngcntr}
% \usepackage{apptools}
% \AtAppendix{\counterwithin{theorem}{section}}

\newtheorem{definition}{Definition}

\newtheorem{theorem}{Theorem}
\newtheorem{lemma}[theorem]{Lemma}
\newtheorem{corollary}[theorem]{Corollary}
\newtheorem{proposition}[theorem]{Proposition}

\newenvironment{customthm}[1]
  {\innercustomthm}
  {\endinnercustomthm}

\newenvironment{customcor}[1]
  {\innercustomcor}
  {\endinnercustomcor}
 
\newcommand{\Mat}{\boldsymbol}

\newcommand{\real}{\mathbb{R}}
\newcommand{\complex}{\mathbb{C}}
\newcommand{\FT}{\mathcal{F}}
\newcommand{\IFT}{\mathcal{F}^{-1}}

\DeclareMathOperator{\diag}{diag}

\DeclareMathOperator{\trace}{Tr}
\DeclareMathOperator{\Span}{span}
\DeclareMathOperator{\Lips}{Lips}
\newcommand{\DC}[1]{\mathcal{DC}\left[ {#1} \right]}
\newcommand{\HC}[1]{\mathcal{HC}\left[ {#1} \right]}

\DeclareMathOperator{\SA}{SA}

\DeclareMathOperator{\MSA}{MSA}
\DeclareMathOperator{\FFN}{FFN}

\newcommand{\revision}[1]{#1}

% \title{Scaling the Depth of Vision Transformers via the Fourier Domain Analysis}
\title{Anti-Oversmoothing in Deep Vision Transformers via the Fourier Domain Analysis: From Theory to Practice}

% Authors must not appear in the submitted version. They should be hidden
% as long as the \iclrfinalcopy macro remains commented out below.
% Non-anonymous submissions will be rejected without review.

\author{Peihao Wang, Wenqing Zheng, Tianlong Chen \& Zhangyang Wang \\
Department of Electrical and Computer Engineering, The University of Texas at Austin\\
\texttt{\{peihaowang,w.zheng,tianlong.chen,atlaswang\}@utexas.edu}}
% \texttt{\{hippo,brain,jen\}@cs.cranberry-lemon.edu} \\
% \And
% Ji Q. Ren \& Yevgeny LeNet \\
% Department of Computational Neuroscience \\
% University of the Witwatersrand \\
% Joburg, South Africa \\
% \texttt{\{robot,net\}@wits.ac.za} \\
% \AND
% Coauthor \\
% Affiliation \\
% Address \\
% \texttt{email}
% }

% The \author macro works with any number of authors. There are two commands
% used to separate the names and addresses of multiple authors: \And and \AND.
%
% Using \And between authors leaves it to \LaTeX{} to determine where to break
% the lines. Using \AND forces a linebreak at that point. So, if \LaTeX{}
% puts 3 of 4 authors names on the first line, and the last on the second
% line, try using \AND instead of \And before the third author name.

\iclrfinalcopy % Uncomment for camera-ready version, but NOT for submission.
\begin{document}

\maketitle

\begin{abstract}
Vision Transformer (ViT) has recently demonstrated promise in computer vision problems. 
However, unlike Convolutional Neural Networks (CNN), it is known that the performance of ViT saturates quickly with depth increasing, due to the observed attention collapse or patch uniformity. Despite a couple of empirical solutions, a rigorous framework studying on this scalability issue remains elusive. 
In this paper, we first establish a  \textbf{rigorous theory framework} to analyze ViT features from the Fourier spectrum domain. We show that the self-attention mechanism inherently amounts to a low-pass filter, which indicates when ViT scales up its depth, excessive low-pass filtering will cause feature maps to only preserve their Direct-Current (DC) component.
We then propose two straightforward yet effective techniques to mitigate the undesirable low-pass limitation.
The first technique, termed \textbf{AttnScale}, decomposes a self-attention block into low-pass and high-pass components, then rescales and combines these two filters to produce an all-pass self-attention matrix.
The second technique, termed \textbf{FeatScale}, re-weights feature maps on separate frequency bands to amplify the high-frequency signals.
Both techniques are efficient and hyperparameter-free, while effectively overcoming relevant ViT training artifacts such as attention collapse and patch uniformity.  %caused by low-pass filtering.
By seamlessly plugging in our techniques to multiple ViT variants, we demonstrate that they consistently help ViTs benefit from deeper architectures, bringing \textbf{up to 1.1\% performance gains ``for free''} (e.g., with little parameter overhead). 
% All our codes and pre-trained models is publicly released upon acceptance.
We publicly release our codes and pre-trained models at \href{https://github.com/VITA-Group/ViT-Anti-Oversmoothing}{\texttt{https://github.com/VITA-Group/ViT-Anti-Oversmoothing}}.
\end{abstract}

\section{Introduction} \label{sec:intro}\vspace{-0.5em}

Transformers have achieved phenomenal success in Natural Language Processing (NLP) \citep{vaswani2017attention, devlin2018bert, dai2019transformer, brown2020language}, and recently in a wide range of computer vision applications too \citep{dosovitskiy2020image, liu2021swin, arnab2021vivit, carion2020end, jiang2021transgan}. One representative advance, the Vision Transformer (ViT) \citep{dosovitskiy2020image}, stacks Multi-head Self-Attention (MSA) blocks, by treating each local image patch as semantic tokens and modeling their interactions globally. Unlike Convolutional Neural Networks (CNNs) that hierarchically enlarge the receptive from local to global, even a shallow ViT is able to effectively capture the global contexts, leading to their very competitive performance on image classification and other tasks \citep{liu2021swin,jiang2021transgan}.

%and reach comparable performance with deep Convolutional Neural Networks (CNN) on the image classification tasks or else.

%\wenqing{ first paragraph modified as below}Transformers have achieved phenomenal successes across multidisciplinary tasks in deep learning\citep{vaswani2017attention, devlin2018bert, dai2019transformer, brown2020language,ying2021transformers, jumper2021highly,   dosovitskiy2020image,}. Especially, Vision Transformers (ViT) \citep{dosovitskiy2020image,}(maybe also cite deit and others; in this paper when we refer to ViTs, we are talking about one type of transformers not just the original ViT; change all related sentences from singular to plural) have demonstrated comparable performances with deep Convolutional Neural Networks (CNNs) on image classification task, due to their efficient architecture designs such as Multi-head Self-Attention (MSA), residual connections, and a few others that allow global information routing across different patches while improve trainablity through appropriate feature re-scaling.

Going deep has always been a trend in deep learning \citep{lecun2015deep, krizhevsky2012imagenet}, and ViT was expected to make no exception. One might reasonably conjecture that a deeper ViT with more MSA blocks significantly outperform its shallower baseline. Unfortunately, building deeper ViTs face practical challenges. Empirically, \citet{zhou2021deepvit} shows a vanilla ViT of 32 layers under-performs the 24-layer one. \citet{gong2021vision} demonstrates a downgraded patch diversity in deeper layers, and \citet{dong2021attention} mathematically reveals the rank collapse phenomenon when Transformer goes deeper. Despite efforts towards deep ViT through patch diversification \citep{gong2021vision, zhou2021refiner}, rank collapse alleviation \citep{zhou2021deepvit, zhangorthogonality}, and training stabilization \citep{touvron2021going, zhang2019improving}, most of them are restricted to empirical studies. Rethinking the problem with deep ViT from a more principled angle pends further efforts.

% (\textcolor{red}{I intend to removing this paragraph since I don't understand your logic.} Meanwhile, recent works incorporate shifted-window \citep{liu2021swin} and multi-scale patch representation \citep{zhang2020feature} into ViT to make MSA block locality-aware and computationally efficient. These improvements empower ViT for in a wider range of applications beyond image classification. However, the multi-scale attention mechanism requires ViT to have sufficiently large  receptive field besides fine local resolution. In this sense, building a deeper ViT becomes imperative to boost and broaden ViT's performance further. \textcolor{red}{I do not understand this reasoning. Why multi-scale -> sufficiently large receptive? Then, if a shallow ViT already has global receptive field, why large respective -> deeper?})

In this paper, we present the first rigorous analysis of stacking self-attention mechanism in the Fourier space. We mathematically show that cascading self-attention blocks is equivalent to repeatedly applying a low-pass filter, regardless of the input key or query tokens (Section \ref{sec:low_pass}). As a consequence, going deeper with vanilla ViT blocks only preserves Direct Component (DC) of the signal at the output layer. This theoretical finding explains the observations of patch uniformity and rank collapse, and is also inherently related to the over-smoothing phenomenon in Graph Convolutional Networks (GCNs) \citep{kipf2016semi, nt2019revisiting, oono2019graph, cai2020note}. Moreover, we also reveal the role of other transformer modules (e.g., MLP and residual connection) in preventing this undesirable low-pass filtering (Section \ref{sec:existing_mechanism}).
%which was empirically observed by \textcolor{red}{cite}

Built on the aforementioned analysis framework in the Fourier domain, we propose two novel techniques, to mitigate the low-pass filtering effect of self-attention and effectively scale up the depth of ViTs. The first technique, termed \textit{Attention Scaling} (\textbf{AttnScale}), directly manipulates on the calculated attention map to enforce an all-pass filter (Section \ref{sec:attn_scale}). It decomposes the self-attention matrix into a low-pass filter plus a high-pass filter, then adopts a learnable weight to adaptively amplify the effect of high-pass filter. The second technique, termed \textit{Feature Scaling} (\textbf{FeatScale}), hinges on feature maps to re-weight different frequency bands separately (Section \ref{sec:feat_scale}). It employs trainable coefficients to re-mix the DC and high-frequency components, hence selectively enhancing the high-frequency portion of the MSA output. Both AttnScale and FeatScale are extremely memory and computationally friendly. Neither runs Fourier transformation explicitly, bringing little extra complexity to the original ViTs. 

%FeatScale is a passive counteracting mechansim designed on residual connection. Instead of mixing all the input with the output, FeatScale fuses the skip information on different frequency band separately. Specifically, FeatScale divides the output signal into DC component and high-frequency component, then adopts two trainable parameters to scale and mix the DC and high-frequency components with the skip information, respectively. By enhancing the high-frequency features, FeatScale makes the DC no more dominate the output features, and selectively retrieve lost high-frequency information from skip connection.

%which implies they do not bring more time complexity to the original ViT. We show that extracting a low-pass filter from attention matrix, and computing the DC of signals is as simple as constructing an all-one matrix and taking row-wise averaging.
Our contributions can be summarized as follows:
\begin{itemize}
    \item We establish the first rigorous theoretical analysis of ViT from the spectral domain. We characterize the low-pass filtering effect of cascading MSAs, which connects to the recent empirical findings of ViT patch diversity loss or rank collapse. 
   % Our paper is the first work that examines ViT on the spectral domain. By investigating the dynamics of patch signals on the Fourier space, we show that attention is constantly a low-pass filter.   
    \item We present two  theoretically grounded Fourier-domain scaling techniques, named AttnScale and FeatScale. They operating on re-adjusting the low- and high-frequency components of the attention maps and feature maps, respectively. Both are efficient, hyperparameter-free, easy-to-use, and able to generalize across different ViT variants.
 %   AttnScale alleviates the low-pass filtering bias of the attention matrix, while FeatScale improves the residual connection \citep{he2016deep} to adaptively retrieve lost high-frequency features. Both techniques can balance the low-frequency and high-frequency components of MSA's output signal, thus relieve patch uniformity and rank collapse artifacts.
    \item We conduct extensive experiments by integrating AttnScale and FeatScale with different ViT backbones. Both of our approaches substantially boost DeiT, CaiT, and Swin-Transformer with up to $1.1\%$, $0.6\%$ and $0.5\%$ performance gains, without whistles and bells.
    
    % \item By plugging in our proposed AttnScale and FeatScale, we build a deep ViT with 24 layers, which achieves xxx\% performance on the image classification without whistles and bells. We also find our techniques generalize across different ViT architecture variants.
\end{itemize}

\section{Why ViT Cannot Go Deeper?} \label{sec:why}\vspace{-0.5em}

\subsection{Notation and Preliminaries} \label{sec:prelim}\vspace{-0.5em}

% \begin{wrapfigure}{r}{0.22\linewidth}
% \vspace{-6mm}
% \centering
% \includegraphics[width=0.2\textwidth]{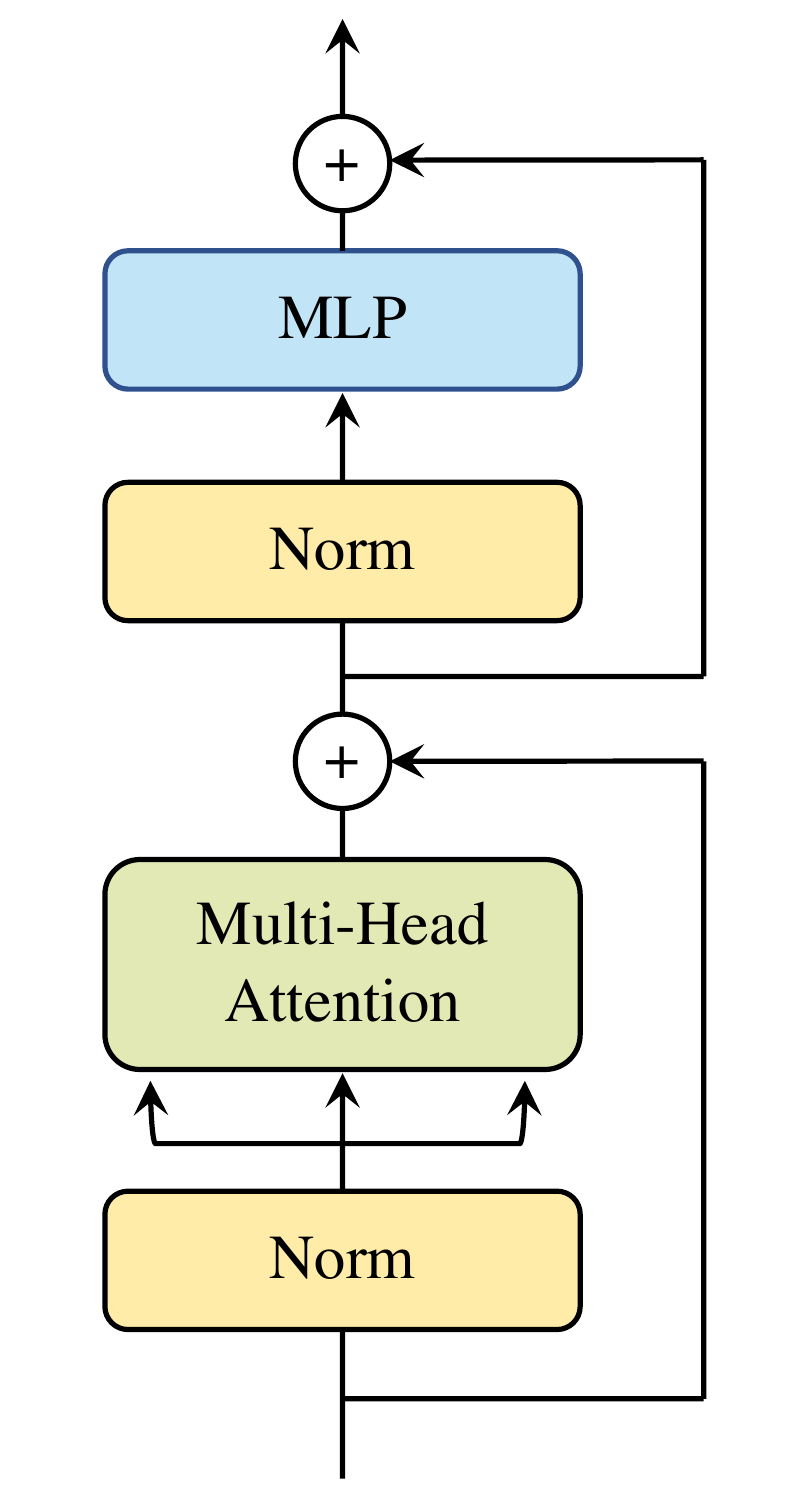}
% \vspace{-2mm}
% \caption{Illustration of a standard ViT block.}
% \label{fig:vit_block}
% \vspace{-8mm}
% \end{wrapfigure}

We begin by introducing our notations. Let $\Mat{X} \in \real^{n \times d}$ denote the feature matrix, where $n$ is the number of samples, and $d$ is the feature dimension. Let $\Mat{x}_i \in \real^d, \forall i = 1, \cdots, n$, the $i$-th row of $\Mat{X}$, denote the feature vector of the $i$-th sample, and $\Mat{z}_j \in \real^n, \forall j = 1, \cdots, d$, the $j$-th column of $\Mat{X}$, represent signals of the $j$-th channel. In the context of ViT, $\Mat{X}$ denotes a set (sequence) of image patches, $\Mat{x}_i$ denotes the flatten version of the $i$-th patch embedding ($d = \text{patch width} \times \text{patch height})$, and $\Mat{z}_j$ denotes a whole image signal of the $j$-th channel.

\paragraph{Transformer Architecture} Vision Transformer (ViT) consists of three main components: a patch embedding and position encoding part, a stack of transformer encoder block with Multi-Head Self-Attention (MSA) and Feed-Forward Network (FFN), and a score readout function for image classification.
We depict a transformer block in Fig. \ref{fig:vit_block}.
The key ingredient here is the Self-Attention (SA) module, which takes in the token representation of the last layer, and encodes each image token by aggregating information from other patches with respect to the computed attention value, formulated as below \citep{vaswani2017attention}:
\begin{align} \label{eqn:sa}
    \SA(\Mat{X}) = \softmax\left( \frac{\Mat{X}\Mat{W}_Q (\Mat{X}\Mat{W}_K)^T}{\sqrt{d}} \right) \Mat{X} \Mat{W}_V,
\end{align}
where $\Mat{W}_K \in \real^{d \times d_k}, \Mat{W}_Q \in \real^{d \times d_q}, \Mat{W}_V \in \real^{d \times d}$ are the key, query, and value weight matrices, respectively, $\sqrt{d}$ here denotes a scaling factor, and $\softmax(\cdot)$ operates on $\Mat{X}$ row-wisely. Multi-Head Self-Attention (MSA) involves a group of SA heads and combines their outputs through a linear projection \citep{vaswani2017attention}:
\begin{align} \label{eqn:mha}
    \MSA(\Mat{X}) = \begin{bmatrix}
    \SA_1(\Mat{X}) & \cdots & \SA_H(\Mat{X})
    \end{bmatrix} \Mat{W}_O,
\end{align}
where the subscripts denote the SA head number, $H$ is the total number of SA heads, and $\Mat{W}_O \in \real^{Hd \times d}$ projects multi-head outputs to the hidden dimension. Besides MSA module, each transformer block is equipped with a normalization layer, feed-forward network, and skip connections to cooperate with MSA. Formally, a transformer block can be written as follows:
\begin{align} \label{eqn:transformer_block}
    \Mat{X}' = \MSA(\operatorname{LayerNorm}(\Mat{X})) + \Mat{X}, \\
    \Mat{Y} = \FFN(\operatorname{LayerNorm}(\Mat{X}')) + \Mat{X}'.
\end{align}

\paragraph{Fourier Analysis} The main analytic tool in this paper is Fourier transform. Denote $\FT: \real^n \rightarrow \complex^n$ be the Discrete Fourier Transform (DFT) with the Inverse Discrete Fourier Transform (IFT) $\IFT: \complex^n \rightarrow \real^n$. Applying $\FT$ to a flatten image signal $\Mat{x}$ is equivalent to left multiplying a DFT matrix, whose rows are the Fourier basis $\Mat{f}_k = \begin{bmatrix} e^{2 \pi \mathrm{j}(k-1) \cdot 0} & \cdots & e^{2\pi \mathrm{j} (k-1) \cdot (n-1)} \end{bmatrix}^T \big/ \sqrt{n} \in \real^n$, where $k$ denotes the $k$-th row of DFT matrix, and $\mathrm{j}$ is the imaginary unit.
Let $\Mat{\tilde{z}} = \FT\Mat{z}$ be the spectrum of $\Mat{z}$, and $\Mat{\tilde{z}}_{dc} \in \complex$, $\Mat{\tilde{z}}_{hc} \in \complex^{n-1}$ take the first element and the rest elements of $\Mat{\tilde{z}}$, respectively.
Define $\DC{\Mat{z}} = \Mat{\tilde{z}}_{dc} \Mat{f}_1 \in \complex^n$ as the Direct-Current (DC) component of signal $\Mat{z}$, and $\HC{\Mat{z}} = \begin{bmatrix} \Mat{f}_2 & \cdots & \Mat{f}_n \end{bmatrix} \Mat{\tilde{z}}_{hc} \in \complex^{n}$ the complementary high-frequency component.

In signal processing, a low-pass filter is a system that suppresses the high-frequency component of signals and retain the low-frequency component.
In this paper, we refer to low-pass filter as a particular type of filters that only preserve the DC component, while diminishing the remaining high-frequency component. To be more precise, we define low-pass filters in Definition \ref{dfn:low_pass_filter}.
\begin{definition} \label{dfn:low_pass_filter}
Given an endomorphism $f: \real^n \rightarrow \real^n$ with $f^t$ denoting applying $f$ for $t$ times, $f$ is a low-pass filter if and only if for all $\Mat{z} \in \real^{n}$:
\begin{align}
\lim_{t \rightarrow \infty} \frac{\lVert \HC{f^t(\Mat{z})} \rVert_2}{\lVert \DC{f^t(\Mat{z})} \rVert_2} = 0.
\end{align}
\end{definition}
Definition \ref{dfn:low_pass_filter} reveals the nature of low-pass filters: they will produce a dominant response on DC component, while imposing an inhibition effect on the high-frequency band. We refer interested readers to Appendix \ref{sec:more_fourier} for more useful backgrounds.

\subsection{Self-Attention Is A Low-Pass Filter} \label{sec:low_pass}\vspace{-0.5em}

\begin{figure}[t!]
\begin{center}
%\framebox[4.0in]{$\;$}
\includegraphics[width=\textwidth]{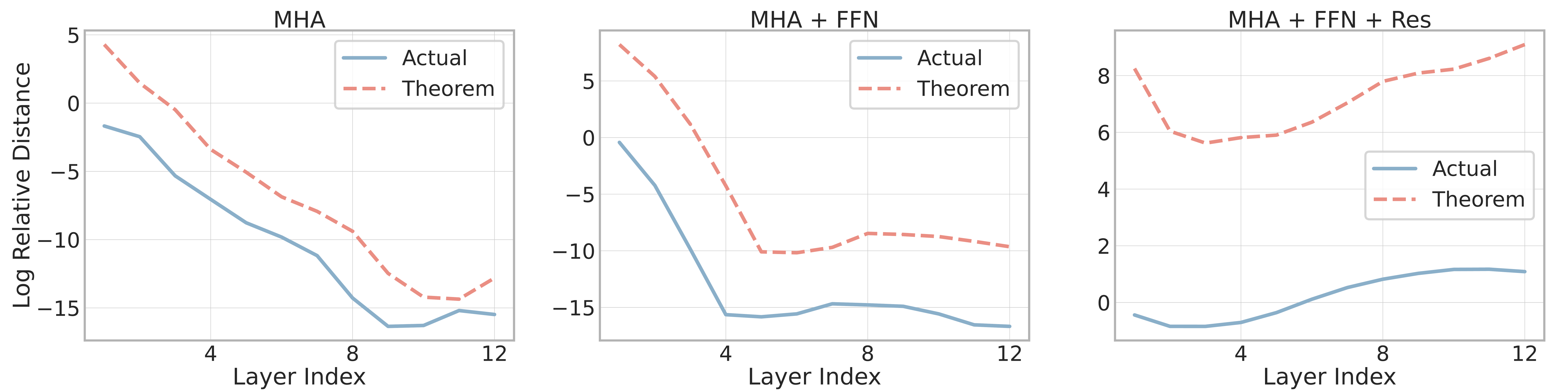}
\end{center}
\vspace{-2mm}
\caption{Visualize the intensity of high-frequency component and their theoretical upper bounds under different transformer blocks. The blue line is defined by $\log (\lVert\HC{\Mat{X}_l}\rVert_F / \lVert\HC{\Mat{X}_0}\rVert_F)$, and the red line is estimated using the results in Section \ref{sec:low_pass} \& \ref{sec:existing_mechanism}. See details in Appendix \ref{sec:vis_upper_bound}.}
\label{fig:upper_bound}
\vspace{-4mm}
\end{figure}

In this subsection, we will give theoretical justification on self-attention in terms of its spectral-domain effect. Our main result is that self-attention is constantly a low-pass filter, which continuously erases high-frequency information, thus causing ViT to lose features expressiveness at deep layers.

Formally, we have the following theorem that shows attention matrix produced by a softmax function (e.g, Eqn. \ref{eqn:sa}) is a low-pass filter independent of the input token features or key/query matrices.

\begin{theorem} \label{thm:sa_low_pass}
(SA matrix is a low-pass filter)
Let $\Mat{A} = \softmax(\Mat{P})$, where $\Mat{P} \in \real^{n \times n}$. Then $\Mat{A}$ must be a low-pass filter. For all $\Mat{z} \in \real^{n}$, $\lim_{t \rightarrow \infty} \lVert \HC{\Mat{A}^t\Mat{z}} \rVert_2 / \lVert \DC{\Mat{A}^t \Mat{z}} \rVert_2 = 0$.
% \begin{align} \label{eqn:low_pass}
% \lim_{t \rightarrow \infty} \frac{\lVert \HC{\Mat{A}^t\Mat{z}} \rVert_2}{\lVert \DC{\Mat{A}^t \Mat{z}} \rVert_2} = 0.
% \end{align}
\end{theorem}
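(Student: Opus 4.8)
The plan is to reduce the statement to a spectral fact about row-stochastic matrices. Observe first that $\Mat{A}=\softmax(\Mat{P})$ has entries $A_{ij}=e^{P_{ij}}/\sum_{k}e^{P_{ik}}$, so every entry is strictly positive and every row sums to one; equivalently, $\Mat{A}$ is a positive row-stochastic matrix with $\Mat{A}\Mat{1}=\Mat{1}$. A positive matrix is irreducible and aperiodic, hence primitive, so the Perron--Frobenius theorem applies: $1$ is a simple eigenvalue whose (right) eigenspace is $\Span\{\Mat{1}\}$, there is a strictly positive left eigenvector $\boldsymbol{\pi}$ with $\boldsymbol{\pi}^{T}\Mat{1}=1$, and every other eigenvalue has modulus strictly below $1$. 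The consequence I would invoke is the convergence of powers, $\Mat{A}^{t}\to\Mat{1}\boldsymbol{\pi}^{T}$ as $t\to\infty$ (geometrically, at a rate governed by the second-largest eigenvalue modulus).

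Next I would translate the Fourier splitting into this language. Since $\Mat{f}_{1}=\Mat{1}/\sqrt{n}$, the operator $\DC{\cdot}$ acting on a real vector is exactly the orthogonal projection onto $\Span\{\Mat{1}\}$, namely $\DC{\Mat{z}}=\bar{z}\,\Mat{1}$ with $\bar{z}=\tfrac{1}{n}\Mat{1}^{T}\Mat{z}$, while $\HC{\cdot}$ is the complementary orthogonal projection onto $\Mat{1}^{\perp}$. In particular $\lVert\DC{\Mat{w}}\rVert_{2}=\lvert\Mat{1}^{T}\Mat{w}\rvert/\sqrt{n}$, and the limiting matrix sends every $\Mat{z}$ to the pure-DC signal $\Mat{1}\boldsymbol{\pi}^{T}\Mat{z}=(\boldsymbol{\pi}^{T}\Mat{z})\Mat{1}$, so $\HC{\Mat{1}\boldsymbol{\pi}^{T}\Mat{z}}=\Mat{0}$.

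These two facts settle the generic case. For the numerator, using that $\HC{\cdot}$ is a norm-nonincreasing projection that annihilates $\Mat{1}\boldsymbol{\pi}^{T}\Mat{z}$, $\lVert\HC{\Mat{A}^{t}\Mat{z}}\rVert_{2}=\lVert\HC{(\Mat{A}^{t}-\Mat{1}\boldsymbol{\pi}^{T})\Mat{z}}\rVert_{2}\le\lVert\Mat{A}^{t}-\Mat{1}\boldsymbol{\pi}^{T}\rVert_{2}\,\lVert\Mat{z}\rVert_{2}\to0$. For the denominator, $\Mat{A}^{t}\Mat{z}\to(\boldsymbol{\pi}^{T}\Mat{z})\Mat{1}$ gives $\lVert\DC{\Mat{A}^{t}\Mat{z}}\rVert_{2}=\lvert\Mat{1}^{T}\Mat{A}^{t}\Mat{z}\rvert/\sqrt{n}\to\sqrt{n}\,\lvert\boldsymbol{\pi}^{T}\Mat{z}\rvert$. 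Dividing, $\lVert\HC{\Mat{A}^{t}\Mat{z}}\rVert_{2}/\lVert\DC{\Mat{A}^{t}\Mat{z}}\rVert_{2}\to0$ whenever $\boldsymbol{\pi}^{T}\Mat{z}\neq0$, i.e. whenever the forward orbit does not collapse to the origin.

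I expect the real obstacle to be the borderline set $\{\Mat{z}:\boldsymbol{\pi}^{T}\Mat{z}=0\}$ — the span of the sub-dominant (generalized) eigenvectors of $\Mat{A}$ — on which $\Mat{A}^{t}\Mat{z}\to\Mat{0}$ and both numerator and denominator vanish, so the crude bound above is silent. To handle it I would restrict $\Mat{A}$ to its invariant subspace $\Mat{1}^{\perp}$: in the splitting $\real^{n}=\Span\{\Mat{1}\}\oplus\Mat{1}^{\perp}$, $\Mat{A}$ is block upper-triangular with diagonal blocks $1$ and some $\Mat{M}$ satisfying $\rho(\Mat{M})<1$, so $\HC{\Mat{A}^{t}\Mat{z}}=\Mat{M}^{t}\HC{\Mat{z}}$ decays geometrically by Gelfand's formula. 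The remaining subtlety is to show the DC coefficient of $\Mat{A}^{t}\Mat{z}$ does not decay strictly faster than $\lVert\Mat{M}^{t}\HC{\Mat{z}}\rVert_{2}$, which requires tracking the off-diagonal block coupling the two subspaces; absent that, one simply notes the bad set is a single hyperplane and the statement is meant for generic $\Mat{z}$. Everything else — positivity and row-stochasticity of $\softmax(\Mat{P})$, the Perron--Frobenius input, and the two norm estimates — is routine.
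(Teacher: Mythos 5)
Your argument is essentially the paper's own: the paper likewise reduces the claim to Perron--Frobenius for the positive row-stochastic matrix $\Mat{A}$, concluding $\lim_{t\to\infty}\Mat{A}^t\Mat{z}=\Mat{1}\Mat{u}_1^T\Mat{z}$ (its left eigenvector $\Mat{u}_1$ is your $\boldsymbol{\pi}$), only it extracts this from the Jordan canonical form rather than from primitivity and geometric power convergence, and then notes that $\HC{\cdot}$ annihilates the rank-one limit. So on $\{\boldsymbol{\pi}^T\Mat{z}\neq 0\}$ your proof is complete and matches the paper's. The obstacle you flag on the hyperplane $\boldsymbol{\pi}^T\Mat{z}=0$ is genuine, and you should not expect to close it by tracking the off-diagonal coupling, because there the statement as written can actually fail: for $n=2$, $\Mat{A}=\begin{pmatrix}0.6 & 0.4\\ 0.2 & 0.8\end{pmatrix}$ and $\Mat{z}=(2,-1)^T$ (the eigenvector for $\lambda_2=0.4$, with $\boldsymbol{\pi}^T\Mat{z}=0$ but $\Mat{1}^T\Mat{z}\neq 0$) give $\Mat{A}^t\Mat{z}=0.4^t\Mat{z}$, so $\lVert\HC{\Mat{A}^t\Mat{z}}\rVert_2/\lVert\DC{\Mat{A}^t\Mat{z}}\rVert_2=3$ for every $t$. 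The paper's proof silently assumes the generic case: replacing the ratio along the orbit by the ratio evaluated at the limit $\Mat{1}\Mat{u}_1^T\Mat{z}$ is only legitimate when the limiting denominator $\sqrt{n}\,\lvert\Mat{u}_1^T\Mat{z}\rvert$ is nonzero (and its displayed computation also writes $\Mat{z}$ where $\Mat{A}^t\Mat{z}$ is meant in the decomposition $\DC{\Mat{A}^t\Mat{z}}=\Mat{A}^t\Mat{z}-\HC{\Mat{A}^t\Mat{z}}$). So read the theorem as holding for all $\Mat{z}$ with $\Mat{u}_1^T\Mat{z}\neq 0$, i.e.\ generic $\Mat{z}$; your treatment is as complete as that allows. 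One small correction to your wording: $\Mat{1}^\perp$ is not $\Mat{A}$-invariant unless $\Mat{A}$ is doubly stochastic; what your block-triangular step really uses is the compression $\Mat{M}=(\Mat{I}-\tfrac1n\Mat{1}\Mat{1}^T)\Mat{A}$ acting on $\Mat{1}^\perp$, which is legitimate because $\Span(\Mat{1})$ is invariant, and it does give $\HC{\Mat{A}^t\Mat{z}}=\Mat{M}^t\HC{\Mat{z}}$ with $\rho(\Mat{M})=\lvert\lambda_2\rvert<1$.
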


\begin{wrapfigure}{r}{0.4\linewidth}
\centering
\centering
\includegraphics[width=0.4\textwidth]{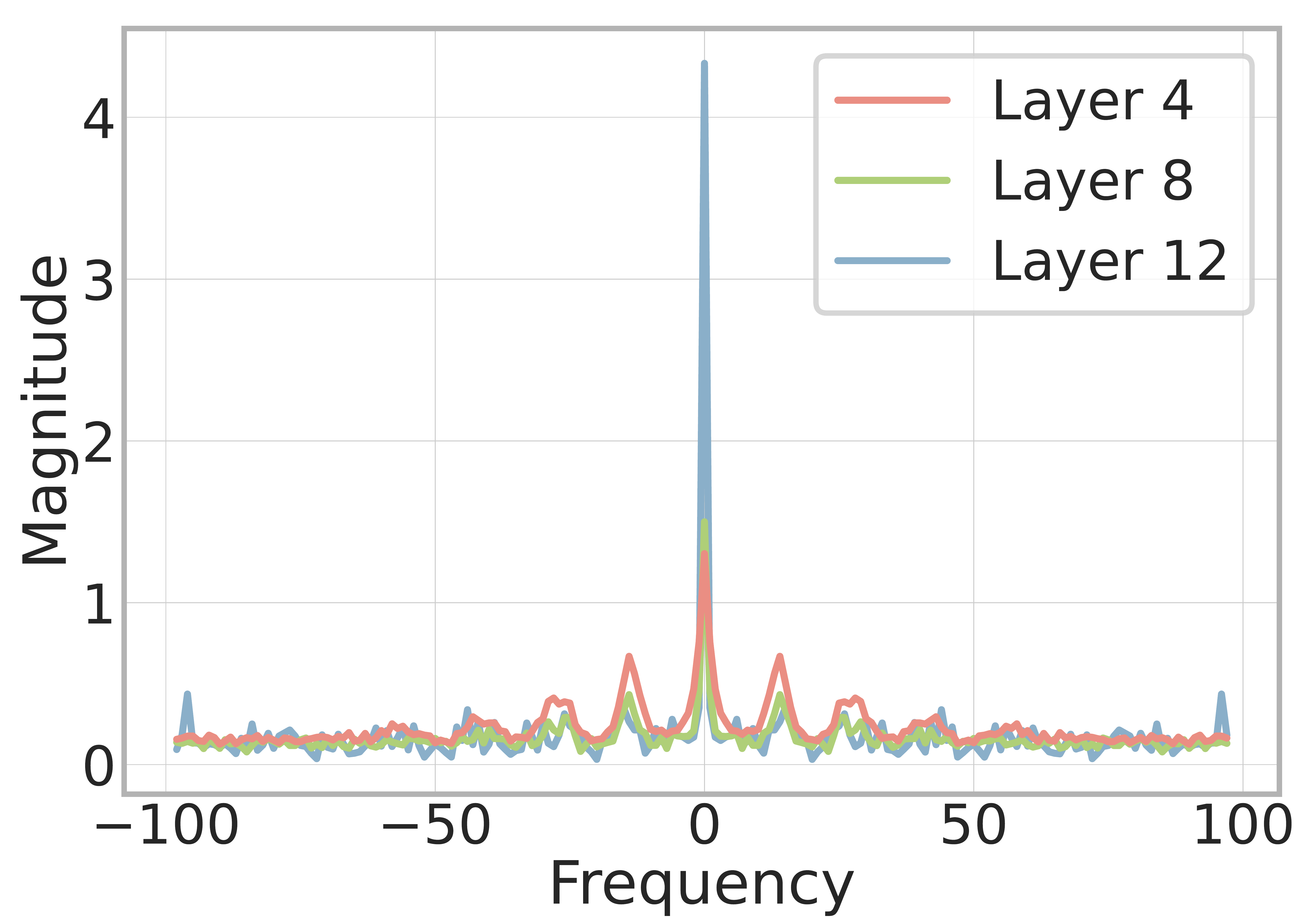}
\vspace{-7mm}
\caption{Visualize the spectral response of an attention map. We randomly pick a sample and depict its first head of $4/8/12$-th layer. Refer to Appendix \ref{sec:vis_spectrum}.}
\label{fig:spectrum}
\vspace{-9mm}
\end{wrapfigure}
Theorem \ref{thm:sa_low_pass} is a straightforward result of Perron-Frobenius theorem. See Appendix \ref{prf:sa_low_pass} for a proof. Theorem \ref{thm:sa_low_pass} also reveals that no matter how attention is computed inside the softmax function, including dot product \citep{vaswani2017attention}, linear combination \citep{velivckovic2017graph}, or L2 distance \citep{kim2021lipschitz}, the resulting attention matrix is always a low-pass filter.
One can see consecutively applying self-attention matrix simulates the process of ViT's forward propagation. As the layer number increases infinitely, the final output will only keep the DC bias, and ViT loses all the feature expressive power.
% \begin{corollary} \label{cor:sa_low_pass}
% Let $\Mat{P}_1, \Mat{P}_2, \cdots, \Mat{P}_n$ be a sequence of matrix in $\real^{n \times n}$, and each $\Mat{P}_k, \forall k=1,\cdots,n$ has $\Mat{A}_k = \softmax(\Mat{P}_k)$. Then $\lim_{n \rightarrow \infty} \lVert \HC{\prod_{k=1}^{n} \Mat{A}_k \Mat{x}} \rVert_2 / \lVert \DC{\prod_{k=1}^{n} \Mat{A}_k \Mat{x}} \rVert_2 = 0$.
% \end{corollary}
\begin{corollary} \label{cor:sa_low_pass}
Let $\Mat{P}_1, \Mat{P}_2, \cdots, \Mat{P}_n$ be a sequence of matrix in $\real^{n \times n}$, and each $\Mat{P}_k, \forall k=1,\cdots,L$ has $\Mat{A}_k = \softmax(\Mat{P}_k)$. Then $\prod_{k=1}^{L} \Mat{A}_k$ is also a low-pass filter.
\end{corollary}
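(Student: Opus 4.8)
The plan is to reduce Corollary \ref{cor:sa_low_pass} to the single-matrix case handled in Theorem \ref{thm:sa_low_pass} by understanding what structural property of a softmax-output matrix is actually responsible for the low-pass behavior, and then showing that this property is preserved under products. Each $\Mat{A}_k = \softmax(\Mat{P}_k)$ is a \emph{row-stochastic} matrix with \emph{strictly positive} entries, hence $\Mat{A}_k \vone = \vone$, i.e. $\vone$ (which spans exactly the DC subspace, since $\Mat{f}_1 \propto \vone$) is a right eigenvector with eigenvalue $1$, and by Perron--Frobenius this is the unique eigenvalue of maximum modulus for each individual $\Mat{A}_k$. The key observation is that the product $\Mat{B}_L := \prod_{k=1}^{L} \Mat{A}_k$ is again row-stochastic with strictly positive entries (a product of nonnegative matrices is nonnegative; a product of row-stochastic matrices is row-stochastic because $\Mat{B}_L \vone = \vone$; and positivity is preserved since multiplying a strictly positive matrix by a stochastic matrix whose rows are not identically zero keeps all entries strictly positive). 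Therefore $\Mat{B}_L$ is itself of the form ``a row-stochastic positive matrix'' — precisely the class of matrices to which Theorem \ref{thm:sa_low_pass} applies — and so $\lim_{t\to\infty} \lVert \HC{\Mat{B}_L^t \Mat{z}} \rVert_2 / \lVert \DC{\Mat{B}_L^t \Mat{z}} \rVert_2 = 0$ for all $\Mat{z}$, which is exactly the statement that $\Mat{B}_L$ is a low-pass filter in the sense of Definition \ref{dfn:low_pass_filter}.

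The one subtlety I would need to address is that Theorem \ref{thm:sa_low_pass} is literally stated for matrices of the form $\softmax(\Mat{P})$, whereas $\Mat{B}_L$ need not be exactly a softmax image of some $\Mat{P}$ (although in fact any positive row-stochastic matrix $\Mat{B}$ equals $\softmax(\log \Mat{B})$ entrywise, so even a literal reduction works). Cleanest is to note that the proof of Theorem \ref{thm:sa_low_pass} in Appendix \ref{prf:sa_low_pass} only uses that $\softmax(\Mat{P})$ is a nonnegative, row-stochastic, irreducible (indeed primitive, being strictly positive) matrix, so I would invoke that underlying lemma: any such matrix has $1$ as a simple dominant eigenvalue with right eigenvector $\vone$, hence $\Mat{B}_L^t \to \vone \vpi^T$ for the stationary left eigenvector $\vpi$, and the high-frequency part of $\Mat{B}_L^t \vz$ decays geometrically at rate $|\lambda_2(\Mat{B}_L)| < 1$ while the DC part converges to $(\vpi^T \vz)\vone \neq 0$ generically — and the degenerate case $\vpi^T\vz = 0$ is handled exactly as in the original proof. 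Either route closes the argument.

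The steps, in order: (i) record that each $\Mat{A}_k$ is entrywise positive and row-stochastic; (ii) show by induction on $L$ that $\Mat{B}_L = \prod_{k=1}^L \Mat{A}_k$ is entrywise positive and row-stochastic; (iii) observe that the hypothesis of Theorem \ref{thm:sa_low_pass} (or the Perron--Frobenius lemma behind it) is satisfied by $\Mat{B}_L$, either because $\Mat{B}_L = \softmax(\log \Mat{B}_L)$ or because the proof only needs positivity and row-stochasticity; (iv) conclude that $\Mat{B}_L$ is a low-pass filter.

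The main obstacle — really the only nontrivial point — is step (ii)'s positivity claim, and more precisely making sure the reduction in step (iii) is rigorous rather than cosmetic: I must confirm that the appendix proof of Theorem \ref{thm:sa_low_pass} does not secretly exploit extra structure of the softmax map (e.g. a specific spectral gap bound) beyond primitivity and stochasticity. Assuming, as the excerpt invites, that Theorem \ref{thm:sa_low_pass} is available and is ``a straightforward result of Perron--Frobenius,'' this obstacle evaporates and the corollary is a two-line consequence; I would still spell out the positivity/stochasticity closure explicitly since that is the content a careful reader wants to see.
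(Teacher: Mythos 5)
Your proposal is correct and follows essentially the same route as the paper: show by induction that the product $\prod_{k=1}^{L}\Mat{A}_k$ remains entrywise positive and row-stochastic, then invoke Theorem \ref{thm:sa_low_pass} since that is all its Perron--Frobenius argument uses. Your remark that any positive row-stochastic matrix equals $\softmax$ of its entrywise logarithm even makes the reduction slightly more explicit than the paper's ``can be regarded as another self-attention matrix.''
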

In fact, ViT re-computes self-attention matrices per layer, which seems to avoid the consecutive power of an identical self-attention matrix. However, we also provide Corollary \ref{cor:sa_low_pass}, which suggests even the ViT consists of distinctive self-attention matrix at each layer, their composition turns out to act like a low-pass filter as well.
We also visualize the spectrum of attention maps (Fig. \ref{fig:spectrum} and more on Appendix \ref{sec:vis_spectrum}) to support our theoretical conclusions.

Knowing that self-attention matrices amount to low-pass filters, we are also interested in to which extent an MSA layer would  suppress the high-frequency component. Thereby, we also provide a convergence rate to illustrate this speed that the high-frequency component are being annihilated.
\begin{theorem} \label{thm:sa_rate}
(\revision{smoothening} rate of SA)
Let $\Mat{A} = \softmax(\Mat{P})$ and $\alpha = \max_{i,j} \lvert \Mat{P}_{ij} \rvert$, where $\Mat{P} \in \real^{n \times n}$. Define $\SA(\Mat{X}) = \Mat{A}\Mat{X}\Mat{W}_V$ as the output of a self-attention module, then
\begin{align}
\lVert \HC{\SA(\Mat{X})} \rVert_F \le \sqrt{\frac{n e^{2\alpha}}{e^{2\alpha} + n - 1}} \lVert\Mat{W}_V\rVert_2 \lVert \HC{\Mat{X}} \rVert_F.
\end{align}
In particular, when $\Mat{P} = \Mat{X}\Mat{W}_Q (\Mat{X}\Mat{W}_K)^T/\sqrt{d}$, and assume tokens are distributed inside a ball with radius $\gamma > 0$, i.e., $\lVert\Mat{x}_i\rVert_2 \le \gamma, \forall i=1,\cdots,n$, then $\alpha \le \gamma^2 \lVert \Mat{W}_Q\Mat{W}_K^T \rVert_2 / \sqrt{d}$.
\end{theorem}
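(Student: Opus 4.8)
The plan is to reduce the whole statement to a fact about row-stochastic matrices, using the Fourier setup only through one observation: since $\Mat{f}_1 = \Mat{1}/\sqrt{n}$ and the DFT is unitary, $\lVert\HC{\Mat{z}}\rVert_2 = \lVert\Mat{M}\Mat{z}\rVert_2$ for every $\Mat{z}$, where $\Mat{M} := \Mat{I} - \tfrac1n\Mat{1}\Mat{1}^T$ is the orthogonal projector onto $\Mat{1}^\perp$; consequently $\lVert\HC{\Mat{B}}\rVert_F = \lVert\Mat{M}\Mat{B}\rVert_F$ for any matrix $\Mat{B}$ with $n$ rows. First I would record that $\Mat{A}=\softmax(\Mat{P})$ is row-stochastic, so $\Mat{A}\Mat{1}=\Mat{1}$, whence $\Mat{M}\Mat{A}\Mat{X} = \Mat{M}\Mat{A}(\Mat{M}\Mat{X})$ --- the part of $\Mat{X}$ along $\Mat{1}$ passes through $\Mat{A}$ unchanged and is then killed by $\Mat{M}$. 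Using submultiplicativity of the Frobenius and spectral norms, this gives $\lVert\HC{\SA(\Mat{X})}\rVert_F = \lVert\Mat{M}\Mat{A}\Mat{X}\Mat{W}_V\rVert_F = \lVert\Mat{M}\Mat{A}\,(\Mat{M}\Mat{X})\,\Mat{W}_V\rVert_F \le \lVert\Mat{M}\Mat{A}\rVert_2\,\lVert\Mat{W}_V\rVert_2\,\lVert\HC{\Mat{X}}\rVert_F$, so the theorem reduces to proving $\lVert\Mat{M}\Mat{A}\rVert_2 \le \sqrt{n e^{2\alpha}/(e^{2\alpha}+n-1)}$.

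For this operator-norm bound the crucial ingredient is an entrywise bound on $\Mat{A}$. From $|\Mat{P}_{ij}|\le\alpha$, each softmax denominator satisfies $\sum_{k} e^{\Mat{P}_{ik}} \ge e^{\Mat{P}_{ij}} + (n-1)e^{-\alpha}$, so $\Mat{A}_{ij} = e^{\Mat{P}_{ij}}/\sum_k e^{\Mat{P}_{ik}} \le \big(1 + (n-1)e^{-\alpha-\Mat{P}_{ij}}\big)^{-1} \le e^{2\alpha}/(e^{2\alpha}+n-1) =: c$, and note $c \ge 1/n$. Next, since $\Mat{M}$ is a projector and $\Mat{M}\Mat{A}\Mat{v} = \Mat{M}\Mat{A}(\Mat{M}\Mat{v})$, it suffices to bound $\lVert\Mat{A}\Mat{v}\rVert_2$ for a unit $\Mat{v}\perp\Mat{1}$. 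Writing $\Mat{a}_i^T$ for the $i$-th row of $\Mat{A}$ and using $\Mat{a}_i^T\Mat{1}=1$, we have $\Mat{a}_i^T\Mat{v} = (\Mat{a}_i-\tfrac1n\Mat{1})^T\Mat{v}$, so Cauchy--Schwarz gives $(\Mat{a}_i^T\Mat{v})^2 \le \lVert\Mat{a}_i-\tfrac1n\Mat{1}\rVert_2^2 = \sum_j\Mat{a}_{ij}^2 - \tfrac1n \le c\sum_j\Mat{a}_{ij} - \tfrac1n = c - \tfrac1n$, using $0\le\Mat{a}_{ij}\le c$ in the last inequality. Summing over the $n$ rows, $\lVert\Mat{A}\Mat{v}\rVert_2^2 \le n c - 1 \le nc$, and since $\lVert\Mat{M}\Mat{A}\Mat{v}\rVert_2 \le \lVert\Mat{A}\Mat{v}\rVert_2$ this yields $\lVert\Mat{M}\Mat{A}\rVert_2 \le \sqrt{nc}$, completing the main inequality.

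For the final assertion I would substitute $\Mat{P}_{ij} = \Mat{x}_i^T\Mat{W}_Q\Mat{W}_K^T\Mat{x}_j/\sqrt{d}$ and bound the bilinear form: $|\Mat{P}_{ij}| \le \lVert\Mat{x}_i\rVert_2\,\lVert\Mat{W}_Q\Mat{W}_K^T\rVert_2\,\lVert\Mat{x}_j\rVert_2/\sqrt{d} \le \gamma^2\lVert\Mat{W}_Q\Mat{W}_K^T\rVert_2/\sqrt{d}$, so $\alpha$ may be taken as this quantity.

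The part I expect to be the main obstacle is obtaining the sharp constant $c = e^{2\alpha}/(e^{2\alpha}+n-1)$: the naive bound $\Mat{A}_{ij} \le e^{2\alpha}/n$, from replacing every exponential in the denominator by $e^{\alpha}$, is both too lossy and vacuous (it can exceed $1$), so one must keep the $e^{\Mat{P}_{ij}}$ term in the denominator. A second point that is easy to miss is that the DC content of $\Mat{X}$ must be shown to drop out of $\HC{\SA(\Mat{X})}$, which is where row-stochasticity of $\Mat{A}$ is used a second time; otherwise the bound would carry $\lVert\Mat{X}\rVert_F$ instead of $\lVert\HC{\Mat{X}}\rVert_F$. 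Everything else is routine norm bookkeeping.
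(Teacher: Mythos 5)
Your proposal is correct, and its skeleton matches the paper's: you first use row-stochasticity ($\Mat{A}\Mat{1}=\Mat{1}$) so that the DC part of $\Mat{X}$ is annihilated by the projector $\Mat{M}=\Mat{I}-\tfrac1n\Mat{1}\Mat{1}^T$ (the paper does this by writing $\Mat{X}=\Mat{1}\Mat{z}^T+\Mat{H}$ and noting the $\Mat{1}\Mat{z}^T$ term drops), then you reduce to a spectral-norm bound on the attention matrix, and you prove exactly the same entrywise estimate $\Mat{A}_{ij}\le e^{2\alpha}/(e^{2\alpha}+n-1)$ by keeping the $e^{\Mat{P}_{ij}}$ term in the denominator; the dot-product corollary is handled identically. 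Where you genuinely diverge is the norm-bounding step: the paper bounds $\lVert\Mat{A}\rVert_2\le\sqrt{\lVert\Mat{A}\rVert_1\lVert\Mat{A}\rVert_\infty}$ (the H\"older/interpolation inequality), uses $\lVert\Mat{A}\rVert_\infty=1$ from row-stochasticity, and bounds the column sums by $n$ times the entrywise bound, whereas you bound $\lVert\Mat{M}\Mat{A}\rVert_2$ directly by testing against unit vectors $\Mat{v}\perp\Mat{1}$, applying Cauchy--Schwarz row by row to $(\Mat{a}_i-\tfrac1n\Mat{1})^T\Mat{v}$ and using $\sum_j\Mat{A}_{ij}^2\le c\sum_j\Mat{A}_{ij}=c$. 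Your route avoids invoking the $\ell_1$--$\ell_\infty$ interpolation lemma, exploits orthogonality to $\Mat{1}$ rather than discarding it, and in fact yields the marginally sharper constant $\sqrt{nc-1}$ (with $c=e^{2\alpha}/(e^{2\alpha}+n-1)$), which you then relax to $\sqrt{nc}$ to match the stated theorem; the paper's argument is a bit shorter and relies only on standard norm inequalities. Both are valid, and all the small supporting facts you use ($\Mat{M}\Mat{A}\Mat{X}=\Mat{M}\Mat{A}\Mat{M}\Mat{X}$, $c\ge 1/n$, the mixed Frobenius/spectral submultiplicativity) check out.
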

% \wenqing{SA is what for short? Maybe illustrate the above inequality in text so that it is more straightforward: the HC Fro-norm ratio for the pre and post attention aggregation is upper bounded by xxx.}
The proof of Theorem \ref{thm:sa_rate} can be found in Appendix \ref{prf:sa_rate}.
Theorem \ref{thm:sa_rate} says the high-frequency intensity ratio to the pre- and post- attention aggregation is upper bounded by $\lVert\Mat{W}_V\rVert_2 \sqrt{\frac{n e^{2\alpha}}{e^{2\alpha} + n - 1}}$.
When $\lVert\Mat{W}_V\rVert_2 \sqrt{\frac{n e^{2\alpha}}{e^{2\alpha} + n - 1}} < 1$, $\HC{\SA(\Mat{X})}$ converges to zero exponentially.
We note that, no matter how attention is computed or signals are initialized, since $\sqrt{\frac{n e^{2\alpha}}{e^{2\alpha} + n - 1}}$ is bounded by $\sqrt{n}$, $\lVert\Mat{W}_V\rVert_2 < 1/\sqrt{n}$ will definitely cause a monotonically decreasing high-frequency component.
When dot-product attention is adopted, a sufficient condition that $\lVert \HC{\Mat{X}} \rVert_F$ decreases to zero within logarithmic time is $\gamma^2 \lVert\Mat{W}_Q\Mat{W}_K^T\rVert_2 / \sqrt{d} + \log \lVert\Mat{W}_V\rVert_2 \le \log \frac{n-1}{n} / 2$.

\subsection{Existing Mechanisms that Counteract Low-Pass Filtering} \label{sec:existing_mechanism}\vspace{-0.5em}

In this section, we take other ViT building blocks into consideration. We will justify whether Multi-Head Self-Attention (MSA), Feed-Forward Network (FFN), and residual connections can effectively alleviate the low-pass filtering drawbacks. All the derivations follow from Theorem \ref{thm:sa_rate}, and some proof ideas are borrowed from \citet{dong2021attention}.
We further present Fig. \ref{fig:upper_bound} to justify our results.

\paragraph{Does multi-head help?}
MSA employs weights to combine the results of multiple self-attention blocks. We can rewrite it as $\MSA(\Mat{X}) = \sum_{h=1}^{H} \SA(\Mat{X}) \Mat{W}_O^{h}$. We show by Proposition \ref{prop:mha_rate} in Appendix \ref{sec:mha_rate} that the convergence rate turns to $\sigma_1 \sigma_2 H \sqrt{\frac{n e^{2\alpha}}{e^{2\alpha} + n - 1}}$, where $H$ is the number of heads, $\sigma_1 = \max_{h=1}^{H} \lVert\Mat{W}_V^h\rVert_2$ and $\sigma_2 = \max_{h=1}^{H} \lVert\Mat{W}_O^h\rVert_2$. One can see MSA can only slow down the convergence up to a constant $\sigma_2 H$, which does not root out the problem.

\paragraph{Does residual connection benefit?} In addition to MSA, a transformer block also leverages a skip connection, which can be formulated as $\operatorname{Res}(\Mat{X}) = \MSA(\Mat{X}) + \Mat{X}$. We show that residual connection can effectively prevent high-frequency component from diminishing to zero by promoting the rate $\sigma_1 \sigma_2 H \sqrt{\frac{n e^{2\alpha}}{e^{2\alpha} + n - 1}}$ to $1+\sigma_1 \sigma_2 H \sqrt{\frac{n e^{2\alpha}}{e^{2\alpha} + n - 1}} > 1$. Refer to Proposition \ref{prop:res_rate} in Appendix \ref{sec:res_rate}.

\paragraph{Does FFN make any difference?} A feed-forward network is appended to MSA module. We characterize its effect in Appendix \ref{sec:ffn_rate}. Our Proposition \ref{prop:ffn_rate} suggests that a FFN with Lipschitz constant $\sigma_3$ contributes a $\sigma_3 \left( 1+\sigma_1 \sigma_2 H \sqrt{\frac{n e^{2\alpha}}{e^{2\alpha} + n - 1}} \right)$ convergence rate, which does not improve the original one. However, if skip connection is adopted over FFN, $\sigma_3 > 1$ can guarantee the upper bound of the high-frequency component is non-contractive.

\revision{
Although multi-head, FFN, and skip connection all help preserve the high-frequency signals, none would change the fact that MSA block as a whole only possesses the representational power of low-pass filters.
Our Proposition \ref{prop:mha_rate}, \ref{prop:res_rate}, \ref{prop:ffn_rate} states multi-head, FFN, skip connections can only slow down the convergence by indistinguishably amplifying low- and high-frequency components with the same factor.
However, since they are incapable of promoting high-frequency information separately, it is inevitable that high-frequency components are continuously diluted as ViT goes deeper.
This restricts the expressiveness of ViT, resulting in the performance saturation in deeper ViT.
}

\subsection{Connections to Existing Theoretic Understanding Works} \label{sec:theoretical_connection}
\vspace{-0.5em}
It is known that Graph Convolutional Networks (GCN) are not more than low-pass filters \citep{nt2019revisiting}. In the meanwhile, \citet{oono2019graph, cai2020note} pointed out GCN's node features will be exponentially trapped into the nullspace of the graph Laplacian matrix. Similarly, our work concludes that self-attention module is yet another low-pass filter.
Combining with our theoretical derivation, one can see the root reason is that both graph Laplacian matrices and self-attention matrices consistently own a fixed leading eigenvector, namely the DC basis. This makes aggregating information via such matrices inherently project the token representation onto these invariant eigenspaces. And we note that over-smoothing, rank collapse, and patch uniformity are all the manifestation of excessive low-pass filtering. See Appendix \ref{sec:random_walk} for more discussion.
% We can also interpret this asymptotic property through the lens of random walk theory. Regarding each attention maps as probability transition matrices for an irreducible Markov chain will enforce the feature activations fall into the stationary distribution.
% \textcolor{red}{How about theory results in NLP transformers? Is there any uniqueness of ViT from general transformers?}
% By relating the basis of the Laplacian's nullspace with the DC Fourier basis in graph signal processing, one can see these works together reveal a fact that GCNs consistently have a inductive bias to the eigenspace corresponding to the smallest eigenvalue. Similarly, our work

In \citet{dong2021attention}, the authors proved that ViT's feature maps will doubly exponentially collapses to a rank-1 matrix, which reveals ViT loses feature expressiveness at deep layers. Besides, they also gave a systematic study on other building blocks of transformer. While they share the similar insights with us, our work further specifies which rank-1 matrix the feature activation will converge to, namely the subspace spanned by the DC basis. That makes our theory to be better grounded with signal-processing and geometric interpretations, via directly measuring the intensity of the high-frequency residual, instead of examining a composite norm distance to an agnostic rank-1 matrix.
Although \citet{dong2021attention} presented a faster convergence speed, we respectfully suggest that the current proof of \citet{dong2021attention} might be deficient, or at least incomplete in the assumptions (see Appendix \ref{sec:remark_dong}).
% \textcolor{red}{include a respectful explanation in supple. Meanwhile, please email all three authors of ICML'21 again, c.c. me, to make a final push.}
Moreover, our theory can be generalized to other attention mechanisms such as logistic attention \citep{velivckovic2017graph} and L2 distance \citep{kim2021lipschitz}. See Appendix \ref{sec:generalize}.
% \textcolor{red}{explicitly brief explain how in supple}.
% Without relaxing the exponential function to a linear upper bound (letting the $1$-norm of softmax naturally bounded by $\sqrt{n}$), our upper bound in Theorem \ref{thm:sa_rate} should be tighter than Lemma A.1 of \citet{dong2021attention}.
% They also provide a precise and elegant converge rate. However, our perspective towards the patch uniformity phenomenon is focused on the Fourier domain, which is first discussed in the literature of ViT. Furthermore, we point out that their proof is incorrect in general case. See our remark in Appendix. Therefore, our work can also be regarded as a correction version from their proof.

\section{AttnScale \& FeatScale: Scaling from the Fourier-Domain} \label{sec:method}

\vspace{-0.5em}
\subsection{AttnScale: Make Attention an All-Pass Filter} \label{sec:attn_scale}
\vspace{-0.5em}
As we discussed in Section \ref{sec:low_pass}, self-attention matrix can only perform low-pass filtering, which narrows the filter space ViT can express. Inspired by this, we propose a scaling techniques directly manipulating the attention map,  termed \textit{Attention Scaling} \textbf{(AttnScale)}, to balance the effects of low- and high-pass filtering and produce all-pass filters. AttnScale decomposes the self-attention matrix to a low-pass filter plus a high-pass filter, and introduces a trainable parameter to rescale the high-pass filter to match the magnitude with the low-pass component.

Formally, let $\Mat{A} $ denote a self-attention matrix. To decompose a low-pass filter from $\Mat{A}$, we find the \textit{largest possible} low-pass filter that can be extracted from $\Mat{A}$. We use Lemma \ref{lem:optim_low_pass} in Appendix \ref{sec:optim_low_pass} to justify our solution. 
% \wenqing{better add texts to recall that $\IFT$ is the fourier transformation defied in section 2.1.}
By Lemma \ref{lem:optim_low_pass}, we can simply extract $\Mat{L} = \IFT \diag(1, 0, \cdots, 0) \FT = \Mat{1} \Mat{1}^T / n$ from $\Mat{A}$ and take the complementary part as the high-pass filter. Afterward, we can rescale the high-pass component of the filter, and combine low-pass and high-pass together to form a new self-attention matrix. We illustrate this scaling trick in Fig. \ref{fig:attn_scale}. To be more precise, for the $l$-th layer and $h$-th head, we recompute the self-attention map as follows:
\vspace{-0.2em}
\begin{align}
& \Mat{A}^{(l,h)}_{LP} = \frac{1}{n} \Mat{1} \Mat{1}^T, \\
& \Mat{A}^{(l,h)}_{HP} = \Mat{A}^{(l,h)} - \Mat{A}^{(l,h)}_{LP}, \\
& \Mat{\hat{A}}^{(l,h)} = \Mat{A}^{(l,h)}_{LP} + (\omega_{l,h} + 1)\Mat{A}^{(l,h)}_{HP},
\end{align}
\vspace{-0.2em}
where $\omega_{l,h}$ is a trainable parameter, and different layers and heads adopt separate $\omega_{l,h}$.
During training time, $\omega_{l,h}$'s are initialized with 0, and jointly tuned with the other network parameters.
By adjusting $\omega_{l,h}$, $\Mat{\hat{A}}^{(l,h)}$ can simulate any type of filters: low-pass, high-pass, band-pass, or all-pass.
Note that our AttnScale is extremely lightweight, as it only brings $O(HL)$ extra parameters, where $H$ is the number of heads, and $L$ is the number of ViT blocks.

% With more flexibility and diversity, self-attention matrices are endowed with more expressiveness and representation capability.
% One can see, after introducing $\omega_{l,h}$, such rescaled self-attention matrices are empowered to express a more flexible class of filters, instead of only low-pass filters. Thus, $\alpha_{l,h}$ can be adaptively adjusted to achieve more diverse filters to satisfy the final training loss.

% \begin{wrapfigure}{l}{0.22\linewidth}
% \vspace{0mm}
% \centering
% \includegraphics[width=0.2\textwidth]{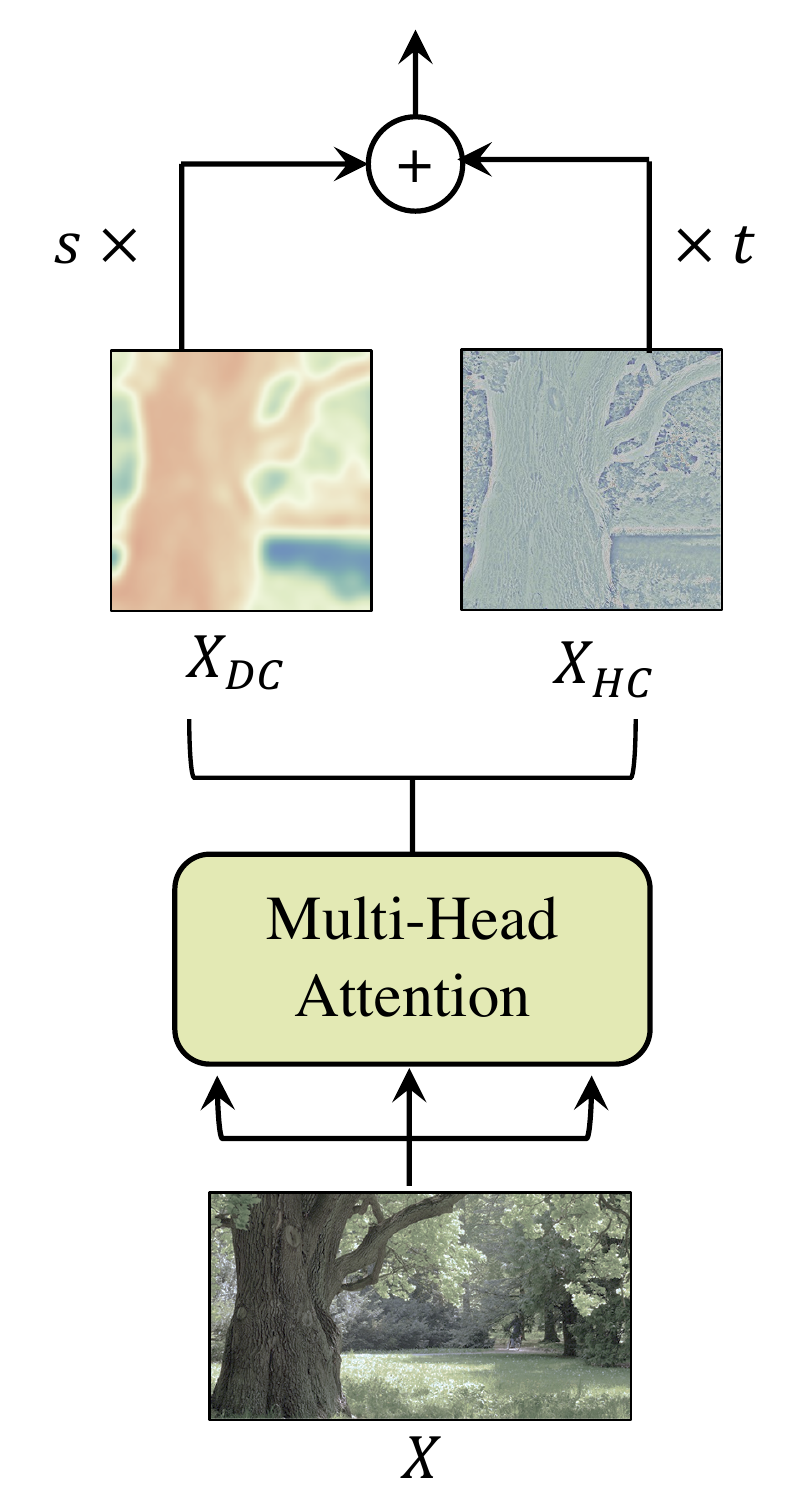}
% \vspace{-2mm}
% \caption{Illustration of FeatScale.}
% \label{fig:feat_scale}
% \vspace{-12mm}
% \end{wrapfigure}

\subsection{FeatScale: Reweight High-Frequency Signals} \label{sec:feat_scale}
\vspace{-0.5em}

\begin{figure}[t]
\begin{minipage}[b]{0.6\linewidth}
\centering
\begin{subfigure}[b]{0.28\textwidth}
    \centering
    \includegraphics[width=\textwidth]{images/vit_block.pdf}
    \caption{}
    \label{fig:vit_block}
\end{subfigure}
\begin{subfigure}[b]{0.28\textwidth}
    \centering
    \includegraphics[width=\textwidth]{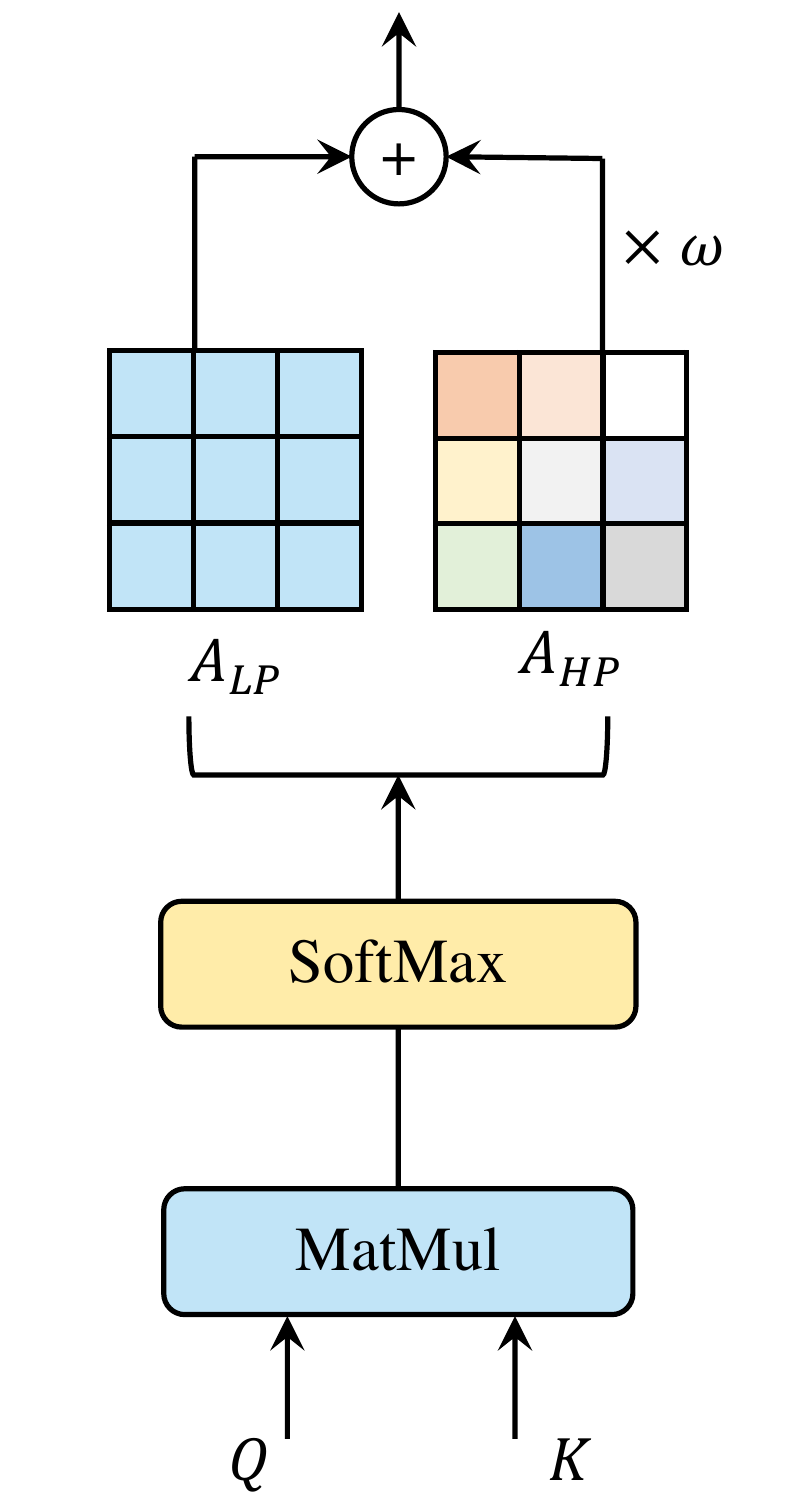}
    \caption{}
    \label{fig:attn_scale}
\end{subfigure}
\begin{subfigure}[b]{0.28\textwidth}
    \centering
    \includegraphics[width=\textwidth]{images/feat_scale.pdf}
    \caption{}
    \label{fig:feat_scale}
\end{subfigure}
\vspace{-2mm}
\caption{Illustration of our proposed techniques. \textbf{(a)} recalls the standard ViT block. \textbf{(b)} and \textbf{(c)} illustrate our proposed AttnScale and FeatScale, which scaling high-pass filter component and high-frequency signals, respectively. }
\end{minipage}
\hfill
\begin{minipage}[b]{0.36\linewidth}
\centering
\includegraphics[width=\textwidth]{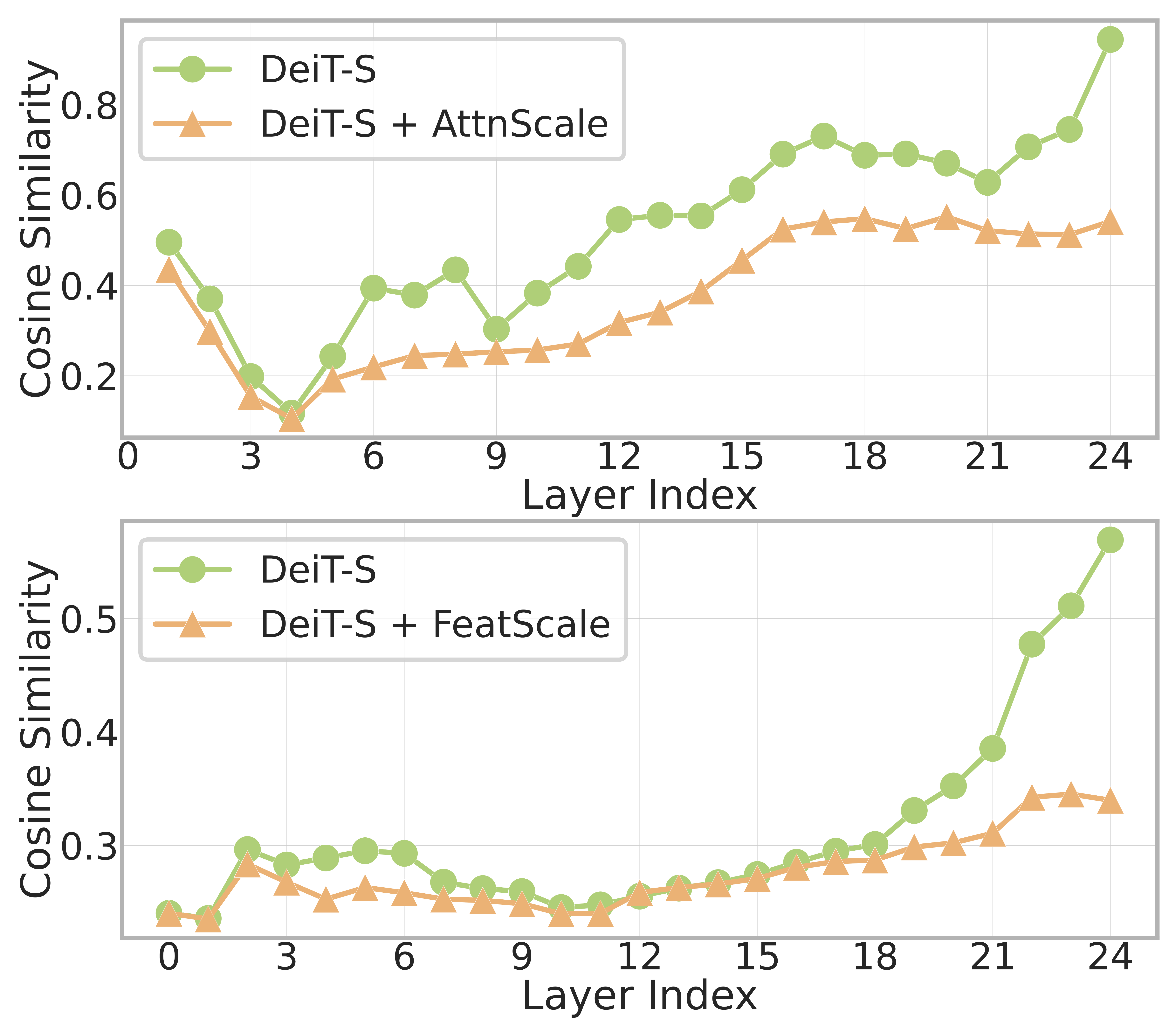}
% \begin{subfigure}[b]{\textwidth}
%     \centering
%     \includegraphics[width=\textwidth]{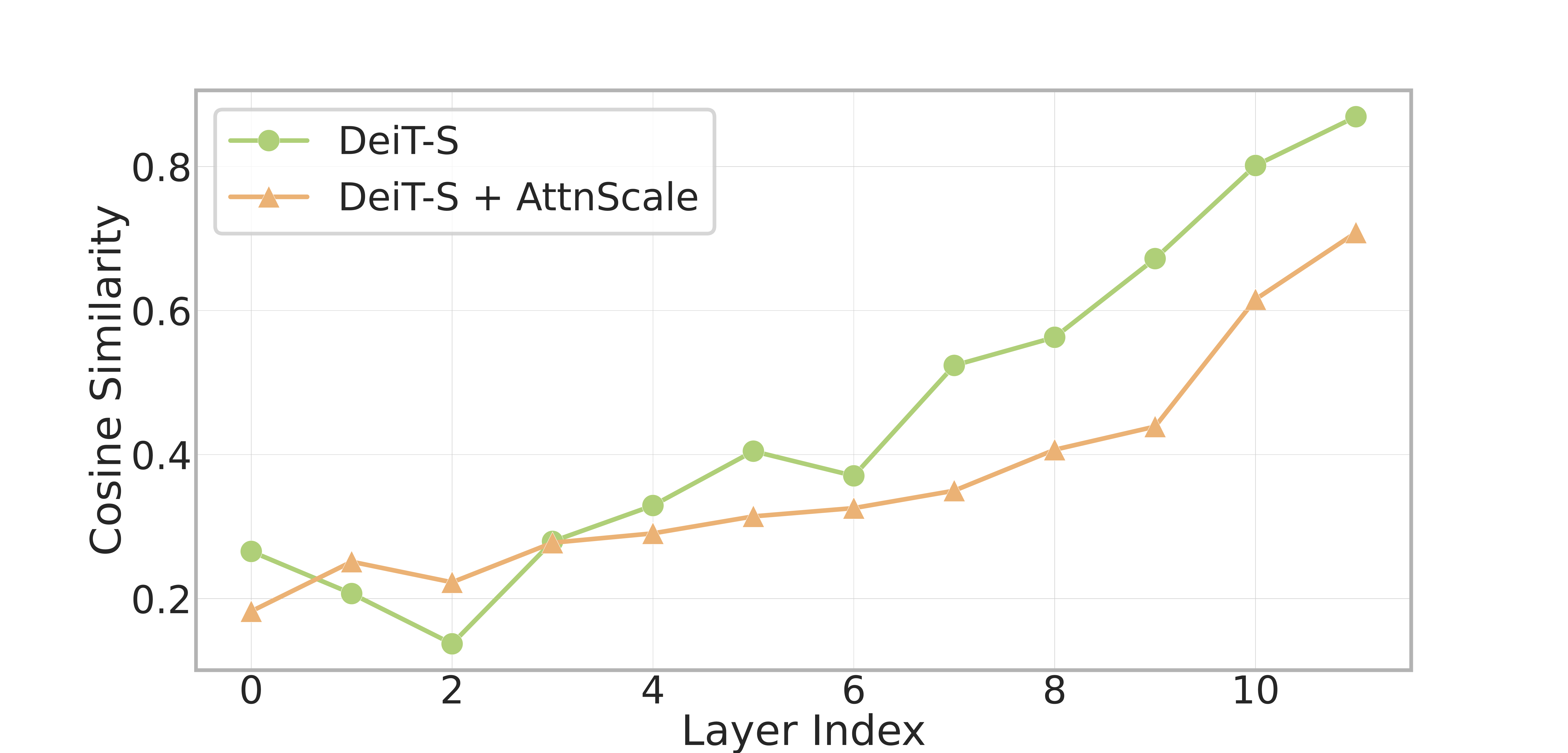}
%     % \caption{}
%     % \label{fig:sim_attn}
% \end{subfigure}
% \begin{subfigure}[b]{\textwidth}
%     \centering
%     \includegraphics[width=\textwidth]{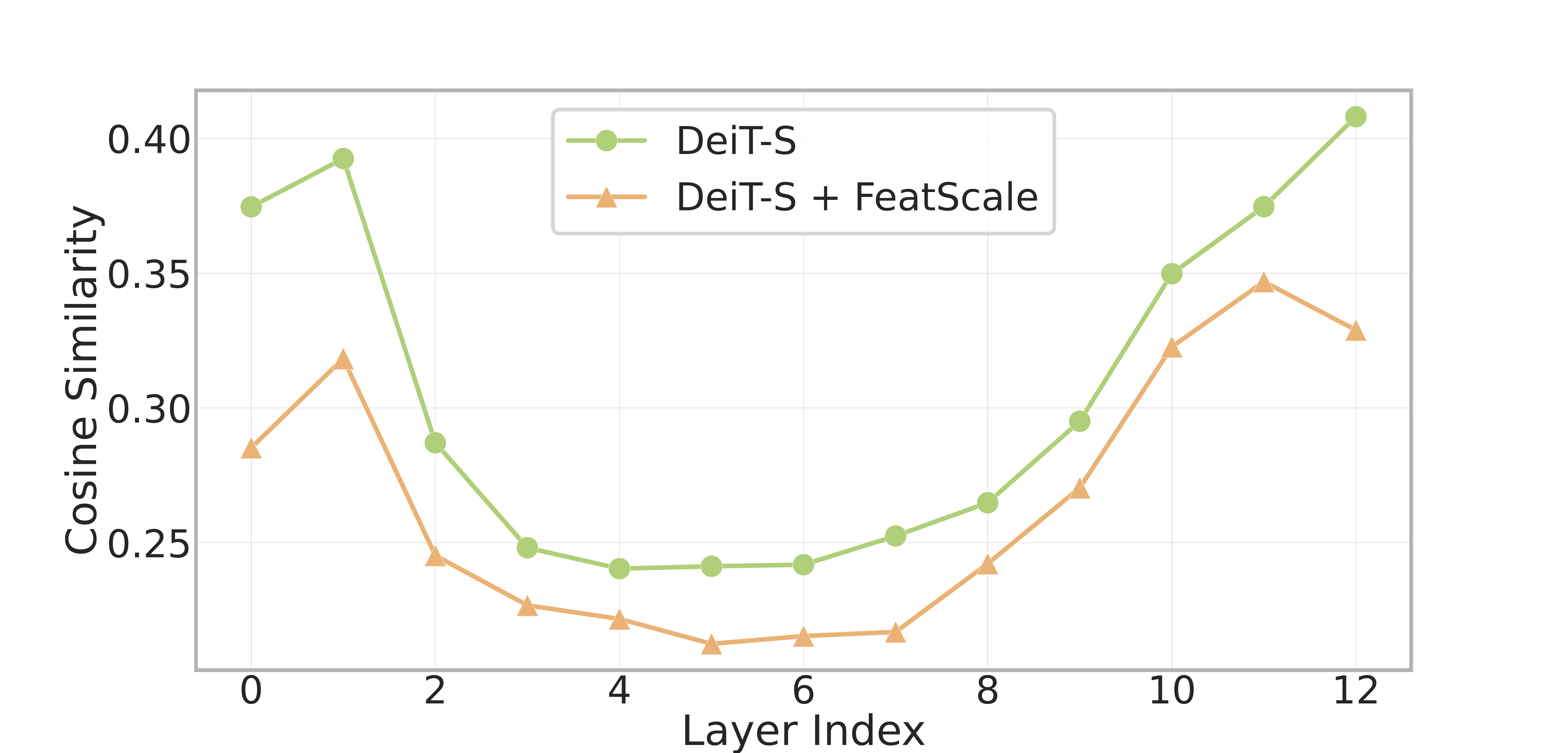}
%     % \caption{}
%     % \label{fig:sim_feat}
% \end{subfigure}
\vspace{-2mm}
\caption{Visualize cosine similarity of attention and feature maps with/without our proposed methods. Refer to Appendix \ref{sec:vis_similarity} for details.}
\label{fig:sim_attn}
\label{fig:sim_feat}
\end{minipage}
\vspace{-4mm}
\end{figure}

According to our analysis in Section \ref{sec:low_pass}, MSA module will indiscriminately suppress high-frequency signals, which leads to severe information loss.
Even though residual connection can retrieve lost information through the skip path, the high-frequency portion will be inevitably diluted (Theorem \ref{thm:sa_low_pass}).
To this end, we propose another scaling technique that operates on feature maps, named \textit{Feature Scaling} \textbf{(FeatScale)}. FeatScale processes the output of MSA by mixing the information from varying frequency bands discriminatively. FeatScale first decomposes the resultant signals into their DC and high-frequency components. Then it introduces two groups of parameters to re-weight the two components for each channel, respectively. The pipeline of this scaling technique is depicted in Fig. \ref{fig:feat_scale}. To be more precise, we re-weight the output of the $l$-th MSA by
\vspace{-0.2em}
\begin{align}
\Mat{X}^{(l)}_{DC} &= \DC{\MSA(\Mat{X})} (\diag(\Mat{s}_l) + \Mat{I}), \\
\Mat{X}^{(l)}_{HC} &= \HC{\MSA(\Mat{X})} (\diag(\Mat{t}_l) + \Mat{I}), \\
\Mat{X}^{(l)} &= \Mat{X}^{(l)}_{DC} + \Mat{X}^{(l)}_{HC},
\end{align}
\vspace{-0.2em}
where $\Mat{s}_l \in \real^{d}$ and $\Mat{t}_l \in \real^{d}$ are learnable parameters to perform channel-wise re-weighting. We initialize $\Mat{s}_l$ and $\Mat{t}_l$ with zeros and tune them with gradient descent.
After adjusting the proportion of different frequency signals, FeatScale can prevent the dominance of the DC component.
$\DC{\cdot}$ and $\HC{\cdot}$ are cheap to compute without explicit Fourier transform. Calculating $\DC{\Mat{X}}$ is as simple as running the column average of  matrix $\Mat{X}$, and $\HC{\Mat{X}}$ can be efficiently computed by $\Mat{X} - \DC{\Mat{X}}$. 

\subsection{Discussion}
\vspace{-0.5em}

We have proposed two methods to facilitate the deeper stacking of ViT MSA modules, and discussed their motivations and strengths from our perspective of filtering and signal processing. In this section, we will connect these two techniques with commonly mentioned problems with ViTs.

\vspace{-0.5em}
\paragraph{How does AttnScale prevent attention collapse?}
Deep ViT suffers from the attention collapse issue \citep{zhou2021deepvit}. When transformer goes deepr, the attention maps
gradually become similar and even much the same after certain layers.
Combining with our thoery, one can see collapsed attention maps turn out to be a pure low-pass filter, which wipes off all the high-frequency information in one shot.
\citet{zhou2021deepvit} proposed the re-attention trick, which blends attention map across different heads. By doing this, modified attention maps aggregate high-pass components from other heads and is endowed with richer filtering property.
% As a result, a performance gain has been empirically observed for 36-layer ViT.
We note that our AttnScale is akin to a more lightweight re-attention mechanism with better interpretability.
We rewrite the attention map as a sum of an already-collapsed attention ($\Mat{1}\Mat{1}^T/n$) with the complementary residual map that encodes diverse patterns in a self-attention matrix.
By re-weighting the residual map, the diversified patterns can be amplified, which prevents it from degenerating to a rank-1 matrix.
We further verify this argument using cosine similarity metric \citep{zhou2021deepvit} in the upper sub-figure of Fig. \ref{fig:sim_attn}.
% By deducting an all-one matrix (a collapsed attention map) out of the attention map, one can find the remaining attention map a diversity pattern. rescaling the residual part can amplify the residual diversity, and enrich the diversity of the attention map. 

\vspace{-0.5em}
\paragraph{How does FeatScale conserve patch diversity?}
Self-attention blocks tend to map different patches into similar latent representations, yielding information loss and performance degradation \citep{gong2021vision}.
By our theory, this asymptotic smoothness of feature map is caused by excessive low-pass filtering, and the remaining DC bias signifies uniform patch representations.
Conventional approaches to addressing this problem include incorporating convolutional layers \citep{wu2021cvt, jiang2021token} and enforcing patch diversity regularizations \citep{gong2021vision}.
As diverse features are often characterized by high-frequency signals, our FeatScale instead elevating the high-frequency component via a learnable scaling factor, can be regarded as a more straightforward way to reconstruct the patch richness.
Compared with LayerScale \citep{touvron2021going}, in which each frequency band is equally scaled, our FeatScale treating DC and high-frequency components differently, not only perform a per-channel normalization, but also perform a spectral-domain calibration with high-frequency details and low-frequency characteristics.
The lower sub-plot of Fig. \ref{fig:sim_feat} shows ViT with our FeatScale has lower feature similarity at deep layer.

\section{Related Work} \label{sec:related}
\vspace{-0.5em}
\paragraph{Transformers in Vision.}
Transformer \citep{vaswani2017attention} entirely relies on self-attention mechanism to capture correlation and exchange information globally among the input. It has achieved a remarkable performance in natural language processing \citep{devlin2018bert, dai2019transformer, brown2020language} and many cross-disciplinary applications \citep{jumper2021highly, ying2021transformers, zheng2021delayed}.
Recent advances have also successfully applied Transformer to computer vision tasks. \citet{dosovitskiy2020image} first adopts a pure transformer architecture (ViT) for image classification. The follow-up works~\citep{chen2021chasing} extend ViT to various vision tasks, such as object detection \citep{carion2020end, zhu2020deformable, zheng2020end, sun2020rethinking}, segmentation \citep{chen2021pre, wang2021end}, image generation \citep{parmar2018image, jiang2021transgan}, video processing \citep{zhou2018end, arnab2021vivit}, and 3D instance processing \citep{guo2021pct, lin2021end}.
To capture multi-scale non-local contexts, \citet{zhang2020feature} designs transformers in self-level, top-down, and bottom-up interaction fashion.
\citet{liu2021swin} presents hierarchical ViTs with shifted window based attention that can efficiently extract multi-scale features.
To dismiss ViT from the heavy reliance on large-scale dataset pre-training, \citet{touvron2021training, yuan2021tokens} propose knowledge distillation and progressive tokenization for data-efficient training.
Despite impressive effectiveness, most of these model are only based on relatively shallow ViT backbones with a dozen of MSA blocks. %With prior success in CNN, we reasonably believe a deeper architecture can further boost their performance.

\vspace{-0.5em}
\paragraph{Advances in deep ViTs.}
Building deeper ViTs has arisen many interests. \citet{zhou2021deepvit} first investigated the depth scalability of ViT. The authors found that the attention collapse hinders ViT from scaling up, and propose two methods to conquer this problem i) increasing the embedding dimension, and ii) a cross-head re-attention trick to regenerate attention map.
A concurrent work \citet{touvron2021going} came up with a LayerScale layer that performs per-channel multiplication for each residual block. More importantly, they make explicit separation of transformer layers involving self-attention
between patches, from class-attention layers that are devoted to extract the global content into a single embedding to be decoded.
\citet{gong2021vision} further proposed a series of losses that can enforce patch diversity in ViT. Such regularizations include penalty on cosine similarity, patch-wise contrastive loss, and mixing loss.
\citet{tang2021augmented} presented a shortcut augmentation scheme with block-circulant projection to improve feature diversity.
Although these existing solutions manage to deepen ViTs, most of them are empirical works and bring no principled theory.

%not principled and lack of theoretical guidance. 
\vspace{-0.5em}
\paragraph{Role of depth in NNs.}
Discussing the importance of deep structures in Neural Networks (NNs) is an overly broad topic. Here we only focus on a subset of works that scaling up a transformer could relate to.
For ordinal deep learning models, such as FFNs and CNNs, deep architecture immediately benefits from the universal approximation power and expressive capacity \citep{cybenko1989approximation, hornik1991approximation, telgarsky2016benefits, lu2017expressive, petersen2020equivalence, zhou2020universality}.
In contrast, several studies in graph learning domain have reported severe performance degradation due to over-smoothing when stacking many layers \citep{kipf2016semi, wu2019simplifying, li2018deeper}.
The subsequent studies \citep{nt2019revisiting, oono2019graph, cai2020note} gave theoretical explanations of the over-smoothing phenomena from the views of graph signal filtering and feature dynamics.
Likewise, ViT have been witnessed performance saturation when going deeper. However, to our best knowledge,  \citet{dong2021attention} is the sole work in the literature that systematically and rigorously analyzes this issue with deep ViT.
The main idea of this work is showing the self-attention block will downgrade the rank of the feature maps.
Our work takes one step forward by studying ViT on spectral domain, and manages to reveal the signals will ultimately fall into the one-dimension DC subspace. 
We see our theory and techniques are also applicable to NLP transformers. However, we only focus on ViT because empirical observations indicate NLP modeling (including Transformer) does not require a deep structure \citep{vaswani2017attention, brown2020language}, while vision tasks always demand one \citep{lecun2015deep}.

\vspace{-0.5em}
\section{Experiments} \label{sec:expr}
\vspace{-0.5em}
In this section, we report experiment results to validate our proposed methods.
First, we validate the effectiveness of our AttnScale and FeatScale when integrated with different deep ViT backbones (Section \ref{sec:how_benefit}).
Second, we compare our best models with state-of-the-art (SOTA) results (Section \ref{sec:compare_sota}).
All of our experiments are conducted on the ImageNet dataset \citep{ILSVRC15} with around 1.3M images in the training set and 50k images in the validation set.
Our implementations are based on Timm \citep{rw2019timm} and DeiT \citep{touvron2021training} repositories.

\subsection{How Can AttnScale \& FeatScale Benefit Deep ViT?}\vspace{-0.5em} \label{sec:how_benefit}

\begin{table}[t!]
\caption{Experimental evalutation of AttnScale \& FeatScale plugged into DeiT and CaiT. The number inside the ($\uparrow \cdot$) represents the performance gain compared with the baseline model, and accuracies within/out of parenthesis are the reported/reproduced performance.}
\label{tbl:diff_backbone}
\vspace{-3mm}
\begin{center}
\resizebox{0.9\textwidth}{!}{
\setlength{\aboverulesep}{0pt}
\setlength{\belowrulesep}{0pt}
\begin{tabular}{l|l|ccccc|c}
\toprule
\textbf{Backbone} & \textbf{Method} & \textbf{Input size} & \textbf{\# Layer} & \textbf{\# Param} & \textbf{FLOPs} & \textbf{Throughput} & \textbf{Top-1 Acc (\%)} \\
\hline
\multirow{6}{*}{DeiT}
& DeiT-S & 224 & 12 & 22.0M & 4.57G & 1589.4 & 79.8 (79.9) \\
% & DeiT-S + AttnScale & 22M & 224 & 12 &  80.3 ($\uparrow 0.5$) \\
& DeiT-S + AttnScale & 224 & 12 & 22.0M & 4.57G & 1416.7 & \revision{80.7 ($\uparrow 0.9$)} \\
& DeiT-S + FeatScale & 224 & 12 & 22.0M & 4.57G & 1509.9 & 80.9 ($\uparrow 1.1$) \\
\cmidrule{2-8}
& DeiT-S & 224 & 24 & 43.3M & 9.09G & 836.4 & 80.5 (81.0) \\
& DeiT-S + AttnScale & 224 & 24 & 43.3M & 9.10G & 722.0 & 81.1 ($\uparrow 0.6$) \\
& DeiT-S + FeatScale & 224 & 24 & 43.4M & 9.10G & 772.5 & 81.3 ($\uparrow 0.8$) \\
\hline
% \multirow{6}{*}{CaiT} & CaiT-XXS & 12M & 224 & 24 & 77.5 (77.6) \\
% & CaiT-XXS + AttnScale & 12M & 224 & 24 & 77.8 ($\uparrow 0.3$) \\
% & CaiT-XXS + FeatScale & 12M & 224 & 24 & 77.8 ($\uparrow 0.3$) \\
% \cmidrule{2-6}
\multirow{3}{*}{CaiT}
& CaiT-S & 224 & 24 & 46.9M & 9.33G & 371.9 & 82.6 (82.7) \\
& CaiT-S + AttnScale & 224 & 24 & 46.9M & 9.34G & 339.0 & \revision{83.2 ($\uparrow 0.6$)} \\
& CaiT-S + FeatScale & 224 & 24 & 46.9M & 9.34G & 358.2 & \revision{83.2 ($\uparrow 0.6$)} \\
\hline
\multirow{3}{*}{Swin}
& \revision{Swin-S} & 224 & 24 & 49.6M & 8.74G & 593.2 & 83.0 (83.0) \\
& \revision{Swin-S + AttnScale} & 224 & 24 & 49.6M & 8.75G & 553.4 & 83.4 ($\uparrow 0.4$) \\
& \revision{Swin-S + FeatScale} & 224 & 24 & 49.6M & 8.75G & 550.3 & 83.5 ($\uparrow 0.5$) \\
\bottomrule
\end{tabular}
}
\end{center}
\vspace{-3mm}
\end{table}

\paragraph{Experiment Settings.}
% In this subsection, we intend to testify our models are beneficial to various ViT backbones with different depth settings and training modes.
% We choose DeiT \citep{touvron2021training} as our first backbone in order to train from scratch. We only test our models on DeiT-S with at most 24 layers and 512 batch size due to prohibitive computational resources. For other settings, we follow the same training recipe, hyper-parameters, and data augmentation with \citet{touvron2021training}. As suggested by \citet{touvron2021going}, we set dropout rate to 0.2 when training 24-layer DeiT.
% Our second backbone is CaiT \citep{touvron2021going}, the current SOTA model for 24-layer ViT. We are aimed to testify whether CaiT can be further boosted by our proposed approaches. We only apply our techniques to the patch embedding layers. Different from DeiT, we fine-tune CaiT with AttnScale and FeatScale parameters from the pre-trained models for 60 epochs following \citet{gong2021vision}. For a fair comparison, we simultaneously train a plain CaiT for another 60 epochs. During fine-tuning, we reduce learning rate to $5 \times 10^{-5}$ and weight decay to $5 \times 10^{-4}$. We only target on lightweight CaiT-XXS and CaiT-S finetuned with 224 resolution inputs and 256 batch size because of GPU memory bottleneck. All other hyper-parameters and training recipe are kept consistent with the original paper \citep{touvron2021going}.

In this subsection, we intend to testify our models are beneficial to various ViT backbones with different depth settings and training modes.
We choose DeiT \citep{touvron2021training} as our first backbone in order to train from scratch. When training 12-layer DeiT, we follow the same training recipe, hyper-parameters, and data augmentation with \citet{touvron2021training}. \revision{When training 24-layer DeiT, we follow the setting in \citet{gong2021vision}}. Specially, we set dropout rate to 0.2 when training 24-layer DeiT \citep{touvron2021going}.
Our second backbone is CaiT \citep{touvron2021going}. We only apply our techniques to the patch embedding layers.
% Different from DeiT, we fine-tune CaiT with AttnScale and FeatScale parameters from the pre-trained models for 60 epochs following \citet{gong2021vision}.
% For a fair comparison, we simultaneously train a plain CaiT for another 60 epochs. 
% During fine-tuning, we reduce learning rate to $5 \times 10^{-5}$ and weight decay to $5 \times 10^{-4}$. All other hyper-parameters and training recipe are kept consistent with the original paper \citep{touvron2021going}.
\revision{
The third backbone is the SOTA model Swin-Transformer \citep{liu2021swin}. All experimental settings share the same with \citet{liu2021swin}.
}
\revision{In addition to training from scratch, we also investigate the fine-tuning setting. We defer this part to Appendix \ref{sec:finetune}.}

\vspace{-0.5em}
\paragraph{Results.}
% All of our experimental evaluations are summarized in Table \ref{tbl:diff_backbone}.
% The results suggest our proposed AttnScale and FeatScale successfully facilitate both DeiT and CaiT under different depth settings.
% Specifically, AttnScale brings \textit{less than 100/150 extra parameters for 12/24-layer DeiT} while boosting the top-1 accuracy by 0.5\% for 12-layer DeiT and 0.6\% for 24-layer DeiT. Our FeatScale substantially improves top-1 accuracy by 1\% for 12-layer DeiT and 0.8\% for 24-layer DeiT. We observe that FeatScale outperforms AttnScale by a large gap on DeiT. We explain this is because FeatScale specifies separate re-weighting factors for each channel and thus can extract richer features.
% When AttnScale and FeatScale cooperate with 24-layer CaiTs, we find only \textit{tens of epoch's fine-tuning} can further promote their performance by $\ge$ 0.2\% regardless of the model size. However, FeatScale no more takes advantage over AttnScale on CaiT.
% We defer more model interpretation and visualization to Appendix \ref{sec:vis_inter}.
% Overall, our results demonstrate the promising effectiveness of our methods with only hundreds of parameters overhead.
% % With these evidences, we encourage future work to involve AttnScale and FeatScale into larger ViT models with both training and fine-tuning scheme.

All of our experimental evaluations are summarized in Table \ref{tbl:diff_backbone}.
The results suggest our proposed AttnScale and FeatScale successfully facilitate both DeiT, CaiT, Swin-Transformer under different depth settings and training modes.
Specifically, AttnScale brings \textit{less than 100/150 extra parameters for 12/24-layer DeiT} while boosting the top-1 accuracy by \revision{0.9\%} for 12-layer DeiT and 0.6\% for 24-layer DeiT. Our FeatScale substantially improves top-1 accuracy by 1\% for 12-layer DeiT and 0.8\% for 24-layer DeiT.
\revision{Compared with existing techniques, the improvements of AttnScale and FeatScale already surpass re-attention (0.6\%) \citep{zhou2021deepvit}, LayerScale (0.7\%) \citep{touvron2021going}, and late class token insertion (0.6\%) \citep{touvron2021going}.}
% We observe that FeatScale outperforms AttnScale by a large gap on DeiT. We explain this is because FeatScale specifies separate re-weighting factors for each channel and thus can extract richer features.
\revision{We also observe a consistent 0.6\% performance gain when  AttnScale and FeatScale plugged into CaiT.}
% When AttnScale and FeatScale cooperate with 24-layer CaiTs, they consistently bring 0.6\% accuracy increment.
% We find only \textit{tens of epoch's fine-tuning} can further promote their performance by $\ge$ 0.2\%. We also provide a fine-tuning results on CaiT-XSS in Table \ref{tbl:fine_tune}, which shows our effectiveness regardless of the model size.
\revision{Under fine-tuning setting, as we will show in Appendix \ref{sec:finetune}, only \textit{tens of epoch's fine-tuning} can further promote their performance by $\ge$ 0.2\% (see Table \ref{tbl:fine_tune}).}
\revision{
On Swin-Transformer, both our AttnScale and FeatScale bring around 0.5\% accuracy gain. This makes Swin-S with 50M parameters even comparable to Swin-B (83.5\% top-1 accuracy on ImageNet1k) with 88M parameters.
}
We defer more model interpretation and visualization to Appendix \ref{sec:vis_inter}.
\revision{For a brief summary, we observe that both shallow and deep ViT enjoy from AttnScale and FeatScale that: 1) the attention maps can simulate richer filtering properties (compare Fig. \ref{fig:spectrum_24} with Fig. \ref{fig:spectrum_24_attnscale}), and 2) more high-frequency data can be preserved (refer to Fig. \ref{fig:hc_portion}).}
% Overall, our results demonstrate the promising effectiveness of our methods with only hundreds of parameters overhead.

\begin{table}[t]
\caption{Compared with state-of-the-art models on ImageNet dataset. Accuracies with superscript (\textasteriskcentered) are reported by \citet{gong2021vision}, with superscript (\textdagger) are reported by \citet{yuan2021tokens}, and others are reported by the original papers. Bold accuracies signifies best models among pure transformers.}
\label{tbl:compare_sota}
\vspace{-3mm}
\begin{center}
\resizebox{0.8\textwidth}{!}{
\setlength{\aboverulesep}{0pt}
\setlength{\belowrulesep}{0pt}
\begin{tabular}{l|l|ccc|c}
\toprule
\textbf{Category} & \textbf{Method} & \textbf{\# Param} & \textbf{Input size} & \textbf{\# Layer} & \textbf{Top-1 Acc (\%)} \\
\toprule
\multirow{2}{2cm}{CNN}
& ResNet-152 \citep{he2016deep} & 230M & 224 & 152 & 78.1 \textsuperscript{\textasteriskcentered} \\
& DenseNet-201 \citep{huang2017densely} & 77M & 224 & 201 & 77.6 \textsuperscript{\textasteriskcentered} \\
% & EffNetV2-L \citep{tan2019efficientnet} & 121M & 384 & 42 & 85.7 \textsuperscript{\textasteriskcentered} \\
% & LambdaNet-420 \citep{bello2021lambdanetworks} & 320M & 224 &  & 84.9 \textsuperscript{\textasteriskcentered} \\
\hline
\multirow{2}{2cm}{CNN+\\Transformer}
& CVT-21 \citep{wu2021cvt} & 32M & 224 & 21 & 82.5 \textsuperscript{\textasteriskcentered} \\
& LV-ViT-S \citep{jiang2021token} & 26M & 224 & 16 & 83.3 \textsuperscript{\textasteriskcentered} \\
\hline
\multirow{9}{2cm}{Transformer} & ViT-S/16 \citep{dosovitskiy2020image} & 49M & 224 & 12 & 78.1 \textsuperscript{\textdagger} \\
& ViT-B/16 \citep{dosovitskiy2020image} & 86M & 224 & 12 & 79.8 \textsuperscript{\textdagger}  \\
& DeiT-S \citep{touvron2021training} & 22M & 224 & 12 & 79.8 \\
& DeiT-S Distilled \citep{touvron2021training} & 22M & 224 & 12 & 81.2 \\
& Swin-S \citep{liu2021swin} & 50M & 224 & 12 & 83.0 \\
\cmidrule(r){2-6}
& T2T-ViT-24 \citep{yuan2021tokens} & 64M & 224 & 24 & 82.3 \\
& DeepViT-24B \citep{zhou2021deepvit} & 36M & 224 & 24 & 80.1 \\
& CaiT-S \citep{touvron2021going} & 47M & 224 & 24 & 82.7 \\
& DeiT-S + DiversePatch \citep{gong2021vision} & 44M & 224 & 24 & 82.2 \\
\hline
\multirow{6}{2cm}{Ours} & DeiT-S + AttnScale & 43M & 224 & 24 & 81.1 \\
& DeiT-S + FeatScale & 43M & 224 & 24 & 81.3 \\
& \revision{CaiT-S + AttnScale} & 47M & 224 & 24 &  \revision{\textbf{83.2}} \\
& \revision{CaiT-S + FeatScale} & 47M & 224 & 24 &  \revision{\textbf{83.2}} \\
& \revision{Swin-S + AttnScale} & 50M & 224 & 24 &  \revision{\textbf{83.4}} \\
& \revision{Swin-S + FeatScale} & 50M & 224 & 24 &  \revision{\textbf{83.5}} \\
\bottomrule
\end{tabular}}
\vspace{-6mm}
\end{center}
\end{table}

\subsection{Comparison with SOTA Models} \label{sec:compare_sota}
\vspace{-0.5em}
In this subsection, we compare our best models with state-of-the-art models on ImageNet benchmark. We choose SOTA models from three classes: CNN only, CNN + transformer, and pure transformer. For transformer domain, we only conduct experiments with those lightweight models with comparable number of parameters, such as ViT-S and DeiT-S. All the results are presented in Table \ref{tbl:compare_sota}.

\revision{Among all methods, Swin-Transformer combined with our methods achieves the state-of-the-art performance.} Our CaiT-S + AttnScale and CaiT-S + FeatScale on 24-layer CaiT-S also attain superior results over all other pure transformers, while keeping low parameter cost. That our performance surpasses some CNN-based models (e.g., ResNet-152 and CVT) indicates by increasing depth, ViT will be endowed with higher potential to surpass CNNs that have been dominating computer vision domain so far.
Our DeiT-S+FeatScale result also outperforms ViT-B/16 and DeiT-S Distilled, which suggests deepening network can bring more considerable accuracy gain than increasing model width or employing a teacher model.

% \begin{figure}[h]
% \begin{center}
% %\framebox[4.0in]{$\;$}
% \fbox{\rule[-.5cm]{0cm}{4cm} \rule[-.5cm]{4cm}{0cm}}
% \end{center}
% \caption{Visualize Scaling factors + Spectrum of Attn Matrix Before and After}
% \end{figure}

\section{Conclusion}
\vspace{-0.5em}
In this paper, we investigate the scalability issue with ViT and propose two practical solutions via Fourier domain analysis.
Our theoretical findings indicate Multi-Head Self-Attention (MSA) inherently performs low-pass filtering on image signals, thus causes rank collapse and patch uniformity problems in deep ViT.
To this end, we proposed two techniques, AttnScale and FeatScale, that can effectively break such low-pass filtering bottleneck by adaptively scaling high-pass filter component and high-frequency signals, respectively.
Our experiments also validate the effectiveness of our methods. Both techniques can boost various ViT backbones by a significant performance gain.
Grounded with our theoretical framework, interesting directions for further work include designing parameter regularizations and spectrum-specific normalization layers.

% \subsubsection*{Author Contributions}
% If you'd like to, you may include  a section for author contributions as is done
% in many journals. This is optional and at the discretion of the authors.

\subsubsection*{Acknowledgments}
\vspace{-0.5em}
Z.W. is in part supported by an NSF SCALE MoDL project (\#2133861).

\bibliography{iclr2022_conference}
\bibliographystyle{iclr2022_conference}

\appendix

\section{More Preliminaries on Fourier Analysis} \label{sec:more_fourier}

In this appendix, we provide more preliminary knowledge about Fourier analysis.
Here, we only consider discrete Fourier transform on real-value domain $\FT: \real^{n} \rightarrow \complex^{n}$. A Discrete Fourier transform (DFT) can be written in a matrix form as below \footnote{Without loss of generality, we can only consider 1D Fourier transformer, since the DC components are invariant to the dimension of signals.}:
\begin{align} \label{eqn:dft_matrix}
\Mat{DFT} = \frac{1}{\sqrt{n}} \begin{bmatrix}
1 & 1 & \cdots & 1 \\
1 & e^{2 \pi \mathrm{j}} & \cdots & e^{2 \pi \mathrm{j}(n-1)} \\
\vdots & \vdots & \ddots & \vdots \\
1 & e^{2 \pi \mathrm{j}(k-1) \cdot 1} & \cdots & e^{2\pi \mathrm{j} (k-1) \cdot (n-1)} \\
\vdots & \vdots & \ddots & \vdots \\
1 & e^{2 \pi \mathrm{j}(n-1)} & \cdots & e^{2\pi \mathrm{j}(n-1)^2}
\end{bmatrix},
\end{align}
and its inverse discrete Fourier transform is $\Mat{DFT}^{-1} = \Mat{DFT}$.
In this paper, we regard matrices as multi-channel signals. For example, $\Mat{X} \in \real^{n \times d}$ means $d$-channel $n$-length signals. When DFT and inverse DFT are applies to multi-channel signals, each channel is transformed independently, i.e., $\FT(\Mat{X}) = \begin{bmatrix}\FT(\Mat{x}_1) & \cdots & \FT(\Mat{x}_d)\end{bmatrix} = \Mat{DFT}\cdot\Mat{X}$. 

Hereby, we can simply operators $\DC{\cdot}$ and $\HC{\cdot}$ using the matrices in Eqn. \ref{eqn:dft_matrix}. By definition, we can write $\DC{\cdot}$ as below:
\begin{align}
\DC{\Mat{x}} &= \Mat{DFT}^{-1} \diag(1, 0, \cdots, 0) \Mat{DFT} \Mat{x} \\
&= \frac{1}{n} \Mat{1} \Mat{1}^T \Mat{x},
\end{align}
namely $\DC{\cdot} = \Mat{1}\Mat{1}^T / n$. Conversely, we can write $\HC{\cdot}$ as:
\begin{align}
\HC{\Mat{x}} &= \Mat{DFT}^{-1} \diag(0, 1, \cdots, 1) \Mat{DFT} \Mat{x} \\
&= \Mat{DFT}^{-1} (\Mat{I} - \diag(1, 0, \cdots, 0)) \Mat{DFT} \Mat{x} \\
&= \Mat{I} - \frac{1}{n} \Mat{1} \Mat{1}^T \Mat{x},
\end{align}
which indicates $\HC{\cdot} = \Mat{I} - \Mat{1}\Mat{1}^T / n$.
We will frequently use these derivations later in the proofs.

\section{Deferred Proofs} \label{sec:proofs}

\subsection{Proof of Theorem \ref{thm:sa_low_pass}} \label{prf:sa_low_pass}
\begin{customthm}{1}
(SA matrix is a low-pass filter)
Let $\Mat{A} = \softmax(\Mat{P})$, where $\Mat{P} \in \real^{n \times n}$. Then $\Mat{A}$ must be a low-pass filter. For all $\Mat{z} \in \real^{n}$,
\begin{align*} \label{eqn:low_pass}
\lim_{t \rightarrow \infty} \frac{\lVert \HC{\Mat{A}^t\Mat{z}} \rVert_2}{\lVert \DC{\Mat{A}^t \Mat{z}} \rVert_2} = 0.
\end{align*}
\end{customthm}

\begin{proof}
Let $\lambda_1, \lambda_2, \cdots, \lambda_s \in \complex$ be the eigenvalues of $\Mat{A}$ with ordering $|\lambda_1| \ge |\lambda_2| \ge \cdots \ge |\lambda_s|$. Notice that, $\Mat{A}$ is a positive matrix, each of whose element is strictly greater than zero ($\Mat{A}_{ij} > 0, \forall i,j$). Besides, for all $i = 1, \cdots, n$, $\sum_{j=1}^{n} \Mat{A}_{i,j} = 1$. Therefore, $\Mat{A} \Mat{1} = \Mat{1}$ implies $\Mat{A}$ must have an eigenvalue 1 and its corresponding eigenvector is the all-one vector $\Mat{1}$.

By Perron-Frobenius Theorem \citep{meyer2000matrix}, eigenvalue 1 corresponds to a all-positive eigenvector $\Mat{1}$, implies $\lambda_1 = 1$ should be the largest eigenvalue without multiplicity, and the absolute value of other eigenvalues $\lambda_2, \cdots, \lambda_s$ must be less than 1. Let us rewrite $\Mat{A}$ in the Jordan canonical form $\Mat{A} = \Mat{P}\Mat{J}\Mat{P}^{-1}$:
\begin{align}
\Mat{A} = \underbrace{\begin{bmatrix} \Mat{v}_1 & \cdots & \Mat{v}_n \end{bmatrix}}_{\Mat{P}}
\underbrace{\begin{bmatrix} \lambda_1 & & & \\ & \Mat{J}(\lambda_2) & & \\ & & \ddots & \\ & & & \Mat{J}(\lambda_s) \end{bmatrix}}_{\Mat{J}}
\underbrace{\begin{bmatrix} \Mat{u}_1^T \\ \vdots \\ \Mat{u}_n^T \end{bmatrix}}_{\Mat{P}^{-1}},
\end{align}
where the Jordan block $\Mat{J}(\lambda)$ can be written as
\begin{align}
\Mat{J}(\lambda) = \begin{bmatrix} \lambda & 1 & & & \\ & \lambda & 1 & & \\ & & \ddots & \ddots & \\ & & & \lambda & 1 \\ & & & & \lambda \end{bmatrix}.
\end{align}
Applying $\Mat{A}$ to $\Mat{z}$ for $t$ times can be written as $\Mat{A}^t \Mat{z}$ which is equivalent to:
\begin{align}
\Mat{A}^t \Mat{z} &= \Mat{P} \Mat{J}^t \Mat{P}^{-1} \Mat{z} = \Mat{P}
\begin{bmatrix} \lambda_1 & & & \\ & \Mat{J}(\lambda_2) & & \\ & & \ddots & \\ & & & \Mat{J}(\lambda_s) \end{bmatrix}^t
\Mat{P}^{-1}\Mat{z} \\
&= \Mat{P}
\begin{bmatrix} \lambda_1^t & & & \\ & \Mat{J}(\lambda_2)^t & & \\ & & \ddots & \\ & & & \Mat{J}(\lambda_s)^t \end{bmatrix}
\Mat{P}^{-1}\Mat{z}
\end{align}
Let $f(x) = x^t$, then $\Mat{A}^t = \Mat{P} f(\Mat{J}) \Mat{P}^{-1} = \Mat{P} \diag(f(\lambda_1), f(\Mat{J}(\lambda_2)), \cdots, f(\Mat{J}(\lambda_s))) \Mat{P}^{-1}$. Suppose a Jordan block with shape $k \times k$, then
\begin{align}
    f(\Mat{J}(\lambda)) = \begin{bmatrix}
        f(\lambda) & f'(\lambda) & \frac{f''(\lambda)}{2!} & \cdots & \frac{f^{(k-1)}(\lambda)}{(k-1)!} \\
        & f(\lambda) & f'(\lambda) & \ddots & \vdots \\
        & & \ddots & \ddots & \frac{f''(\lambda)}{2!} \\
        & & & f(\lambda) & f'(\lambda) \\
        & & & & f(\lambda) \\
    \end{bmatrix}
\end{align}
Therefore, on the diagonal number $m \le \min(t, k-1)$ above the main diagonal stands:
\begin{align}
\frac{t(t-1)...(t-m+1)}{m!} \lambda^{t-m}
\end{align}
For arbitrary $m \le k-1$ and $|\lambda| < 1$,
\begin{align} \label{eqn:low_pass_lim_binomial}
\lim_{t \rightarrow \infty} \frac{t(t-1)...(t-m+1)}{m!} \lambda^{t-m} = 0
\end{align}
Recall that $\lambda_1 = 1$ and according to the definition of Jordan canonical form, $\Mat{v}_1 = \Mat{1}$. By Eqn. \ref{eqn:low_pass_lim_binomial}:
\begin{align}
\lim_{t \rightarrow \infty} \Mat{A}^t \Mat{z} &= \Mat{P} \lim_{t \rightarrow \infty} \diag(f(\lambda_1), f(\Mat{J}(\lambda_2)), \cdots, f(\Mat{J}(\lambda_s))) \Mat{P}^{-1} \Mat{z} \\
&= \Mat{P} \diag(\lambda_1^t, \Mat{0}, \cdots, \Mat{0}) \Mat{P}^{-1} \Mat{z} \\
&= \lambda_1^t \Mat{v}_1 \Mat{u}_1^T \Mat{z} \\
\label{eqn:low_pass_converged_x} &= \Mat{1} \Mat{u}_1^T \Mat{z}
\end{align}
Plug the result from Eqn. \ref{eqn:low_pass_converged_x} into the original limit:
\begin{align}
\lim_{t \rightarrow \infty} \frac{\lVert \HC{\Mat{A}^t\Mat{z}} \rVert_2}{\lVert \DC{\Mat{A}^t \Mat{z}} \rVert_2} &= \lim_{t \rightarrow \infty} \sqrt{\frac{\lVert \HC{\Mat{A}^t\Mat{z}} \rVert_2^2}{\lVert \Mat{z} - \HC{\Mat{A}^t\Mat{z}} \rVert_2^2}} \\
\label{eqn:low_pass_ortho_decomp} &= \lim_{t \rightarrow \infty} \sqrt{\frac{\lVert \HC{\Mat{A}^t\Mat{z}} \rVert_2^2}{\lVert \Mat{z} \rVert_2^2 - \lVert \HC{\Mat{A}^t\Mat{z}} \rVert_2^2}} \\
&= \lim_{t \rightarrow \infty} \sqrt{\frac{\lVert (\Mat{I} - \frac{1}{n}\Mat{1}\Mat{1}^T) \Mat{A}^t\Mat{z} \rVert_2^2}{\lVert \Mat{z} \rVert_2^2 - \lVert (\Mat{I} - \frac{1}{n}\Mat{1}\Mat{1}^T) \Mat{A}^t\Mat{z} \rVert_2^2}} \\
&= \sqrt{\frac{\lVert (\Mat{I} - \frac{1}{n}\Mat{1}\Mat{1}^T) \Mat{1}\Mat{u}_1^T\Mat{z} \rVert_2^2}{\lVert \Mat{z} \rVert_2^2 - \lVert (\Mat{I} - \frac{1}{n}\Mat{1}\Mat{1}^T) \Mat{1}\Mat{u}_1^T\Mat{z} \rVert_2^2}} \\
&= \sqrt{\frac{\lVert (\Mat{I}\Mat{1}\Mat{u}_1^T\Mat{z} - \Mat{1}\Mat{u}_1^T\Mat{z} \rVert_2^2}{\lVert \Mat{z} \rVert_2^2 - \lVert (\Mat{I}\Mat{1}\Mat{u}_1^T\Mat{z} - \Mat{1}\Mat{u}_1^T\Mat{z} \rVert_2^2}} \\
&= 0
\end{align}
where Eqn. \ref{eqn:low_pass_ortho_decomp} is due to the orthogonality of DC and HC terms.
\end{proof}

\subsection{Proof of Corollary \ref{cor:sa_low_pass}}
\begin{customcor}{2}
Let $\Mat{P}_1, \Mat{P}_2, \cdots, \Mat{P}_n$ be a sequence of matrix in $\real^{n \times n}$, and each $\Mat{P}_k, \forall k=1,\cdots,L$ has $\Mat{A}_k = \softmax(\Mat{P}_k)$. Then $\prod_{k=1}^{L} \Mat{A}_k$ is also a low-pass filter.
\end{customcor}
\begin{proof}
Let $\Mat{A} = \prod_{k=1}^{L} \Mat{A}_k$, then we show $\Mat{A}$ satisfies the following conditions, so that $\Mat{A}$ can be regarded as another self-attention matrix. Then we can conclude the proof by Theorem \ref{thm:sa_low_pass}.

1) For every $i=1,\cdots,n$, $\sum_{j=1}^{n} \Mat{A}_{ij} = 1$.

Suppose $\sum_{j=1}^{n} \Mat{B}_{ij} = 1$ for every $i = 1, \cdots, n$, then for every $k=1,\cdots,L$ and $i=1,\cdots,n$,
\begin{align}
\sum_{j=1}^{n} (\Mat{A}_k \Mat{B})_{ij} &= \sum_{j=1}^{n} \sum_{m=1}^{n} \Mat{A}_{k,im} \Mat{B}_{mj} \\
&= \sum_{m=1}^{n} \left( \Mat{A}_{k,im} \left( \sum_{j=1}^{n} \Mat{B}_{mj} \right) \right) \\
&= \sum_{m=1}^{n} \Mat{A}_{k,im} = 1.
\end{align}
By induction, for every $i$, $\sum_{j=1}^{n} \Mat{A}_{1,ij} = 1 \Rightarrow \sum_{j=1}^{n} (\Mat{A}_2\Mat{A}_1)_{ij} = 1 \Rightarrow \cdots \Rightarrow \sum_{j=1}^{n} \Mat{A}_{ij} = 1$.

2) For every $i,j=1,\cdots,n$, $\Mat{A}_{ij} > 0$.

Suppose $\Mat{B}_{ij} > 0, \forall i,j$, then for every $k=1,\cdots,L$, $(\Mat{A}_k\Mat{B})_{ij} = \sum_{k=1}^{n} \Mat{A}_{k,im} \Mat{B}_{mj}$. Since $\Mat{A}_{k,im} > 0, \Mat{B}_{mj} > 0$, then $(\Mat{A}_k\Mat{B})_{ij} > 0$.
By induction, for every $i,j$, $\Mat{A}_{1,ij} > 0 \Rightarrow (\Mat{A}_2\Mat{A}_1)_{ij} > 0 \Rightarrow \cdots \Rightarrow \Mat{A}_{ij} > 0$.
\end{proof}

\subsection{Proof of Theorem \ref{thm:sa_rate}} \label{prf:sa_rate}
\begin{customthm}{3}
(convergence rate of SA)
Let $\Mat{A} = \softmax(\Mat{P})$ and $\alpha = \max_{i,j} \lvert \Mat{P}_{ij} \rvert$, where $\Mat{P} \in \real^{n \times n}$. Define $\SA(\Mat{X}) = \Mat{A}\Mat{X}\Mat{W}_V$ as the output of a self-attention module, then
\begin{align*}
\lVert \HC{\SA(\Mat{X})} \rVert_F \le \sqrt{\frac{n e^{2\alpha}}{e^{2\alpha} + n - 1}} \lVert\Mat{W}_V\rVert_2 \lVert \HC{\Mat{X}} \rVert_F.
\end{align*}
In particular, when $\Mat{P} = \Mat{X}\Mat{W}_Q (\Mat{X}\Mat{W}_K)^T/\sqrt{d}$, and assume tokens are distributed inside a ball with radius $\gamma > 0$, i.e., $\lVert\Mat{x}_i\rVert_2 \le \gamma, \forall i=1,\cdots,n$, then $\alpha \le \gamma^2 \lVert \Mat{W}_Q\Mat{W}_K^T \rVert_2 / \sqrt{d}$.
\end{customthm}
\begin{proof}
First, we write $\Mat{X} = \DC{\Mat{X}} + \HC{\Mat{X}} = \Mat{1}^T \Mat{z} + \Mat{H}$, where $\DC{\Mat{X}} = \Mat{1} \Mat{z}^T$ equals to the orthogonal projection of $\Mat{X}$ to subspace $\Span(\Mat{1})$, and $\Mat{H} = \HC{\Mat{X}}$ represents the remaining part of the original signals.
\begin{align}
\HC{\SA(\Mat{X})} &= (\Mat{I} - \Mat{1}\Mat{1}^T)\Mat{A}\Mat{X}\Mat{W}_V \\
&= \left(\Mat{I} - \frac{1}{n}\Mat{1}\Mat{1}^T\right)\Mat{A}(\Mat{1}\Mat{z}^T + \Mat{H})\Mat{W}_V \\
&= \left(\Mat{I} - \frac{1}{n}\Mat{1}\Mat{1}^T\right)\Mat{A}\Mat{1}\Mat{z}^T\Mat{W}_V + \left(\Mat{I} - \frac{1}{n}\Mat{1}\Mat{1}^T\right)\Mat{A}\Mat{H}\Mat{W}_V \\
&= \left(\Mat{I} - \frac{1}{n}\Mat{1}\Mat{1}^T\right)\Mat{A}\Mat{H}\Mat{W}_V
\end{align}
Therefore,
\begin{align}
\left\lVert \HC{\SA(\Mat{X})} \right\rVert_F &= \left\lVert \left(\Mat{I} - \frac{1}{n}\Mat{1}\Mat{1}^T\right)\Mat{A}\Mat{H} \Mat{W}_V \right\rVert_F \\
&\le \left\lVert\Mat{I} - \frac{1}{n}\Mat{1}\Mat{1}^T\right\rVert_2 \lVert\softmax(\Mat{P})\rVert_2 \lVert \Mat{W}_V \rVert_2 \lVert \Mat{H}\rVert_F \\
\label{eqn:rate_holder} &\le \sqrt{\lVert\softmax(\Mat{P})\rVert_1 \lVert\softmax(\Mat{P})\rVert_\infty} \lVert \Mat{W}_V \rVert_2 \lVert\Mat{H}\rVert_F \\
\label{eqn:rate_elimin_infty} &= \sqrt{\lVert\softmax(\Mat{P})\rVert_1} \lVert \Mat{W}_V \rVert_2 \lVert\Mat{H}\rVert_F
\end{align}
The Eqn. \ref{eqn:rate_holder} leverages a special case of H\"{o}lder's inequality, and the Eqn. \ref{eqn:rate_elimin_infty} can be yielded from $\lVert\softmax(\Mat{P})\rVert_\infty = 1$. Now we need to upper bound $\lVert\softmax(\Mat{P})\rVert_1$. Suppose $\alpha = \max_{i,j} \lvert \Mat{P}_{ij} \rvert$, then for each $i = 1, \cdots, n$, we have the following inequality for the element with the largest value (say the $j$-th column):
\begin{align}
    \Mat{A}_{ij} = \frac{e^{\Mat{P}_{ij}}}{\sum_{t=1}^{n} e^{\Mat{P}_{it}}}
    \le \frac{e^{\alpha}}{e^{\alpha} + \sum_{t \ne j} e^{-\alpha}}
    = \frac{e^{2\alpha}}{e^{2\alpha} + (n-1)} 
\end{align}
Hence, we have $\lVert\softmax(\Mat{P})\rVert_1 \le \sum_{i} \max_{j} \Mat{A}_{ij} \le \frac{ne^{2\alpha}}{e^{2\alpha} + (n-1)}$. Insert this result to Eqn. \ref{eqn:rate_elimin_infty}, we can conclude the proof. In particular, when $\Mat{P} = \Mat{X}\Mat{W}_Q (\Mat{X}\Mat{W}_K)^T/\sqrt{d}$,
\begin{align}
    \alpha = \max_{i,j} \lvert \Mat{P}_{ij} \rvert = \max_{i,j} \left\lvert \frac{\Mat{x}_i^T \Mat{W}_Q\Mat{W}_K^T \Mat{x}_j}{\sqrt{d}} \right\rvert.
\end{align}
Since $\lVert\Mat{x}_i\rVert_2, \lVert\Mat{x}_j\rVert_2 \le \gamma, \forall i,j$, $\alpha \le \max_{i,j} \lVert\Mat{x}_i\rVert_2 \lVert\Mat{W}_Q\Mat{W}_K^T\rVert_2 \lVert\Mat{x}_j\rVert_2 / \sqrt{d} \le \gamma^2 \lVert\Mat{W}_Q\Mat{W}_K^T\rVert_2 / \sqrt{d}$.
\end{proof}

\section{Extension of Theorem \ref{thm:sa_rate}} \label{sec:extension_thm}

\subsection{Multi-Head Attention} \label{sec:mha_rate}

\begin{proposition} \label{prop:mha_rate}
(smoothening rate with MSA)
Let $\Mat{A}^{h} = \softmax(\Mat{P}^{h})$, where $\Mat{P}^{h} \in \real^{n \times n}$ with $h = 1, \cdots, H$. Let $\alpha = \max_{h=1}^{H} \max_{i,j} \lvert \Mat{P}^{h}_{ij} \rvert$. Define $\MSA(\Mat{X}) = \sum_{h=1}^{H} \Mat{A}^{h} \Mat{X} \Mat{W}_V^h \Mat{W}_O^h$ as the output of a multi-head self-attention module, then
\begin{align*}
\lVert \HC{\MSA(\Mat{X})} \rVert_F \le \sigma_1 \sigma_2 H \sqrt{\frac{n e^{2\alpha}}{e^{2\alpha} + n - 1}} \lVert \HC{\Mat{X}} \rVert_F,
\end{align*}
where $H$ is the number of heads, $\sigma_1 = \max_{h=1}^{H} \lVert\Mat{W}_V^h\rVert_2$ and $\sigma_2 = \max_{h=1}^{H} \lVert\Mat{W}_O^h\rVert_2$.
\end{proposition}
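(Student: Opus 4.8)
The plan is to reduce Proposition \ref{prop:mha_rate} to a term-by-term application of Theorem \ref{thm:sa_rate}, using the linearity of the high-frequency projection operator $\HC{\cdot} = \Mat{I} - \tfrac{1}{n}\Mat{1}\Mat{1}^T$. First I would write $\MSA(\Mat{X}) = \sum_{h=1}^{H} \Mat{A}^h \Mat{X} \Mat{W}_V^h \Mat{W}_O^h$ and note that since $\HC{\cdot}$ is a fixed linear map (left-multiplication by a matrix), we have
\begin{align}
\HC{\MSA(\Mat{X})} = \sum_{h=1}^{H} \left(\Mat{I} - \tfrac{1}{n}\Mat{1}\Mat{1}^T\right) \Mat{A}^h \Mat{X} \Mat{W}_V^h \Mat{W}_O^h = \sum_{h=1}^{H} \HC{\SA_h(\Mat{X})\Mat{W}_O^h},
\end{align}
where I treat $\SA_h(\Mat{X}) = \Mat{A}^h \Mat{X} \Mat{W}_V^h$ as a single-head output in the sense of Theorem \ref{thm:sa_rate}. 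Then I would apply the triangle inequality for the Frobenius norm to split the sum.

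Next, for each head I would bound $\lVert \HC{\SA_h(\Mat{X})\Mat{W}_O^h} \rVert_F \le \lVert \HC{\SA_h(\Mat{X})} \rVert_F \lVert \Mat{W}_O^h \rVert_2$, pulling out the right multiplication by $\Mat{W}_O^h$ via submultiplicativity of $\lVert \cdot \rVert_F$ against $\lVert \cdot \rVert_2$; here one uses that $\HC{\Mat{M}\Mat{W}} = \HC{\Mat{M}}\Mat{W}$ since $\HC{\cdot}$ acts on the left. Then Theorem \ref{thm:sa_rate} gives directly $\lVert \HC{\SA_h(\Mat{X})} \rVert_F \le \sqrt{\tfrac{ne^{2\alpha_h}}{e^{2\alpha_h}+n-1}}\,\lVert\Mat{W}_V^h\rVert_2\,\lVert\HC{\Mat{X}}\rVert_F$, where $\alpha_h = \max_{i,j}|\Mat{P}^h_{ij}|$. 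Since the map $x \mapsto \tfrac{ne^{2x}}{e^{2x}+n-1}$ is increasing in $x$, and $\alpha = \max_h \alpha_h$, I can replace each $\alpha_h$ by the uniform $\alpha$ to get a common factor. Bounding $\lVert\Mat{W}_V^h\rVert_2 \le \sigma_1$ and $\lVert\Mat{W}_O^h\rVert_2 \le \sigma_2$ for every $h$, each of the $H$ summands is at most $\sigma_1\sigma_2\sqrt{\tfrac{ne^{2\alpha}}{e^{2\alpha}+n-1}}\lVert\HC{\Mat{X}}\rVert_F$; summing over $h$ yields the factor $H$ and completes the proof.

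The only mild subtlety — not really an obstacle — is making sure the high-frequency residual of the DC part of $\Mat{X}$ still vanishes head-by-head: the computation in the proof of Theorem \ref{thm:sa_rate} showing $\left(\Mat{I}-\tfrac1n\Mat{1}\Mat{1}^T\right)\Mat{A}^h\Mat{1}\Mat{z}^T = \Mat{0}$ relies only on $\Mat{A}^h\Mat{1}=\Mat{1}$, which holds for every softmax matrix, so it applies verbatim to each head. (One should double-check the normalization convention for $\HC{\cdot}$; the excerpt's Appendix \ref{sec:more_fourier} gives $\HC{\cdot} = \Mat{I} - \tfrac1n\Mat{1}\Mat{1}^T$, so I would use that form consistently rather than the $\Mat{I}-\Mat{1}\Mat{1}^T$ that appears in the Theorem \ref{thm:sa_rate} proof.) Everything else is routine norm bookkeeping, so I expect the write-up to be short.
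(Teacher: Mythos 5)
Your proposal is correct and follows essentially the same route as the paper's proof: linearity of $\HC{\cdot}$ plus the triangle inequality to split the heads, a per-head application of Theorem \ref{thm:sa_rate} with the factor $\lVert\Mat{W}_O^h\rVert_2$ pulled out, and then relaxing $\alpha_h$, $\lVert\Mat{W}_V^h\rVert_2$, $\lVert\Mat{W}_O^h\rVert_2$ to $\alpha$, $\sigma_1$, $\sigma_2$ using the monotonicity of $\sqrt{ne^{2\alpha}/(e^{2\alpha}+n-1)}$ to obtain the factor $H$. Your remarks about $\HC{\Mat{M}\Mat{W}}=\HC{\Mat{M}}\Mat{W}$ and the $\tfrac{1}{n}$ normalization are correct and, if anything, make the bookkeeping slightly more explicit than the paper's write-up.
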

\begin{proof}
For the $h$-th head, according to Theorem \ref{thm:sa_rate}:
\begin{align}
\lVert \HC{\SA_h(\Mat{X})} \rVert_F \le \sqrt{\frac{n e^{2\alpha_h}}{e^{2\alpha_h} + n - 1}} \lVert\Mat{W}_V^h\rVert_2 \lVert \HC{\Mat{X}} \rVert_F,
\end{align}
where $\alpha_h = \max_{i,j} \lvert \Mat{P}^{h}_{ij} \rvert$. Then we have:
\begin{align}
\lVert \HC{\MSA_h(\Mat{X})} \rVert_F &= \left\lVert \HC{\sum_{h=1}^{H} \SA_h(\Mat{X}) \Mat{W}_O^h} \right\rVert_F \\
\label{eqn:mha_tri_inequality} &\le \sum_{h=1}^{H} \left\lVert \HC{\SA_h(\Mat{X}) \Mat{W}_O^h} \right\rVert_F \\
&\le \sum_{h=1}^{H} \sqrt{\frac{n e^{2\alpha_h}}{e^{2\alpha_h} + n - 1}} \lVert\Mat{W}_V^h\rVert_2 \lVert \HC{\Mat{X}} \rVert_2 \\
&\le \sum_{h=1}^{H} \sqrt{\frac{n e^{2\alpha_h}}{e^{2\alpha_h} + n - 1}} \lVert\Mat{W}_V^h\rVert_2 \lVert\Mat{W}_O^h\rVert_2 \lVert \HC{\Mat{X}} \rVert_F \\
\label{eqn:mha_relax} &\le \sigma_1 \sigma_2 H \sqrt{\frac{n e^{2\alpha}}{e^{2\alpha} + n - 1}} \lVert \HC{\Mat{X}} \rVert_F,
\end{align}
where Eqn. \ref{eqn:mha_tri_inequality} follows from the linearity of $\HC{\cdot}$ and triangle inequality. Eqn. \ref{eqn:mha_relax} can be obtained by relaxing $\alpha_h$, $\lVert\Mat{W}_V^h\rVert_2$, and $\lVert\Mat{W}_O^h\rVert_2$ to $\alpha$, $\sigma_1 $ and $\sigma_2$ (Note that $\sqrt{\frac{n e^{2\alpha}}{e^{2\alpha} + n - 1}}$ is monotonically increasing with $\alpha$).
\end{proof}

\subsection{Residual Connection} \label{sec:res_rate}

\begin{proposition} \label{prop:res_rate}
(smoothening rate with skip connection)
Let $\Mat{A}^{h} = \softmax(\Mat{P}^{h})$, where $\Mat{P}^{h} \in \real^{n \times n}$ with $h = 1, \cdots, H$. Let $\alpha = \max_{h=1}^{H} \max_{i,j} \lvert \Mat{P}^{h}_{ij} \rvert$. Define $\Mat{X}' = \MSA(\Mat{X}) + \Mat{X}$ as the output of a multi-head self-attention module with skip connection, then
\begin{align*}
\lVert \HC{\Mat{X}'} \rVert_F \le \left( 1 + \sigma_1 \sigma_2 H \sqrt{\frac{n e^{2\alpha}}{e^{2\alpha} + n - 1}} \right) \lVert \HC{\Mat{X}} \rVert_F
\end{align*}
where $H$ is the number of heads, $\sigma_1 = \max_{h=1}^{H} \lVert\Mat{W}_V^h\rVert_2$ and $\sigma_2 = \max_{h=1}^{H} \lVert\Mat{W}_O^h\rVert_2$.
\end{proposition}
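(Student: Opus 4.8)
The plan is to reduce the statement to Proposition~\ref{prop:mha_rate} by exploiting the linearity of the high-frequency projector $\HC{\cdot}$ together with the triangle inequality for the Frobenius norm. Recall from Appendix~\ref{sec:more_fourier} that $\HC{\cdot}$ acts as left-multiplication by $\Mat{I} - \frac{1}{n}\Mat{1}\Mat{1}^T$, so it is a linear operator and in particular additive.

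First I would write $\Mat{X}' = \MSA(\Mat{X}) + \Mat{X}$ and apply $\HC{\cdot}$ to both sides, using additivity to get $\HC{\Mat{X}'} = \HC{\MSA(\Mat{X})} + \HC{\Mat{X}}$. Then by the triangle inequality in $\lVert \cdot \rVert_F$,
\begin{align*}
\lVert \HC{\Mat{X}'} \rVert_F \le \lVert \HC{\MSA(\Mat{X})} \rVert_F + \lVert \HC{\Mat{X}} \rVert_F.
\end{align*}
Next I would invoke Proposition~\ref{prop:mha_rate}, which bounds the first term on the right by $\sigma_1 \sigma_2 H \sqrt{\frac{n e^{2\alpha}}{e^{2\alpha} + n - 1}} \lVert \HC{\Mat{X}} \rVert_F$, with the same $\alpha$, $\sigma_1$, $\sigma_2$ as in the present statement (note the hypotheses on $\Mat{A}^h$, $\Mat{P}^h$, and the definition of $\alpha$ are identical, so the proposition applies verbatim). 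Substituting this in and factoring out $\lVert \HC{\Mat{X}} \rVert_F$ gives exactly
\begin{align*}
\lVert \HC{\Mat{X}'} \rVert_F \le \left( 1 + \sigma_1 \sigma_2 H \sqrt{\frac{n e^{2\alpha}}{e^{2\alpha} + n - 1}} \right) \lVert \HC{\Mat{X}} \rVert_F,
\end{align*}
which is the claim.

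There is essentially no obstacle here: the only point requiring a moment of care is confirming that $\HC{\cdot}$ is genuinely additive (so that the skip connection's contribution splits cleanly), which is immediate from its matrix representation $\Mat{I} - \frac{1}{n}\Mat{1}\Mat{1}^T$ derived in Appendix~\ref{sec:more_fourier}. Everything else is the triangle inequality plus a direct citation of the multi-head bound. This is why the "$+1$" appears — the identity map contributes its full high-frequency energy through the residual path, and unlike the MSA term it is not contracted.
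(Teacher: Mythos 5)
Your proposal is correct and follows essentially the same route as the paper: the paper also applies the linearity of $\HC{\cdot}$ with the triangle inequality to split off the residual term, then invokes Proposition~\ref{prop:mha_rate} to bound $\lVert \HC{\MSA(\Mat{X})} \rVert_F$ and factors out $\lVert \HC{\Mat{X}} \rVert_F$. Nothing is missing.
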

\begin{proof}
By Proposition \ref{prop:mha_rate},
\begin{align}
\lVert \HC{\Mat{X}'} \rVert_F &= \lVert \HC{\MSA(\Mat{X}) + \Mat{X}} \rVert_F \\
\label{eqn:res_tri_inequality} &\le \lVert \HC{\MSA(\Mat{X})} \rVert_F + \lVert \HC{\Mat{X}} \rVert_F \\
&\le \sqrt{\frac{n e^{2\alpha}}{e^{2\alpha} + n - 1}} \lVert \HC{\Mat{X}} \rVert_F + \lVert \HC{\Mat{X}} \rVert_F \\
&= \left( 1 + \sigma_1 \sigma_2 H \sqrt{\frac{n e^{2\alpha}}{e^{2\alpha} + n - 1}} \right) \lVert \HC{\Mat{X}} \rVert_F.
\end{align}
Again, Eqn. \ref{eqn:res_tri_inequality} follows from the linearity of $\HC{\cdot}$ and triangle inequality.
\end{proof}

\subsection{Feed-Forward Network} \label{sec:ffn_rate}

\begin{proposition} \label{prop:ffn_rate}
(smoothening rate with FFN)
Let $\Mat{A}^{h} = \softmax(\Mat{P}^{h})$, where $\Mat{P}^{h} \in \real^{n \times n}$ with $h = 1, \cdots, H$. Let $\alpha = \max_{h=1}^{H} \max_{i,j} \lvert \Mat{P}^{h}_{ij} \rvert$. Define $\Mat{Y} = \FFN(\MSA(\Mat{X}) + \Mat{X})$ as the output of a transformer block, then
\begin{align*}
\lVert \HC{\Mat{Y}} \rVert_F \le \sigma_3 \left( 1 + \sigma_1 \sigma_2 H \sqrt{\frac{n e^{2\alpha}}{e^{2\alpha} + n - 1}} \right) \lVert \HC{\Mat{X}} \rVert_F
\end{align*}
where $\FFN: \real^d \rightarrow \real^d$ represents a feed-forward network, $H$ is the number of heads, $\sigma_1 = \max_{h=1}^{H} \lVert\Mat{W}_V^h\rVert_2$, $\sigma_2 = \max_{h=1}^{H} \lVert\Mat{W}_O^h\rVert_2$, and $\sigma_3 = \Lips(\FFN)$ is the Lipschitz constant of FFN. In particular, $\sigma_3 = 1 + \Lips(\FFN)$ when residual connection is considered in FFN.
\end{proposition}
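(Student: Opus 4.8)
The plan is to reduce the claim to two ingredients: a contraction estimate for position-wise Lipschitz maps acting on the high-frequency norm, and the $\MSA$-with-skip bound of Proposition~\ref{prop:res_rate}. Write $\Mat{Z} = \MSA(\Mat{X}) + \Mat{X}$, so that $\Mat{Y} = \FFN(\Mat{Z})$, and recall from Appendix~\ref{sec:more_fourier} that $\HC{\cdot}$ is left-multiplication by $\Mat{I} - \frac{1}{n}\Mat{1}\Mat{1}^T$, which acts on the token (row) index. The first step is the variational identity $\lVert \HC{\Mat{Z}} \rVert_F = \min_{\Mat{c} \in \real^d} \lVert \Mat{Z} - \Mat{1}\Mat{c}^T \rVert_F$, equivalently $\lVert \HC{\Mat{Z}} \rVert_F^2 = \sum_{i=1}^n \lVert \Mat{z}_i - \bar{\Mat{z}} \rVert_2^2$ where $\Mat{z}_i$ is the $i$-th row of $\Mat{Z}$ and $\bar{\Mat{z}} = \frac{1}{n}\sum_i \Mat{z}_i$; this holds because the centroid of the rows minimizes the summed squared deviation.

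Next I would exploit that $\FFN$ is applied independently to each token, so $\FFN(\Mat{Z})$ has rows $\FFN(\Mat{z}_i)$. Using the variational characterization with the (suboptimal) choice $\Mat{c} = \FFN(\bar{\Mat{z}})$ and then the Lipschitz property,
\begin{align*}
\lVert \HC{\FFN(\Mat{Z})} \rVert_F^2 \le \sum_{i=1}^n \lVert \FFN(\Mat{z}_i) - \FFN(\bar{\Mat{z}}) \rVert_2^2 \le \sigma_3^2 \sum_{i=1}^n \lVert \Mat{z}_i - \bar{\Mat{z}} \rVert_2^2 = \sigma_3^2 \lVert \HC{\Mat{Z}} \rVert_F^2,
\end{align*}
i.e. $\lVert \HC{\FFN(\Mat{Z})} \rVert_F \le \sigma_3 \lVert \HC{\Mat{Z}} \rVert_F$ with $\sigma_3 = \Lips(\FFN)$. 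Combining this with Proposition~\ref{prop:res_rate}, which gives $\lVert \HC{\Mat{Z}} \rVert_F \le \bigl(1 + \sigma_1 \sigma_2 H \sqrt{n e^{2\alpha}/(e^{2\alpha}+n-1)}\bigr)\lVert \HC{\Mat{X}} \rVert_F$, produces exactly the stated bound.

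For the final remark, when a residual connection wraps the feed-forward map we instead have $\Mat{Y} = \FFN(\Mat{Z}) + \Mat{Z}$; linearity of $\HC{\cdot}$ and the triangle inequality give $\lVert \HC{\Mat{Y}} \rVert_F \le \lVert \HC{\FFN(\Mat{Z})} \rVert_F + \lVert \HC{\Mat{Z}} \rVert_F \le (1 + \Lips(\FFN))\lVert \HC{\Mat{Z}} \rVert_F$, so the effective constant is $\sigma_3 = 1 + \Lips(\FFN)$ and the remainder is unchanged. One can also note that $\operatorname{LayerNorm}$, being itself position-wise and Lipschitz, folds into the constant $\sigma_3$, so nothing is lost by writing the block as $\FFN(\MSA(\Mat{X}) + \Mat{X})$.

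I expect the first step to be the only real subtlety: in Theorem~\ref{thm:sa_rate} the DC part was annihilated by the exact factorization $(\Mat{I}-\frac{1}{n}\Mat{1}\Mat{1}^T)\Mat{A}\Mat{1} = \Mat{0}$, but $\FFN$ is nonlinear, so one cannot split off the DC component algebraically and must instead route through the centroid/variational argument together with the observation that any position-wise $\sigma_3$-Lipschitz map contracts $\lVert\HC{\cdot}\rVert_F$ by at most $\sigma_3$. Everything after that is the triangle inequality and bookkeeping, mirroring the proofs of Propositions~\ref{prop:mha_rate} and~\ref{prop:res_rate}.
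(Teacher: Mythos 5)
Your argument is correct and is essentially the paper's own proof: the centroid/variational identity you use is exactly Lemma~\ref{lem:hc_minimum}, your suboptimal choice $\Mat{c}=\FFN(\bar{\Mat{z}})$ together with the row-wise action of $\FFN$ and its Lipschitz constant reproduces the paper's chain of inequalities, and both then conclude via Proposition~\ref{prop:res_rate}. Your explicit treatment of the residual-around-FFN case (yielding $\sigma_3 = 1+\Lips(\FFN)$) is a small addition the paper only states without derivation, but it follows by the same triangle-inequality step used elsewhere.
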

\begin{proof}
Let $\Mat{X}' = \MSA(\Mat{X}) + \Mat{X}$ and $\Mat{z}' = \Mat{1}^T(\Mat{X}'/n) \in \real^d$, then we have
\begin{align}
\label{eqn:ffn_lem} \lVert \HC{\FFN(\Mat{X}')} \rVert_F &\le \lVert \FFN(\Mat{X}') - \Mat{1} \FFN(\Mat{z})^T \rVert_F \\
\label{eqn:ffn_row_constancy} &= \lVert \FFN(\Mat{X}') - \FFN(\Mat{1} \Mat{z}^T) \rVert_F \\
\label{eqn:ffn_lipschitz} &\le \sigma_3 \lVert \Mat{X}' - \Mat{1} \Mat{z}^T \rVert_F \\
&= \sigma_3 \left\lVert \left( \Mat{I} - \frac{1}{n}\Mat{1}\Mat{1}^T \right) \Mat{X}' \right\rVert_F = \sigma_3 \lVert \HC{\Mat{X}'} \rVert_F \\
\label{eqn:ffn_res} &\le \sigma_3 \left( 1 + \sigma_1 \sigma_2 H \sqrt{\frac{n e^{2\alpha}}{e^{2\alpha} + n - 1}} \right) \lVert \HC{\Mat{X}} \rVert_F,
\end{align}
where Eqn. \ref{eqn:ffn_lem} follows from Lemma \ref{lem:hc_minimum}, Eqn. \ref{eqn:ffn_row_constancy} holds because FFN operates row-wisely on feature matrix, and Eqn. \ref{eqn:ffn_lipschitz} is due to the definition of Lipschitz constant. Finally, Eqn. \ref{eqn:ffn_res} is yielded from Proposition \ref{prop:res_rate}.
\end{proof}

\begin{lemma} \label{lem:hc_minimum}
Given $\Mat{X} \in \real^{n \times d}$, $\lVert\HC{\Mat{X}}\rVert_F \le \lVert\Mat{X} - \Mat{1}\Mat{z}^T\rVert_F$ for all $\Mat{z} \in \real^{d}$.
\end{lemma}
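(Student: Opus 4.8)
The plan is to recognize Lemma~\ref{lem:hc_minimum} as nothing more than the statement that the orthogonal projection of $\Mat{X}$ onto the subspace of ``row-constant'' matrices $\{\Mat{1}\Mat{z}^T : \Mat{z}\in\real^d\}$ is the one that best approximates $\Mat{X}$, and that the residual of this projection is exactly $\HC{\Mat{X}}$. First I would invoke the explicit matrix identities from Appendix~\ref{sec:more_fourier}, namely $\HC{\Mat{X}} = (\Mat{I} - \tfrac1n\Mat{1}\Mat{1}^T)\Mat{X}$ and $\DC{\Mat{X}} = \tfrac1n\Mat{1}\Mat{1}^T\Mat{X}$, so that $\Mat{X} = \DC{\Mat{X}} + \HC{\Mat{X}}$ and, for any $\Mat{z}\in\real^d$, $\Mat{X} - \Mat{1}\Mat{z}^T = \HC{\Mat{X}} + \big(\DC{\Mat{X}} - \Mat{1}\Mat{z}^T\big)$.

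Next I would check that these two summands are orthogonal in the Frobenius inner product. On the one hand $\HC{\Mat{X}}^T\Mat{1} = \Mat{X}^T(\Mat{I} - \tfrac1n\Mat{1}\Mat{1}^T)\Mat{1} = \Mat{X}^T(\Mat{1} - \Mat{1}) = \Mat{0}$, i.e. every column of $\HC{\Mat{X}}$ is orthogonal to $\Mat{1}$; on the other hand $\DC{\Mat{X}} - \Mat{1}\Mat{z}^T = \Mat{1}\big(\tfrac1n\Mat{1}^T\Mat{X} - \Mat{z}^T\big)$ has every column in $\Span(\Mat{1})$. Hence $\Tr\!\big(\HC{\Mat{X}}^T(\DC{\Mat{X}} - \Mat{1}\Mat{z}^T)\big)=0$, and the Pythagorean identity for $\lVert\cdot\rVert_F$ gives $\lVert\Mat{X} - \Mat{1}\Mat{z}^T\rVert_F^2 = \lVert\HC{\Mat{X}}\rVert_F^2 + \lVert\DC{\Mat{X}} - \Mat{1}\Mat{z}^T\rVert_F^2 \ge \lVert\HC{\Mat{X}}\rVert_F^2$; taking square roots concludes. (Equivalently, one may argue column by column: if $z_j$ is the $j$-th entry of $\Mat{z}$, then $\DC{\Mat{x}_j}=\tfrac1n\Mat{1}\Mat{1}^T\Mat{x}_j$ is the orthogonal projection of the $j$-th column $\Mat{x}_j$ onto $\Span(\Mat{1})$, so $\lVert\HC{\Mat{x}_j}\rVert_2=\lVert\Mat{x}_j-\DC{\Mat{x}_j}\rVert_2$ is the distance from $\Mat{x}_j$ to $\Span(\Mat{1})$, which is $\le\lVert\Mat{x}_j-z_j\Mat{1}\rVert_2$; summing the squares over $j=1,\dots,d$ and taking square roots recovers the claim.)

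I do not anticipate a genuine obstacle: the statement is just ``orthogonal projection minimizes distance'' specialized to the line $\Span(\Mat{1})$ applied channel-wise. The only step deserving a line of care is the orthogonality computation $\HC{\Mat{X}}^T\Mat{1}=\Mat{0}$ (equivalently, that the columns of $\HC{\Mat{X}}$ are mean-zero), which is immediate from the explicit form of the centering matrix $\Mat{I}-\tfrac1n\Mat{1}\Mat{1}^T$ and the fact that $\tfrac1n\Mat{1}\Mat{1}^T$ is the orthogonal projector onto $\Span(\Mat{1})$ since $\Mat{1}^T\Mat{1}=n$.
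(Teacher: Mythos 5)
Your proof is correct, and it takes a different (arguably cleaner) route than the paper's. The paper proves the lemma variationally: it expands $\lVert\Mat{X}-\Mat{1}\Mat{z}^T\rVert_F^2 = n\Mat{z}^T\Mat{z} - 2\trace(\Mat{X}^T\Mat{1}\Mat{z}^T) + \trace(\Mat{X}^T\Mat{X})$ as a convex quadratic in $\Mat{z}$, sets the gradient $2n\Mat{z}-2\Mat{X}^T\Mat{1}$ to zero, and identifies the minimizer $\Mat{z}^*=\tfrac1n\Mat{X}^T\Mat{1}$, leaving implicit the final observation that $\Mat{X}-\Mat{1}(\Mat{z}^*)^T=(\Mat{I}-\tfrac1n\Mat{1}\Mat{1}^T)\Mat{X}=\HC{\Mat{X}}$, so the minimum value is exactly $\lVert\HC{\Mat{X}}\rVert_F$. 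You instead never solve for the minimizer at all: you split $\Mat{X}-\Mat{1}\Mat{z}^T=\HC{\Mat{X}}+\bigl(\DC{\Mat{X}}-\Mat{1}\Mat{z}^T\bigr)$, check that the first term has mean-zero columns while the second has columns in $\Span(\Mat{1})$, and invoke the Pythagorean identity for the Frobenius inner product. Your argument buys a purely linear-algebraic, calculus-free proof that makes the geometric content explicit (the centering projector is the orthogonal projection onto $\Span(\Mat{1})^\perp$ applied channel-wise, and projections minimize distance), and it sidesteps having to argue that the critical point of the quadratic is a global minimum; the paper's computation, on the other hand, produces the explicit optimizer $\Mat{z}^*$, which makes the tightness of the bound (equality at the column-mean vector) immediate. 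Both are complete; your orthogonality check $\HC{\Mat{X}}^T\Mat{1}=\Mat{0}$ is exactly the one point that needed verification, and you did it correctly using the symmetry of $\Mat{I}-\tfrac1n\Mat{1}\Mat{1}^T$.
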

\begin{proof}
We prove the Lemma by showing that $\Mat{z}^* = \Mat{X}^T\Mat{1} / n$ achieves the minimum of the optimization problem $\arg\min_{\Mat{z}} \lVert \Mat{X} - \Mat{1}\Mat{z}^T \rVert_F^2$.
\begin{align}
\lVert \Mat{X} - \Mat{1}\Mat{z}^T \rVert_F^2 &= \trace (\Mat{X}^T - \Mat{z}\Mat{1}^T)(\Mat{X} - \Mat{1}\Mat{z}^T) \\
&= \trace(\Mat{X}^T\Mat{X}) - \trace(\Mat{z}\Mat{1}^T\Mat{X}) - \trace(\Mat{X}^T\Mat{1}\Mat{z}^T) + \trace(\Mat{z}^T\Mat{1}^T\Mat{1}\Mat{z}) \\
&= n \Mat{z}^T\Mat{z} - 2 \trace(\Mat{X}^T\Mat{1}\Mat{z}^T) + \trace(\Mat{X}^T\Mat{X})
\end{align}
It is easy to show the derivative in terms of $\Mat{z}$:
\begin{align}
\nabla_{\Mat{z}} \lVert \Mat{X} - \Mat{1}\Mat{z}^T \rVert_F^2 &= 2n \Mat{z} - 2 \Mat{X}^T\Mat{1}.
\end{align}
Therefore, $\Mat{z}^* = \frac{1}{n} \Mat{X}^T \Mat{1}$ achieves the minimum.
\end{proof}

\section{Deferred Remarks on Section \ref{sec:theoretical_connection}} \label{sec:remark_sec24}

\subsection{Connection with Random Walk Theory} \label{sec:random_walk}
We add that the asymptotic evolution of feature representations can be interpreted through the lens of random walk theory. We can regard self-attention map $\Mat{A}$ as probability transition matrices for a Markov chain. Since each entry is larger than zero, the Markov chain should be irreducible. This implies the Markov chain will converge into a unique stationary distribution $\Mat{\pi}$ \citep{randall2006rapidly}.
Let $\Mat{a}_i$ denote the $i$-th row of $\Mat{A}$, then for all $i = 1, \cdots, n$ we have $\lVert (\Mat{a}_i)^T \Mat{A} - \Mat{\pi}^T \rVert \le \lambda \lVert \Mat{a}_i - \Mat{\pi} \rVert$, where $\lambda \in (0, 1)$ is the mixing rate of the transition matrix $\Mat{A}$.
As a consequence, $\lim_{l \rightarrow \infty} \Mat{A}^l = \Mat{1} \Mat{\pi}^T$ yields a pure low-pass filter.
When repeatedly applying this self-attention matrix to feature maps, $\lim_{l \rightarrow \infty} \Mat{A}^l \Mat{X} \rightarrow \Mat{1} \Mat{\pi}^T \Mat{X}$ only preserves the rank-1/DC portion of the signals, which is consistent with our Theorem \ref{thm:sa_low_pass}.
Nevertheless, this interpretation does not bring other transformer components into consideration. And our theory further provides a concrete convergence rate with respect to the network parameters (Theorem \ref{thm:sa_rate}).

\subsection{Remarks on \citet{dong2021attention}} \label{sec:remark_dong}
Here we respectfully elaborate on the hidden assumptions in the proof of the \hyperlink{https://arxiv.org/pdf/2103.03404.pdf}{current preprint} of \citet{dong2021attention}.
% We have carefully discussed these issues with the authors. The listed ones are those we still feel confused and noteworthy. 

1) In the proof of Lemma A.3, Taylor expansion was used to approximate and upper bound an exponentiation. However, to let the right-hand side upper bound satisfied, we conjectured that the authors implicitly assumed $\Mat{E}_{ij} - \Mat{E}_{ij'}$ is bounded around zero. After directly communicating with the authors, they confirmed that a missed assumption here is $\max_{i,j} (\Mat{E}_{ij} - \Mat{E}_{ij'}) \le 1$.

2) In the proof of Lemma A.1, to let Eqn. (8)-(9) hold, the authors may have assumed $\Mat{R}, \Mat{W}_V \ge 0$, where $\ge$ denotes entry-wise inequality. As the authors suggested, an entry-wise absolute value can be imposed to $\Mat{R}$ and $\Mat{W}_V$ as a simple fix, without influencing their $\ell_1$ and $\ell_\infty$ norm. However, even after those changes, we still have difficulty walking through Eqn. (6)-(8), and we are currently communicating with the authors on this matter. 
% Even if we have $\softmax(\Mat{1}\Mat{r}^T + \Mat{E}) \le (\Mat{I} + 2\Mat{D})\Mat{1}\softmax(\Mat{r})^T$, the element-wise inequality may not hold for $\softmax(\Mat{1}\Mat{r}^T + \Mat{E}) \Mat{R} \le (\Mat{I} + 2\Mat{D})\Mat{1}\softmax(\Mat{r})^T\Mat{R}$, if here $\Mat{R}$ can be arbitrary matrix instead of a non-negative matrix.
% The same problem occurs when later multiplying with $\Mat{W}_V$.
% The authors reply that an entry-wise absolute value can be imposed to $\Mat{R}$ and $\Mat{W}_V$ without influencing their $\ell_1$ and $\ell_\infty$ norm. However, we believe this change would lead to a revision to their previous derivations in Eqn. (6)-(8).

3) In the proof of Lemma A.1, we find Eqn. (12) may not be satisfied in general. We can raise the following counterexample: Since $\Mat{E}, \Mat{r}, \Mat{R}, \Mat{W}_V$ can be any matrices, we simply let $\Mat{D} = \diag(2, 3), \softmax(\Mat{r}) = \begin{bmatrix} 0.8 & 0.2 \end{bmatrix}^T, \Mat{R} = \Mat{W}_V = \Mat{I}$. Then $\lVert\Mat{D}\Mat{1}\softmax(\Mat{r})^T\Mat{R}\Mat{W}_V\rVert_1 = 4$ while $\lVert\Mat{D}\Mat{1}\rVert_\infty \lVert\Mat{R}\rVert_1\lVert\Mat{W}_V\rVert_1 = 3$, which disproves the claim. We conjecture that some additional prerequisite constraints on $\Mat{E}, \Mat{r}$ might be needed here to proceed the derivation, and we are currently communicating with the authors on this matter. 
% 3) The usage of H\"{o}lder's inequality in Eqn. (12) seems improper. The authors believe this inequality can be satisfied when handling rank-1 matrices. However, we can raise the following counterexample: Since $\Mat{E}, \Mat{r}, \Mat{R}, \Mat{W}_V$ are uncorrelated and can be any matrices, we simply let $\Mat{D} = \diag(2, 3), \softmax(\Mat{r}) = \begin{bmatrix} 0.8 & 0.2 \end{bmatrix}^T, \Mat{R} = \Mat{W}_V = \Mat{I}$. Then $\lVert\Mat{D}\Mat{1}\softmax(\Mat{r})^T\Mat{R}\Mat{W}_V\rVert_1 = 4$ while $\lVert\Mat{D}\Mat{1}\rVert_\infty \lVert\Mat{R}\rVert_1\lVert\Mat{W}_V\rVert_1 = 3$ disproves their claim.

\subsection{Generalize to Other Attention Mechanism} \label{sec:generalize}

Our theorizing can be smoothly generalized to other attention mechanisms because our Theorem \ref{thm:sa_low_pass} and \ref{thm:sa_rate} do not require any prior knowledge on pre-softmax pairwise correlation $\Mat{P}$.

\paragraph{Logistic Attention.} We refer logistic attention to the attention mechanism used in \citet{velivckovic2017graph, verma2018feastnet}, where attention is calculated via a linear combination:
\begin{align}
    \Mat{A}_{ij} = \frac{\exp\left(\Mat{x}_i^T \Mat{u}_Q + \Mat{x}_j^T \Mat{u}_K + b \right)}{\sum_t \exp\left(\Mat{x}_i \Mat{u}_Q + \Mat{x}_t \Mat{u}_K + b \right)}
\end{align}
where $\Mat{u}_Q$ and $\Mat{u}_K$ are query/key parameters, $b$ is the bias term. With the same condition in Theorem \ref{thm:sa_rate}, we can upper bound $\alpha$ by $\lvert (\lVert\Mat{u}_K\rVert_2 + \lVert\Mat{u}_Q\rVert_2) \gamma + b \rvert$.

\paragraph{L2 Distance Attention.} L2 distance based attention \citep{kim2021lipschitz} Lipschitz formulation of
self-attention. The pair-wise attention can be written as follows:
\begin{align}
    \Mat{A}_{ij} = \frac{\exp\left(-\lVert \Mat{x}_i^T \Mat{W}_Q - \Mat{x}_j^T \Mat{W}_K \rVert_2^2 / \tau \right)}{\sum_t \exp\left(-\lVert \Mat{x}_i^T \Mat{W}_Q - \Mat{x}_t^T \Mat{W}_K \rVert_2^2 / \tau \right)}
\end{align}
where $\Mat{W}_Q$ and $\Mat{W}_k$ are query/key weights, and $\tau$ is a scaling factor. Similar to Theorem \ref{thm:sa_rate}, we can upper bound $\alpha$ by $(\lVert\Mat{W}_K\rVert_2 + \lVert\Mat{W}_Q\rVert_2)^2 \gamma^2 / \tau$.

\section{An Auxiliary Lemma for AttnScale} \label{sec:optim_low_pass}
\begin{lemma} \label{lem:optim_low_pass}
Let $\Mat{\tilde{A}} = \FT \Mat{A} \IFT$ be the spectral response of attention matrix $\Mat{A}$, and parameterize a low-filter by $\Mat{L} = \IFT \diag(\beta, 0, \cdots, 0) \FT$. Then $\beta^* = 1$ is the optimal solution of the following optimization problem: $\arg\min_{\beta} \lVert \Mat{A} - \Mat{L} \rVert_F$.
\end{lemma}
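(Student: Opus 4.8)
The plan is to exploit the unitarity of the DFT: conjugation by $\FT$ is a Frobenius-norm isometry, which collapses the matrix-fitting problem into a scalar least-squares problem on a single entry. Concretely, since $\FT$ (with its $1/\sqrt n$ normalization) is unitary, both left multiplication by $\FT$ and right multiplication by $\IFT = \FT^{-1}$ leave $\lVert\cdot\rVert_F$ invariant, so
\begin{align*}
\lVert \Mat{A} - \Mat{L} \rVert_F = \lVert \FT(\Mat{A} - \Mat{L})\IFT \rVert_F = \lVert \Mat{\tilde{A}} - \diag(\beta, 0, \cdots, 0) \rVert_F ,
\end{align*}
using $\FT\Mat{A}\IFT = \Mat{\tilde{A}}$ by definition and $\FT\Mat{L}\IFT = \diag(\beta,0,\cdots,0)$ since $\FT\IFT = \Mat{I}$.

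Next I would note that $\Mat{\tilde{A}} - \diag(\beta,0,\cdots,0)$ differs from $\Mat{\tilde{A}}$ only in its $(1,1)$ entry, so
\begin{align*}
\lVert \Mat{\tilde{A}} - \diag(\beta,0,\cdots,0) \rVert_F^2 = \lvert \Mat{\tilde{A}}_{11} - \beta \rvert^2 + \sum_{(i,j)\neq(1,1)} \lvert \Mat{\tilde{A}}_{ij} \rvert^2 ,
\end{align*}
where the second term does not involve $\beta$; hence the unique minimizer is $\beta^{\ast} = \Mat{\tilde{A}}_{11}$. It then remains to evaluate this coefficient. The first row of $\FT$ and the first column of $\IFT$ both equal the DC Fourier basis $\Mat{f}_1 = \Mat{1}/\sqrt n$, so $\Mat{\tilde{A}}_{11} = \tfrac{1}{n}\Mat{1}^T \Mat{A}\,\Mat{1}$. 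Since $\Mat{A}$ is an attention matrix it is row-stochastic, i.e. $\Mat{A}\Mat{1} = \Mat{1}$, whence $\Mat{\tilde{A}}_{11} = \tfrac{1}{n}\Mat{1}^T\Mat{1} = 1$. Combining the two displays gives $\beta^{\ast} = 1$, and since this value is real the answer is unchanged whether $\beta$ is constrained to $\real$ or allowed to range over $\complex$.

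I do not anticipate a real obstacle here: the only step worth double-checking is the isometry claim, namely that the $1/\sqrt n$-normalized DFT matrix is genuinely unitary so that both the left and the right multiplications preserve $\lVert\cdot\rVert_F$; the remainder is the one-line scalar least-squares fact together with the row-stochasticity of any softmax output. The lemma is essentially bookkeeping, and its role is merely to certify that $\Mat{L} = \Mat{1}\Mat{1}^T/n$ (the case $\beta=1$) is the ``largest'' low-pass filter that can be subtracted from $\Mat{A}$, which is what the AttnScale decomposition relies on.
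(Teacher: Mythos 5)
Your proof is correct, but it takes a different route from the paper's. The paper stays entirely in the spatial domain: it first rewrites $\Mat{L} = \IFT\diag(\beta,0,\cdots,0)\FT$ as $\beta\,\Mat{1}\Mat{1}^T/n$, then expands $\lVert\Mat{A}-\Mat{L}\rVert_F^2$ via trace identities into the explicit quadratic $\beta^2 - \frac{2\beta}{n}\sum_{i,j}\Mat{A}_{ij} + \trace(\Mat{A}^T\Mat{A}) = \beta^2 - 2\beta + \mathrm{const}$, using row-stochasticity of the softmax output to evaluate the cross term, and reads off $\beta^*=1$. You instead invoke unitary invariance of the Frobenius norm to move the whole problem into the Fourier domain, observe that only the $(1,1)$ entry depends on $\beta$, and identify the minimizer as $\beta^* = \Mat{\tilde A}_{11} = \frac{1}{n}\Mat{1}^T\Mat{A}\Mat{1} = 1$, again by row-stochasticity. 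Both arguments rest on the same two facts (the DC projector structure and $\Mat{A}\Mat{1}=\Mat{1}$), but your reduction is slightly more informative: it shows $\beta^*=\Mat{\tilde A}_{11}$ for an arbitrary matrix before specializing, gives uniqueness of the minimizer immediately, and settles the real-versus-complex $\beta$ question, whereas the paper's trace computation avoids any appeal to unitarity (and to complex arithmetic) at the cost of being a more opaque algebraic expansion. One small caution: the unitarity you rely on is that of the standard $1/\sqrt{n}$-normalized DFT with $\IFT = \FT^{*}$; that is indeed what makes both the isometry and the identification of the first row/column with $\Mat{1}/\sqrt{n}$ go through, so your argument is sound.
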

\begin{proof}
First we make simplification $\Mat{L} = \IFT \diag(\beta, 0, \cdots, 0) \FT = \beta \Mat{1}\Mat{1}^T$ (refer to Appendix \ref{sec:more_fourier}). Then we have:
\begin{align}
\lVert \Mat{A} - \Mat{L} \rVert_F^2 &= \lVert \Mat{A} - \beta \Mat{1}\Mat{1}^T \rVert_F^2 = \trace(\Mat{A} - \beta \Mat{1}\Mat{1}^T)^T (\Mat{A} - \beta \Mat{1}\Mat{1}^T) \\
&= \trace \left(\Mat{A}^T\Mat{A} - \frac{\beta}{n} \Mat{A}^T \Mat{1}\Mat{1}^T - \frac{\beta}{n} \Mat{1}\Mat{1}^T \Mat{A} + \frac{\beta^2}{n} \Mat{1}\Mat{1}^T \right) \\
&= \frac{\beta^2}{n} \trace\Mat{1}\Mat{1}^T - \frac{2\beta}{n}\trace\Mat{1}\Mat{1}^T \Mat{A} + \trace\Mat{A}^T\Mat{A} \\
&= \beta^2 - \frac{2\beta}{n} \sum_{j=1}^{n} \sum_{i=1}^{n} \Mat{A}_{ij} + \trace\Mat{A}^T\Mat{A} \\
\label{eqn:beta_quadratic} &= \beta^2 - 2\beta + \trace\Mat{A}^T\Mat{A}
\end{align}
From Eqn. \ref{eqn:beta_quadratic}, $\beta^* = 1$ achieves the minimum of the objective function.
\end{proof}

% \section{Proof of Theorem \ref{thm:attn_scale}} \label{prf:attn_scale}

% \begin{customthm}{5}
% Let $\Mat{A} = \softmax(\Mat{P})$, where $\Mat{P} \in \real^{n \times n}$. Define $\hSA(\Mat{X}) = \Mat{\hat{A}}\Mat{X}\Mat{W}_V$ as the output of a self-attention module, where $\Mat{\hat{A}} = \frac{1}{n}\Mat{1}\Mat{1}^T + \omega \left( \Mat{A} - \frac{1}{n} \Mat{1}\Mat{1}^T \right)$, then there exists $\omega \in \real_+$ such that $\lVert \HC{\hSA(\Mat{X})} \rVert_F = \lVert \HC{\Mat{X}} \rVert_F$.
% \end{customthm}
% \begin{proof}
% Our proof is constructive. We only need to find such $\omega$ that can exactly scale the high-pass component to cancel out the low-pass filtering.
% \begin{align}
% \hSA(\Mat{X}) &= \left[ \frac{1}{n}\Mat{1}\Mat{1}^T + \omega \left( \Mat{A} - \frac{1}{n} \Mat{1}\Mat{1}^T \right) \right] \Mat{X} \Mat{W}_V \\
% &= \left( \omega \Mat{A} + \frac{1 - \omega}{n} \Mat{1}\Mat{1}^T \right) \Mat{X} \Mat{W}_V.
% \end{align}
% Then we extract the high-frequency component out:
% \begin{align}
% \HC{\hSA(\Mat{X})} &= \left( \Mat{I} - \frac{1}{n}\Mat{1}\Mat{1}^T \right) \left( \omega \Mat{A} + \frac{1 - \omega}{n} \Mat{1}\Mat{1}^T \right) \Mat{X} \Mat{W}_V \\
% &= \omega \left( \Mat{I} - \frac{1}{n}\Mat{1}\Mat{1}^T \right) \Mat{A} \Mat{X} \Mat{W}_V.
% \end{align}
% By Theorem \ref{thm:sa_rate}, we already know that $\lVert\HC{\Mat{A}\Mat{X}\Mat{W}_V}\rVert_F = \lambda \lVert\HC{\Mat{X}}\rVert_F$ for some $0 < \lambda \le \sqrt{\frac{n e^{2\alpha}}{e^{2\alpha} + n - 1}} \lVert\Mat{W}_V\rVert_2$. We can conclude the proof by simply choosing $\omega = 1/\lambda$.
% \end{proof}

\section{More on Visualization} \label{sec:visualization}

\subsection{Details on Figure \ref{fig:upper_bound}} \label{sec:vis_upper_bound}

To verify our Theorem \ref{thm:sa_rate}, we depict the high-frequency intensity of each layer's output and its theoretical upper bound. Our visualization is based on the official checkpoint of 12-layer DeiT-S. Since training a ViT without either FFN or residual connection will certainly cause failure, we remove these components directly from the pre-trained model to illustrate the effects of different components.
We use logarithmic scale for the purpose of better view. Let $\Mat{X}_l$ denote the output of the $l$-th layer, and $\Mat{X}_0$ be the initial inputs. For red line, we directly calculate $\log (\lVert \HC{\Mat{X}_l} \rVert_F / \lVert \Mat{X}_0 \rVert_F)$ at each layer. For blue line, we first obtain the coefficient $\gamma_l$ in Section \ref{sec:low_pass} and \ref{sec:existing_mechanism} with respect to network parameters (e.g., we can compute $\gamma_l = \sqrt{\frac{n e^{2\alpha}}{e^{2\alpha} + n - 1}} \lVert\Mat{W}_V\rVert_2$ for attention only architecture). Then we estimate the upper bound by $\gamma_l \lVert \HC{\Mat{X}_{l-1}} \rVert_F$ and apply the logarithm by $\log (\gamma_l \lVert \HC{\Mat{X}_{l-1}} \rVert_F / \lVert \Mat{X}_0 \rVert_F)$.
To summarize, one can see without residual connection, the first two sub-figures imply an exponential convergence rate, which is consistent with our Theorem \ref{thm:sa_rate}.

\subsection{More Visualization on Spectrum} \label{sec:vis_spectrum}

In this appendix, we provide more visualization on the spectrum of attention map to validate our Theorem \ref{thm:sa_low_pass}. We compute the spectrum of attention map $\Mat{A}$ for both Fig. \ref{fig:spectrum} and Fig. \ref{fig:more_spectrum} in the following way.
By regarding $\Mat{A}$ as a linear filter, its Fourier-domain response is another linear kernel $\Mat{\Lambda} = \FT \Mat{A} \IFT$. When $\Mat{\Lambda}$ is applied to a spectrum $\Mat{\tilde{x}} = \FT\Mat{x}$ of signals $\Mat{x}$, the $i$-th frequency response will be $\Mat{\Lambda}_i \Mat{\tilde{x}}$, where $\Mat{\Lambda}_i$ is the $i$-th row of $\Mat{\Lambda}$. Hence, we can use $\lVert \Mat{\Lambda}_i \rVert_2$ to evaluate the spectral response intensity of the $i$-th frequency band. Below we provide a complete spectral visualization of attention maps computed from a random sample in ImageNet validation set.

\begin{figure}[h!]
\centering
\includegraphics[width=0.95\textwidth]{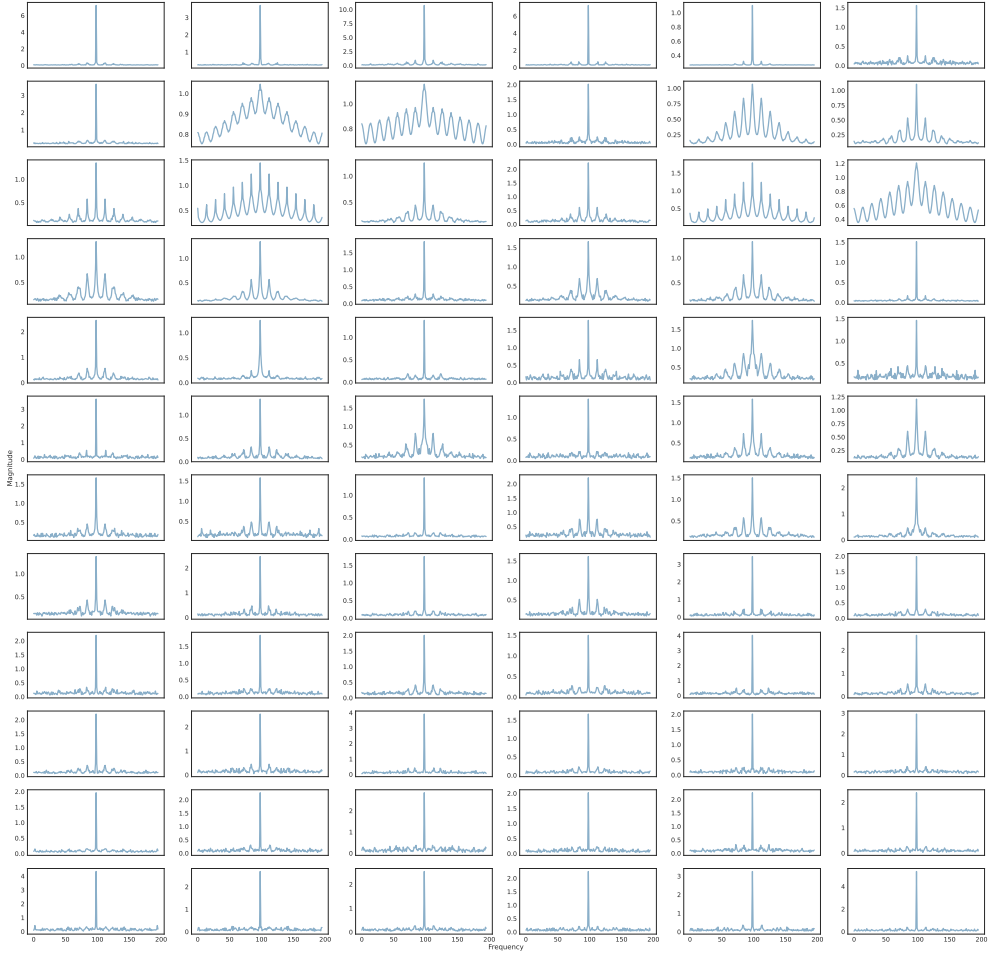}
\caption{Visualize the spectrum of attention maps. Each row demonstrates every head at a same layer, and from top to bottom, the 12 rows correspond to 1 \textasciitilde~12-th layer, for left to right, the 6 columns correspond to 1 \textasciitilde~6-th head, respectively. Best view in a zoomable electronic copy. }
\label{fig:more_spectrum}
\end{figure}

\subsection{Details on Similarity Curves (Figure \ref{fig:sim_attn}) } \label{sec:vis_similarity}

In Fig. \ref{fig:sim_attn}, we visualize the cosine similarity of attention maps and feature maps to show the effectiveness of our AttnScale and FeatScale on 24-layer DeiT, respectively.
We follow the definition in \citet{zhou2021deepvit} to compute the cosine similarity metric for attention maps. Instead of measuring cross-layer similarity, we calculate average cross-patch similarity at the same layer. Given the layer index $l$ and corresponding attention maps $\Mat{A}^{(l,h)} \in \real^{n \times n}$, the cosine similarity can be computed by:
\begin{align}
    M^l_{\text{attn}} = \frac{2}{n(n-1)H}\sum_{h=1}^{H}\sum_{i = 1}^{n} \sum_{j=i+1}^{n} \frac{\left\lvert\Mat{A}^{(l,h)T}_{:,i} \Mat{A}^{(l,h)}_{:,j}\right\rvert}{\left\lVert\Mat{A}^{(l,h)}_{:,i}\right\rVert_2 \left\lVert\Mat{A}^{(l,h)}_{:,j}\right\rVert_2},
\end{align}
where $\Mat{A}^{(l,h)}_{:,i}$ denotes the $i$-th column of $\Mat{A}^{(l,h)}$, and $H$ is the number of heads. The cosine similarity between $i$-th and $j$-th column of $\Mat{A}^{(l,h)}$ measures how the contribution of one token (say the $i$-th token) varies from the other (say the $j$-th token). We average the similarity between every pair of tokens' attention map (excluding the self-to-self similarity) and every attention head. We refer interested readers to \citet{zhou2021deepvit} for more details.

We use the similar metric to compute similarity for feature maps. Following \citet{gong2021vision}, we compute pair-wise cosine similarity between every two different tokens. Formally, given the layer index $l$, and its output $\Mat{X}^{(l)} \in \real^{n \times d}$, the cosine similarity is estimated by:
\begin{align}
    M^l_{\text{feat}} = \frac{2}{n(n-1)}\sum_{i = 1}^{n} \sum_{j=i+1}^{n} \frac{\left\lvert\Mat{X}^{(l)T}_{i,:} \Mat{X}^{(l)}_{j,:}\right\rvert}{\left\lVert\Mat{X}^{(l)}_{i,:}\right\rVert_2 \left\lVert\Mat{X}^{(l)}_{j,:}\right\rVert_2},
\end{align}
where $\Mat{X}^{(l)}_{i,:}$ denotes the $i$-th row of $\Mat{X}^{(l)}$. The cosine similarity between between $i$-th and $j$-th row of $\Mat{X}^{(l)}$ measures how similar the feature representations of two tokens are. Likewise, we average the similarity between every pair of tokens' features except for the self-to-self similarity. More details can found in \citet{gong2021vision}.
We additionally provide a visualization of these two metrics for 12-layer DeiT in Fig. \ref{fig:sim_attn_12}.

\section{Deferred Experiments and Model Interpretation} \label{sec:vis_inter}

\subsection{Fine-Tuning Experiments} \label{sec:finetune}

Our deferred fine-tuning experiment with CaiT \citep{touvron2021going} results are presented in Table \ref{tbl:fine_tune}.
Different from trining scratch, we fine-tune CaiT with AttnScale and FeatScale parameters from the pre-trained models for 60 epochs following \citet{gong2021vision}.
For a fair comparison, we simultaneously train a plain CaiT for another 60 epochs. 
During fine-tuning, we reduce learning rate to $5 \times 10^{-5}$ and weight decay to $5 \times 10^{-4}$. All other hyper-parameters and training recipe are kept consistent with the original paper \citep{touvron2021going}.

\begin{table}[h!]
\caption{Experimental evaluation of finetuning AttnScale \& FeatScale with CaiT. The number inside the ($\uparrow \cdot$) represents the performance gain compared with the baseline model, and accuracies within/out of parenthesis are the reported/reproduced performance.}
\label{tbl:fine_tune}
\vspace{-3mm}
\begin{center}
\resizebox{0.9\textwidth}{!}{
\setlength{\aboverulesep}{0pt}
\setlength{\belowrulesep}{0pt}
\begin{tabular}{l|l|ccccc|c}
\toprule
\textbf{Backbone} & \textbf{Method} & \textbf{Input size} & \textbf{\# Layer} & \textbf{\# Param}  & \textbf{FLOPs} & \textbf{Throughput} & \textbf{Top-1 Acc (\%)} \\
\hline
\multirow{6}{*}{CaiT}
& CaiT-XXS & 224 & 24 & 12.0M & 2.53G & 589.3 & 77.5 (77.6) \\
& CaiT-XXS + AttnScale & 224 & 24 & 12.0M & 2.53G & 548.1 & 77.8 ($\uparrow 0.3$) \\
& CaiT-XXS + FeatScale & 224 & 24 & 12.0M & 2.53G & 573.5 & 77.8 ($\uparrow 0.3$) \\
\cmidrule{2-8}
& CaiT-S & 224 & 24 & 46.9M & 8.74G & 371.9 & 82.6 (82.7) \\
& CaiT-S + AttnScale & 224 & 24 & 46.9M & 8.75G & 339.0 & 82.8 ($\uparrow 0.2$) \\
& CaiT-S + FeatScale & 224 & 24 & 46.9M & 8.75G & 358.2 & 82.9 ($\uparrow 0.3$)\\
\bottomrule
\end{tabular}
}
\end{center}
\vspace{-3mm}
\end{table}

\subsection{Visualization and Interpretation of AttnScale}

In this subsection, we provide visualization to interpret our AttnScale and further support our experiments.
\ding{202} In Fig. \ref{fig:attnscale_weights} we visualize the learned weights of our AttnScale. We observe conclude our AttnScale are successfully trained to amplify the high-pass component. We also find when layer index goes larger, the scaling weights turns larger to prevent attention collapse at deeper layer.
\ding{203} We also compare the attention map produced by AttnScale with those produced by original DeiT. We observe from Fig. \ref{fig:vis_attn_maps} that our AttnScale can extract more salient and higher contrastive attention than vanilla DeiT, which indicates our AttnScale possesses higher capability to distinguish tokens from larger variety of attention schemes.
\ding{204} To be more objective, we plot the spectrum of a 24-layer DeiT's attention maps with/without our AttnScale in Fig. \ref{fig:spectrum_24} and \ref{fig:spectrum_24_attnscale}. The visualization procedure has been elaborated in Sec. \ref{sec:vis_spectrum}. We find that attention maps from AttnScale enjoy richer filtering diversity, capable of performing high-pass (row 2, column 3) and band-pass (row 12, column 5) filtering, instead of only low-pass filtering (see Fig. \ref{fig:spectrum_24}).
% Interestingly, we also notice that the attention pattern learned with or without AttnScale are almost the same, while adding AttnScale can enhance the pattern to have a higher contrast.
% This is because deeper the layer is, more likely the self-attention maps will collapse.
% AttnScale tends to learn a larger weight at deeper layer to recover the shrinking high-pass portion.
\begin{figure}[h!]
\centering
\includegraphics[width=\textwidth]{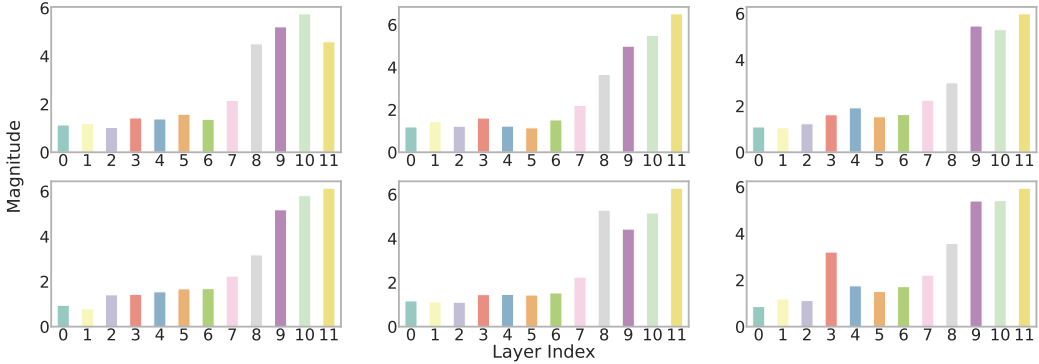}
\caption{Visualize the learned weights of DeiT-S + AttnScale. Each sub-plot depicts the scaling weights of the same head for different layers. For left to right, top to bottom, six sub-figures correspond to 1 \textasciitilde~6-th head, respectively. Best view in color.}
\label{fig:attnscale_weights}
\end{figure}

\begin{figure}[h!]
\centering
\includegraphics[width=\textwidth]{images/featscale_weights.pdf}
\caption{Visualize the learned weights of DeiT-S + FeatScale. Each sub-plot depicts two groups of scaling weights of the same head for different layers. For left to right, top to bottom, six sub-figures correspond to 1 \textasciitilde~6-th head, respectively. Best view in color.}
\label{fig:featscale_weights}
\end{figure}

% We also compare the attention map produced by AttnScale with those produced by original DeiT. We observe from Fig. \ref{fig:vis_attn_maps} that our AttnScale can extract sharper and more salient attention than vanilla DeiT, which indicates our AttnScale possesses higher capability to distinguish tokens from larger variety of attention schemes. Interestingly, we also notice that the attention pattern learned with or without AttnScale are almost the same, while adding AttnScale can enhance the pattern to have a higher contrast.

% We further visualize the spectrum of attention map generated by 24-layer DeiT with and without AttnScale in Figure \ref{fig:spectrum_24} and \ref{fig:spectrum_24_attnscale}, respectively. Our results show that AttnScale can amplify the high-frequency component on the attention map, making it a more diverse filter. 

\begin{figure}[h!]
\centering
\includegraphics[width=\textwidth]{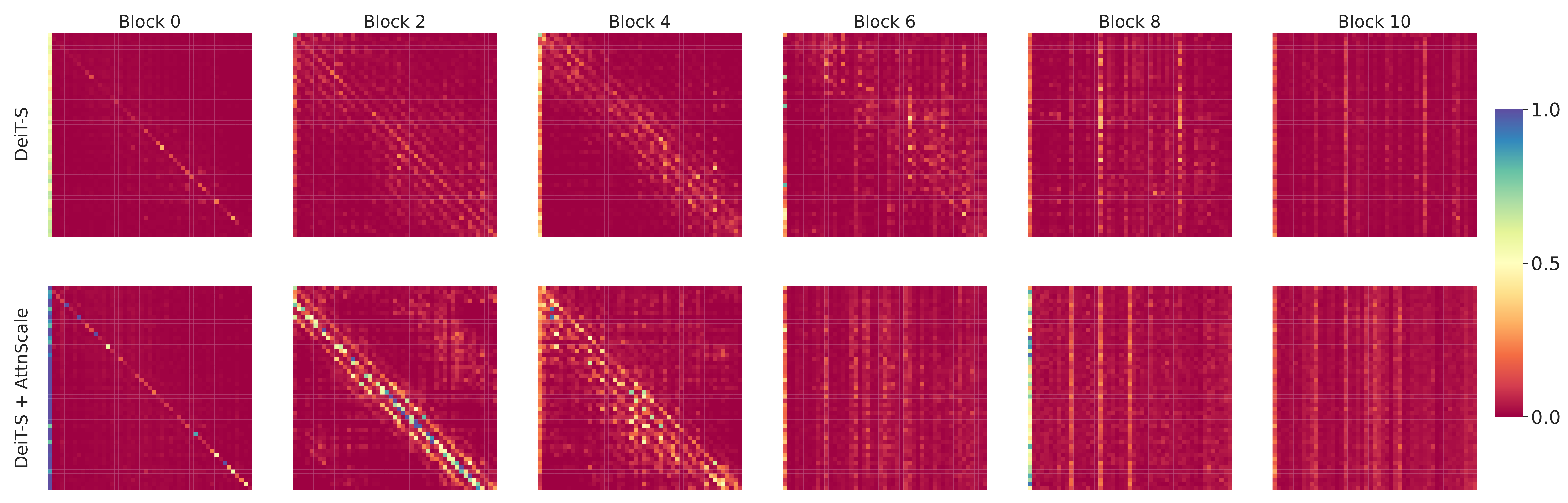}
\caption{Visualize the attention map of DeiT-S with/without AttnScale. $4 \times 4$ max pooling has been applied. The first row visualizes attention maps without AttnScale, and the second row visualizes attention maps with AttnScale. Each column corresponds to the layer noted by its sub-title. The attention map are computed from a random sample in ImageNet validation set. We only demonstrate the first head of each layer. Best view in a zoomable electronic copy.}
\label{fig:vis_attn_maps}
\end{figure}

\begin{figure}[h!]
\centering
\includegraphics[width=0.95\textwidth]{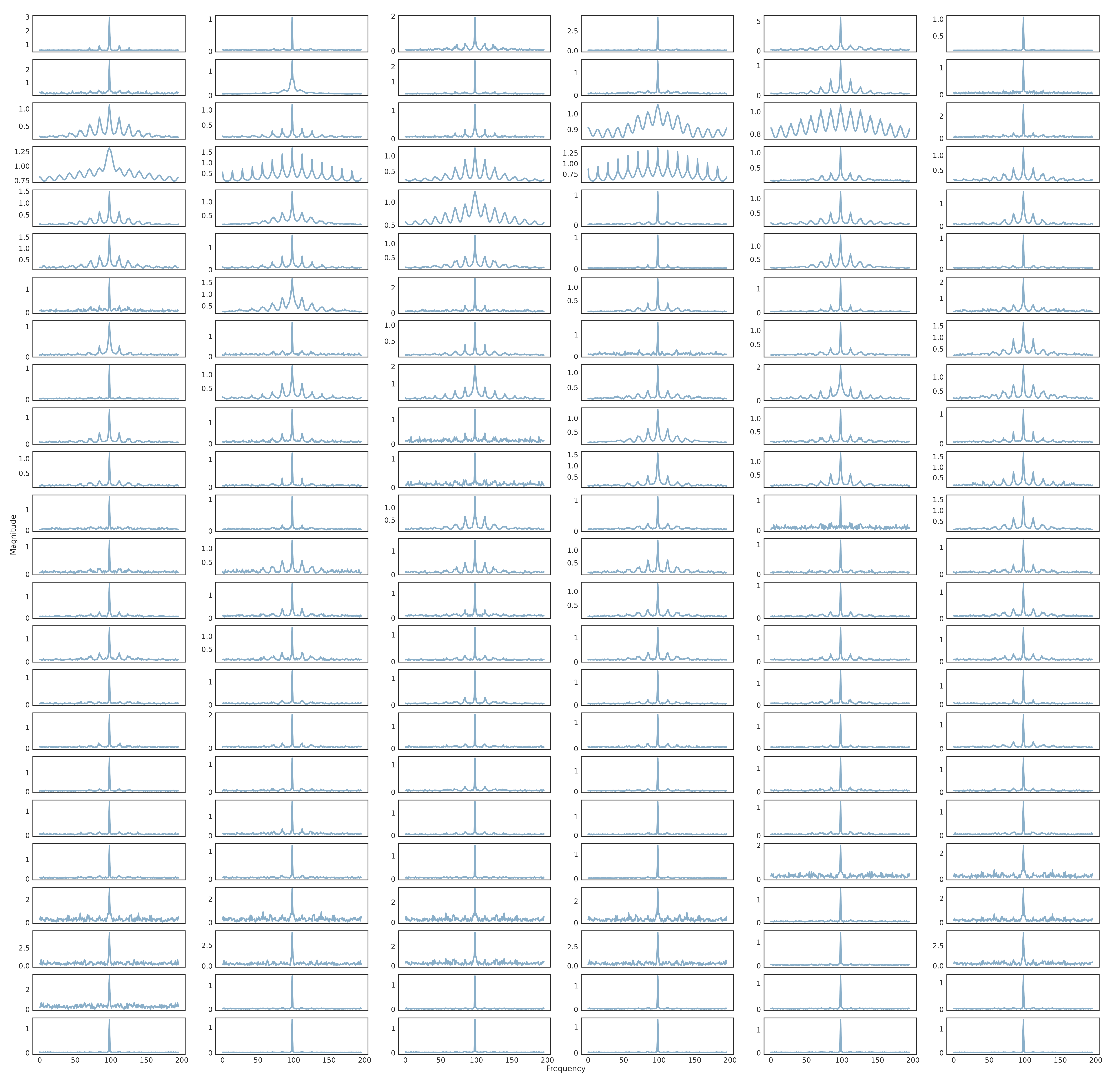}
\caption{Visualize the spectrum of attention maps \textbf{without AttnScale}. Each row demonstrates every head at a same layer, and from top to bottom, the 24 rows correspond to 1 \textasciitilde~24-th layer, for left to right, the 6 columns correspond to 1 \textasciitilde~6-th head, respectively. Best view in a zoomable electronic copy. }
\label{fig:spectrum_24}
\end{figure}

\begin{figure}[h!]
\centering
\includegraphics[width=0.95\textwidth]{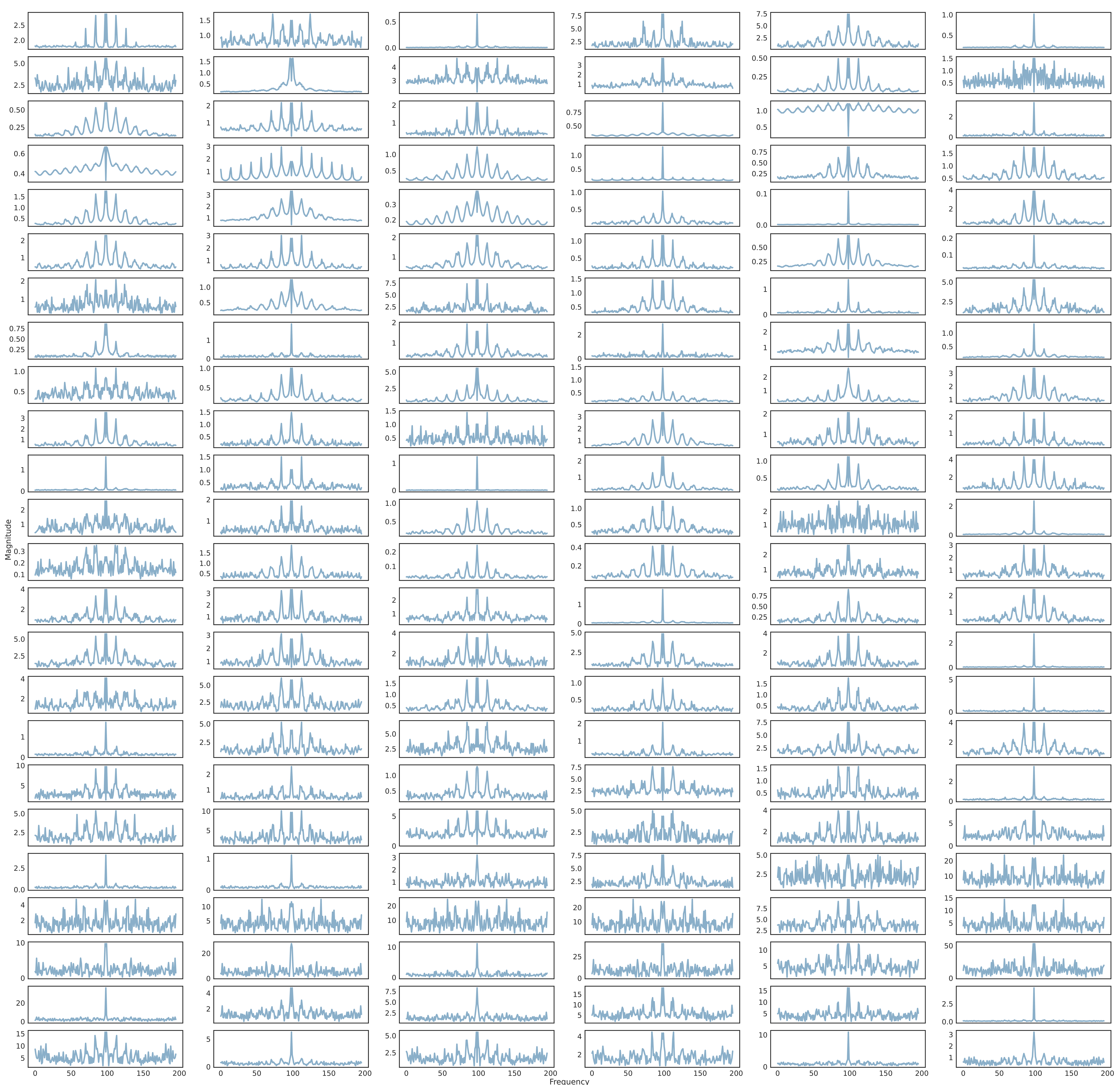}
\caption{Visualize the spectrum of attention maps \textbf{with AttnScale}. Each row demonstrates every head at a same layer, and from top to bottom, the 24 rows correspond to 1 \textasciitilde~24-th layer, for left to right, the 6 columns correspond to 1 \textasciitilde~6-th head, respectively. Best view in a zoomable electronic copy. }
\label{fig:spectrum_24_attnscale}
\end{figure}

\subsection{Visualization and Interpretation of FeatScale} \label{sec:vis_featscale}
In this subsection, we provide visualization to interpret our FeatScale.
\ding{202} We plot the scaling weights of FeatScale in Fig. \ref{fig:featscale_weights}. We observe that the re-weighting factors learned for high-frequency components $\Mat{t}$ is consistently larger than the weights for the DC term $\Mat{s}$, which indicates our FeatScale is successfully trained to elevate high-frequency features against the dominance of DC component. Similarly, the gap between $\Mat{s}$ and $\Mat{t}$ becomes huger when going deeper.
\ding{203} We also demonstrate the proportion of feature maps' high-frequency component in Fig. \ref{fig:hc_portion} for both 12 (lower one) / 24(upper one) -layer DeiT. The proportion value is calculated by $\lVert\HC{\Mat{X}}\rVert_F/\lVert\Mat{X}\rVert_F$. We find  high-frequency signals diminish quickly at deeper layer, and 24-layer DeiT suffers from a faster pace. Our FeatScale is effective to keep the high-frequency signals stand for both 12-layer and 24-layer DeiT.
% For the same reason, feature maps are more likely to be over-smoothened at larger layer index. FeatScale retains high-frequency signals by strongly scaling them via a huge factor.

\begin{figure}[t]
\begin{minipage}[b]{0.46\linewidth}
\centering
\includegraphics[width=\textwidth]{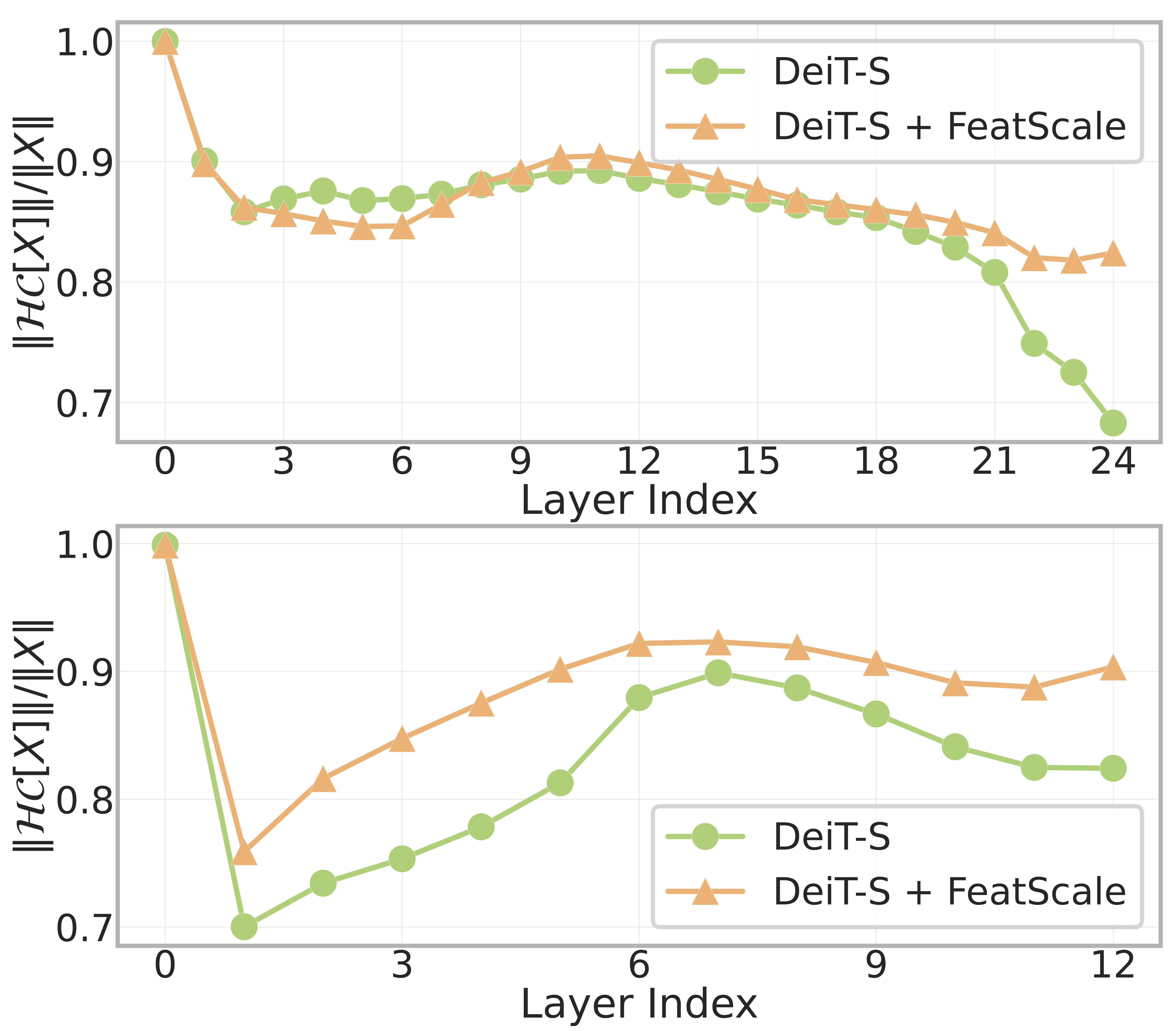}
\vspace{-2mm}
\caption{Visualize the proportion of the high-frequency component of feature maps with/without our FeatScale on 12/24 layer DeiT. Refer to Appendix \ref{sec:vis_featscale} for details.}
\label{fig:hc_portion}
\end{minipage}
\hfill
\begin{minipage}[b]{0.46\linewidth}
\centering
\includegraphics[width=\textwidth]{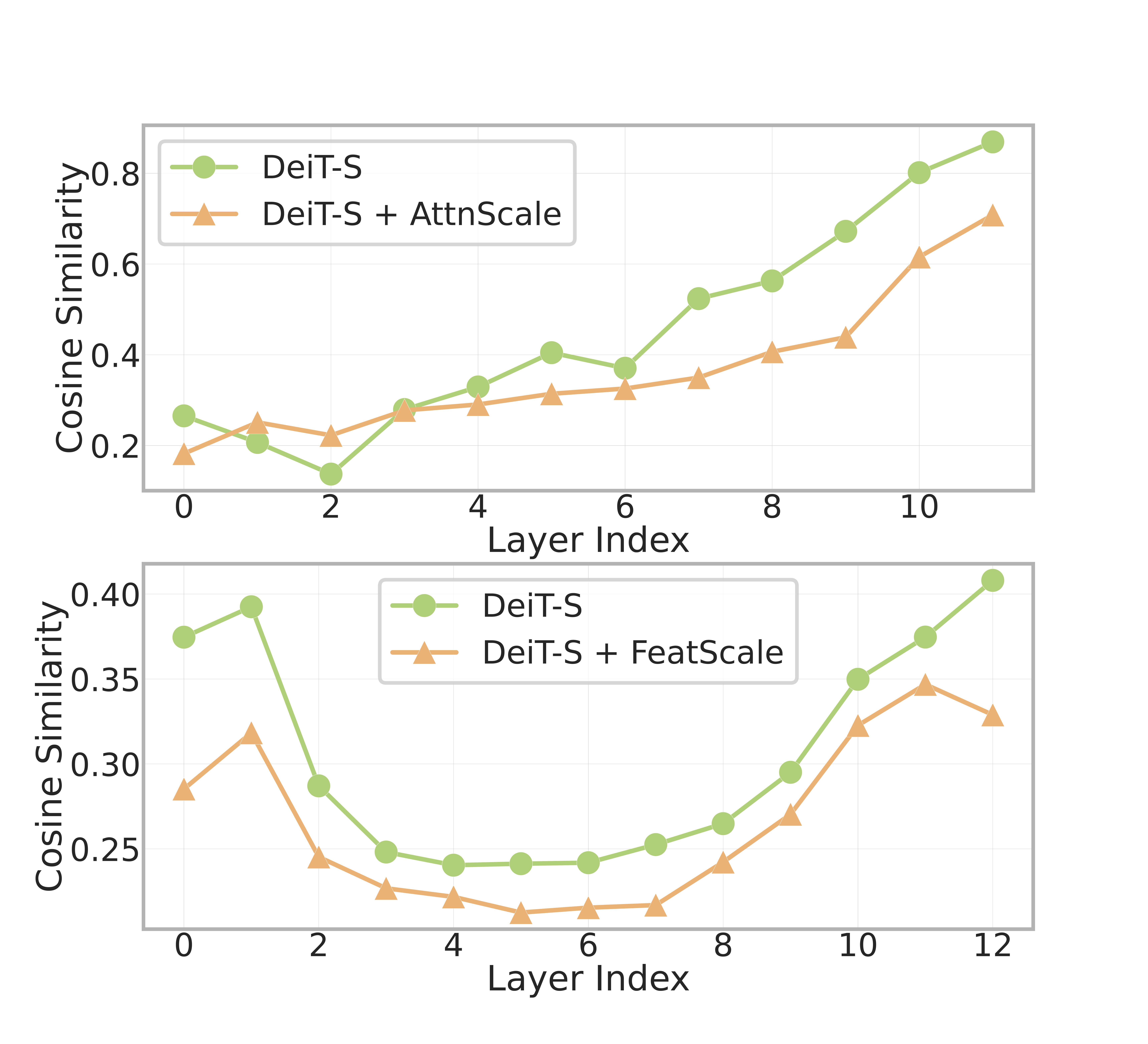}
\vspace{-2mm}
\caption{Visualize cosine similarity of attention and feature maps with/without our proposed methods on 12-layer DeiT. Refer to Appendix \ref{sec:vis_similarity} for details.}
\label{fig:sim_attn_12}
\label{fig:sim_feat_12}
\end{minipage}
\vspace{-4mm}
\end{figure}

% \begin{figure}[h!]
% \centering
% \begin{subfigure}
% \includegraphics[width=\textwidth]{images/similarity.pdf}
% \caption{Visualize the learned weights of DeiT-S + FeatScale. Each sub-plot depicts two groups of scaling weights of the same head for different layers. For left to right, top to bottom, six sub-figures correspond to 1 \textasciitilde~6-th head, respectively. Best view in color.}
% \label{fig:featscale_}
% \end{subfigure}
% \includegraphics[width=\textwidth]{images/hc_portion.pdf}
% \caption{Visualize the learned weights of DeiT-S + FeatScale. Each sub-plot depicts two groups of scaling weights of the same head for different layers. For left to right, top to bottom, six sub-figures correspond to 1 \textasciitilde~6-th head, respectively. Best view in color.}
% \label{fig:featscale_}
% \end{figure}

\end{document}